\newlist{questionenum}{enumerate}{1}
\setlist[questionenum]{label=\textbf{(Q\arabic*)}, ref=(Q\arabic*)}
\Crefname{question}{Question}{Questions}
\crefname{question}{question}{questions}
\title[]{Tighter Value-Function Approximations for POMDPs}
\author{Merlijn Krale}
\affiliation{
  \institution{Radboud University}
  \city{Nijmegen}
  \country{The Netherlands}}
\email{merlijn.krale@ru.nl}
\author{Wietze Koops}
\affiliation{
    \institution{Lund University, Sweden}
    \city{}
    \country{}}
\affiliation{
    \institution{University of Copenhagen, Denmark}
    \city{}
    \country{}}
\email{wietze.koops@cs.lth.se}
\author{Sebastian Junges}
\affiliation{
  \institution{Radboud University}
  \city{Nijmegen}
  \country{The Netherlands}}
\email{sebastian.junges@ru.nl}
\author{Thiago D. Simão}
\affiliation{
  \institution{Eindhoven University of Technology}
  \city{}
  \country{The Netherlands}}
\email{t.simao@tue.nl}
\author{Nils Jansen}
\affiliation{
    \institution{Ruhr-University Bochum, Germany}
    \city{}
    \country{}
    }
\affiliation{
  \institution{Radboud University}
  \city{Nijmegen}
  \country{The Netherlands}}
\email{n.jansen@rub.de}
\begin{abstract}
Solving partially observable Markov decision processes (POMDPs) typically requires reasoning about the values of exponentially many state beliefs. 
Towards practical performance, state-of-the-art solvers use value bounds to guide this reasoning.
However, sound upper value bounds are often computationally expensive to compute, and there is a tradeoff between the tightness of such bounds and their computational cost.
This paper introduces new and provably tighter upper value bounds than the commonly used \emph{fast informed bound}.
Our empirical evaluation shows that, despite their additional computational overhead, the new upper bounds accelerate state-of-the-art POMDP solvers on a wide range of benchmarks.
\end{abstract}
\keywords{POMDPs, Heuristic Search, Value Bounds, Planning}
\begin{document}

\pagestyle{fancy}
\fancyhead{}

\maketitle 

\section{Introduction}

Partially observable Markov decision processes (POMDPs) are a versatile modeling framework for stochastic environments where the decision maker (the agent) cannot fully observe the current state of its environment~\cite{DBLP:journals/ai/KaelblingLC98}.
Finding optimal policies for POMDPs is generally undecidable~\cite{DBLP:conf/aaai/MadaniHC99}.
Yet, in recent years, methods like POMCP~\cite{DBLP:conf/nips/SilverV10}, DESPOT~\cite{DBLP:journals/jair/YeSHL17}, and AdaOPS~\cite{DBLP:conf/nips/WuYZYLLH21} have been able to find policies for increasingly large POMDPs.

Although such methods often provide a (statistical) \emph{lower bound} on the value of the policy, they are typically unable to find an \emph{upper bound} on the optimal value. 
Such further certification of the quality of a policy may be essential for safety-critical problems.
For example, planning medical treatments~\cite{DBLP:journals/artmed/HauskrechtF00}, scheduling infrastructure maintenance~\cite{morato2022managing,morato2022optimal} or computing safe flight paths~\cite{temizer2010collision} require us not only to know how well our policy will perform, but also that we cannot (reasonably) do any better.

So-called $\epsilon$-optimal solvers such as \SARSOP~\cite{DBLP:conf/rss/KurniawatiHL08} and HSVI~\cite{DBLP:conf/uai/SmithS05} compute both a policy and an upper bound.
These algorithms make use of heuristic search to find good policies quickly.
However, they often struggle to find upper bounds that are reasonably tight, since this requires reasoning over \emph{all} possible policies.

Both HSVI and \SARSOP use the \emph{fast informed bound}, or \FIB~\cite{DBLP:journals/jair/Hauskrecht00}, to initialize their upper bound computations.
Intuitively, \FIB computes values in a simplified POMDP, where the agent fully observes the state of the environment with a delay of one time step. 
However, these bounds are often loose in practice, while tighter upper bounds could improve the performance of $\epsilon$-optimal solvers.

We contribute three different methods to obtain bounds that exhibit varying levels of tightness and computational overhead. 

We first introduce the \emph{tighter informed bound}~(\BIB) as an alternative for \FIB.
Intuitively, \BIB uses a delay of two time steps rather than one time step. \BIB can be computed using value iteration, as employed by \cite{bellmann1957,DBLP:books/wi/Puterman94}, on all \emph{one-step beliefs}, that is,  beliefs the agent can have one time step after knowing the state. 
These precomputations are more expensive than for \FIB, but allow to compute a bound for any belief at the same computational cost as \FIB.
However, we show that increasing the delay further would significantly increase these computational costs.

Closer inspection of \BIB shows that it expresses posterior beliefs of the agent as a convex combination of one-step beliefs. 
However, choosing \emph{different} combinations may further tighten the bound. 
The \emph{optimized tighter informed bound}~(\OBIB) uses the convex combinations that yield the tightest possible bound. 
However, finding this convex combination requires solving a linear program for each posterior belief in each iteration step, which is usually too expensive. 
Instead, the \emph{entropy-based tighter informed bound}~(\EBIB) heuristically chooses a single combination for each posterior belief by maximizing the weighted entropy of the chosen one-step beliefs. 
This combination is reused for each iteration, thus greatly reducing computational cost.

Empirically, \BIB and \EBIB provide better bounds than \FIB on a large range of benchmarks with reasonable computational cost. 
To test the practical relevance of our bounds, we adapt the offline state-of-the-art solver \SARSOP \cite{DBLP:conf/rss/KurniawatiHL08} to use our upper bounds as initialization.
With this alteration, \SARSOP finds tighter optimality bounds more quickly on a wide range of benchmarks, which means the additional computational overhead of our bounds is compensated by a speedup in convergence.
Moreover, this positive effect grows as the discount factor increases. 

\paragraph{Contributions.} To summarize, our \textbf{main contributions} are introducing three novel bounds for POMDPs, namely \BIB, \EBIB, and \OBIB.
These bounds both theoretically and empirically improve prior methods. 
Moreover, integrating these novel bounds with the state-of-the-art $\epsilon$-optimal solver SARSOP~\cite{DBLP:conf/rss/KurniawatiHL08} leads to significant speedups and smaller optimality gaps. 

\section{Problem Setting}
\label{sec:background}

In this section, we provide a formal definition of our problem setting in order to formalize our problem statement.
We first introduce some basic notation: $\distr{X}$ denotes the set of probability distributions over a finite set $X$.
Given a function $F\colon X \rightarrow \distr{Y}$ and elements $x \in X$, $y\in Y$, $F(\cdot \midd x)$ denotes the conditional probability distribution over $Y$ given $x$, $F(y \midd x)$ the probability of element $y$ given $x$, and $y \sim F(x)$ an element $y$ randomly sampled from $F(x)$.

\subsubsection*{POMDPs}
An (infinite-horizon, discounted) \emph{partially observable Markov decision process} (POMDP)~\cite{DBLP:journals/ai/KaelblingLC98,DBLP:books/sp/12/Spaan12} is defined as a tuple $\model=\tuple{\states,\actions,\transitions,\observations, \obsfun,\rewards,\discount}$, with $\tuple{\states,\actions,\transitions,\rewards, \discount}$ an  MDP~\cite{DBLP:books/wi/Puterman94} with a finite set of \emph{states}~$\states$, a finite set of \emph{actions}~$\actions$, a \emph{transition function}~$\transitions \colon \states \times \actions \rightarrow \distr{\states}$, a \emph{reward function}~$\rewards \colon \states \times \actions \rightarrow \mathbb{R}$, and a \emph{discount factor} $\discount \in (0,1)$.
Additionally, $\observations$ is a finite set of \emph{observations} and $\obsfun \colon \actions \times \states \rightarrow \distr{\observations}$ is the \emph{observation~function}.

A POMDP models the interaction between a stochastic environment and an agent. Let  $b_0 \in \distr{\states}$ be the fixed \emph{initial distribution} (aka \emph{initial belief}).  
The \emph{initial state} $s_0$ of the environment is sampled from $b_0$.
At each time step $t$, the agent picks an action $\act_t\in \actions$. 
As a result, the environment transitions to a new state $s_{t+1} \sim \transitions(\cdot \midd s_t, \act_t)$ and returns a reward $\reward_t = \rewards(s_t,\act_t)$.
However, unlike for MDPs, the agent does not observe the state $s_{t+1}$, but instead receives an observation $\obs_{t+1} \sim \obsfun(\cdot \midd \act_t, s_{t+1})$.
In general, agents make decisions based on their history $(b_0, \act_0, \obs_1, \dots, \act_t, \obs_{t+1})$.
As shown by \citet{astrom1965optimal}, this history can be summarized by a \emph{belief} $b_t \in \distr{\states}$.
Therefore, we can assume that the agent chooses actions according to a (deterministic) belief-based policy $\pi \colon \distr{\states} \rightarrow \actions$.
Given a policy $\pi$ and an initial belief $b$, we define the \emph{value} as the expected discounted return over an infinite horizon:
\[
    \E_\pi\left[\sum_{t=0}^{\infty} \discount^t r_t | s_0 \sim b\right].
\]
The agent aims to maximize the value for the initial belief $b_0$.

\subsubsection*{Probabilities}
We now introduce additional notation that will be used throughout this paper.
For any belief $b  \in \distr{\states}$, let $R(b,a) = \sum_{s \in S} b(s) R(s,a)$ be the expected reward of action $a$ in belief $b$ and let $T(s' \midd b,a) = \sum_{s \in S} b(s) T(s' \midd s,a)$ be the probability of transitioning to state $s'$ when taking action $a$ in belief $b$.

We define a shorthand for four probabilities.
Given a state $s$ and an action $\act$, the probability of transitioning to state $s'$ and observing $\obs$ is denoted by 
\[
\Pr(s', \obs \midd s, \act) = \obsfun(\obs \midd \act, s') \transitions(s' \midd s, \act),
\]
while the probability of observing $\obs$ is denoted by
\[
\Pr(\obs \midd s, \act) = \sum_{s' \in \states} \Pr(s', \obs \midd s, \act).
\]
Given a belief $b$ and action $\act$, we denote the probability of transitioning to $s'$ and observing $\obs$ by
\[ \Pr(s', \obs \midd b, \act) = \sum_{s \in \states} [ b(s) \Pr(s', \obs \midd s, \act) ],
\]
while the probability of observing~$\obs$ is given by
\[
\Pr(\obs \midd b, \act) = \sum_{s' \in \states} \Pr(s', \obs \midd b, \act).
\]

\subsubsection*{Beliefs}
We also define notation for specific beliefs.
For any $s \in \states$, let the \emph{unit belief}~$\bel_s$ be the belief such that $\bel_s(s)=1$ (and hence $\bel_s(s') = 0$ for $s' \neq s$). 
Let $\mathcal{B}_\states = \{\bel_s \mid s \in \states\}$ be the set of all unit beliefs. 
If $\Pr(\obs \midd b, \act) >0$,  $\bel_{b,a,o}$ is the belief after taking action $a$ and observing $o$ from belief $b$, i.e.:\footnote{
For conciseness, we assume beliefs $\bel_{b,a,o}$ with $\Pr(\obs \midd b, \act) = 0$ are arbitrarily defined, and that sums over observations consider only those observations that occur with non-zero probability.
}
\begin{equation} \label{eq:bbao}
    \bel_{b, \act, \obs}(s') = \Pr(s' \midd b, \act, \obs) =  \frac{\sum_{s\in \states} b(s) \Pr(s', \obs \midd s, \act)}{\Pr(\obs \midd b, \act)}.
\end{equation}
Further, we denote a \emph{one-step beliefs} as $\bel_{s,\act,\obs} = \bel_{\bel_s,\act,\obs}$, which denotes a belief reached from a unit belief $\bel_s$ (i.e., a belief where the agent knows the state) in a single time step after executing $a$ and observing $o$.
We define $\Bsao$ as the (finite) set containing all one-step beliefs and the initial belief $b_0$, i.e.,
\begin{equation}
\label{eq:Bsao}
    \Bsao = \{ \bel_{s,\act,\obs} \mid s\in\states, \act \in \actions, \obs \in \observations, \Pr(\obs \midd s, \act) > 0 \} \cup \{b_0\}.
\end{equation}
See \cref{ex:guessing} for a concrete example of sets $\Bs$ and $\Bsao$.
We note that every reachable belief (except possibly $b_0$) can be written as a convex combination of one-step beliefs.
Hence, all reachable beliefs can be written as a convex combination of beliefs in $\Bsao$.

\subsubsection*{$Q$-values}
Lastly, to reason about the decision-making process of an agent, we define the $Q$-value function $Q \colon \distr{\states} \times \actions \rightarrow \mathbb{R}$ as the value for a given belief-action pair. 
Let $\setQ$ be the set of all functions $Q \colon \distr{\states} \times \actions \rightarrow \mathbb{R}$.
The $Q$-value function corresponding to an optimal policy can be given as the (unique) fixed point of the \emph{Bellman operator} $H_\POMDP \colon \setQ \rightarrow \setQ$~\cite{DBLP:journals/ior/Sondik78}:
\begin{equation}
    \label{eq:QPOMDP}
    \! H_{\POMDP}Q(b,a) = R(b,a) + \gamma \!\!\sum_{o \in \observations} \! \Pr(o \midd b,a) \! \max_{\act' \in \actions} \!  Q(\bel_{b,a,o},a').
\end{equation}

With our problem setting defined, we formalize our problem statement as follows:
\begin{problemstatement} \,\!
    Find tractable methods of computing tight overapproximations (or \emph{bounds}) of the $Q$-value function for POMDPs to improve the performance of $\epsilon$-optimal solvers.
\end{problemstatement}

\section{Prior Methods}

In this section, we describe the baseline methods of finding upper bounds for POMDPs using the notation introduced in \cref{sec:background}. 
We discuss the \emph{fast informed bound}~(FIB)~\cite{DBLP:journals/jair/Hauskrecht00}, but define it using $Q$-functions.
Then, we recall \emph{point set bounds}~\cite{DBLP:conf/ijcai/PineauGT03} and show how FIB can be interpreted as a point set bound.
Finally, we briefly review how upper bounds are used in state-of-the-art solver \SARSOP.

First, we introduce the \custom POMDP (\cref{fig:NoOpEnv}), which we will use as a running example to illustrate the various upper bounds.

\begin{figure}[tb]
    \centering
    \includegraphics[width=0.5\columnwidth]{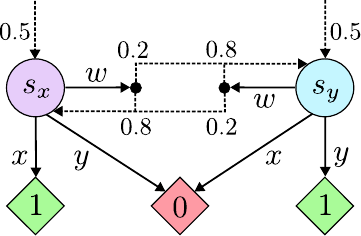}
    \caption{Visualisation of the \custom POMDP.}
    \label{fig:NoOpEnv}
    \Description{TODO}
\end{figure}

\begin{example}[\custom]
\label{ex:guessing}
In the \custom POMDP (\cref{fig:NoOpEnv}), the agent starts in an initial belief $b_0$, with $b_0(s_x) = b_0(s_y) = 0.5$.
From here, the agent can \emph{guess} in which state it is by taking actions $x$ or $y$, which both lead to a terminal state $s_\text{sink}$ (not depicted), which yields a reward of $1$ if the state is guessed correctly and 0 otherwise.
Alternatively, the agent can execute the \emph{waiting} action $w$, which has a probability of $0.2$ to transition to the other state and $0.8$ to stay in the same state.
All state-action pairs yield the same observation (denoted $\bot$), and we assume a discount factor $\gamma \in [0.9, 1)$.
Since taking the waiting action $w$ does not change the agent's belief and yields no reward, it is intuitively easy to see that an optimal policy is to pick action $x$ (or action $y$), yielding an expected reward of $0.5$.
\end{example}

In \custom, the sets of unit- and one-step beliefs are written as:
\begin{equation*}
\begin{aligned}
    \Bs = \{ \bel_{s_x} \ \bel_{s_y} \ & \bel_{s_\text{sink}} \} \\
    \Bsao = \{ b_0 \} \cup \{ & \bel_{s_x, w, \bot} \ \bel_{s_y, w, \bot} \ \bel_{s_x, x, \bot} \\
    & \bel_{s_y, x, \bot} \ \bel_{s_x, y, \bot} \ \bel_{s_y, y, \bot} \ \bel_{s_\text{sink}, w, \bot}\}.
\end{aligned}
\end{equation*}
We note that many beliefs in $\Bsao$ describe the same state distribution: in fact, the last 5 elements are all equal to $\bel_{s_\text{sink}}$.
However, throughout this paper, we will regard such beliefs as distinct members of this set for notational simplicity.

\subsection{Fast Informed Bound (\FIB)}
\label{sec:priorbounds}

A common method of over-approximating the value of a POMDP is to (partially) ignore the effect of partial observability.
The most straightforward example of this is the \emph{QMDP bound}~\cite{DBLP:conf/icml/LittmanCK95}, which intuitively corresponds to the assumption that agents can fully observe their state in the future.
We can define this as follows:
\begin{definition}
    $\qmdp$ is the fixed point of the operator $H_\text{QMDP}$:
    \begin{equation}
    \begin{aligned}
        H_{\text{MDP}} Q(b,a) = R + \gamma\sum_{s'\in\mathcal S}[\Pr(s'|b,a)\max_{a'\in\mathcal A}Q(b_{s'},a')].
    \end{aligned}
    \end{equation}
\end{definition}
\noindent To further tighten this bound, \textbf{the \emph{fast informed bound} (\FIB)~\cite{DBLP:journals/jair/Hauskrecht00} assumes an agent fully observes the current and future states with a delay of 1 time step}. 
More precisely, we define $\qfib$, the $Q$-value function for this bound, as follows:
\begin{definition}
    $\qfib$ is \emph{the} fixed point of the operator~$H_\FIB$:
    \begin{equation} \label{eq:FIB}
    \begin{aligned}
        H_{\FIB}Q&(b,a) =  R(b,a) \\ &+ \gamma \cdot \sum_{o \in \observations} ~\max_{a' \in \actions} ~\sum_{s' \in \states}  \left[ \Pr(\obs, s' \midd b,a) Q(\bel_{s'},\act') \right].
    \end{aligned}
    \end{equation}
\end{definition}
\noindent \cref{ap:proofs} provides a proof that this fixed point exists and is unique (based on the original proof from \citet{DBLP:journals/jair/Hauskrecht00}).

In \cref{eq:FIB}, the next action $a'$ is picked independently of the next state $s'$ but must depend only on the current belief~$b$ and received observation $o$.
However, for all future time steps, we use $Q$-values computed for the unit belief $\bel_{s'}$, i.e., as if~$s'$ is revealed.
Thus, the formula matches the intuitive description of full observability delayed by one time step.
In contrast to \cref{eq:QPOMDP}, $H_\FIB$ depends only on the $Q$-values of the (finite) set of beliefs $\bel_s \in \Bs$.
Thus, the value of $\qfib$ for \emph{any} belief $b$ can be computed efficiently by (approximately) computing the fixed point for beliefs in $\Bs$. 
Both the QMDP bound and \FIB are commonly used in POMDP literature due to their tractability but tend to be loose.

\subsubsection*{Running example.}
Recall the \custom POMDP.
Under the QMDP assumption, taking action $w$ would fully reveal the agent's state.
In that case, an agent can always guess correctly after taking action $w$, which yields an expected value of $\gamma$.
Similarly, under the \FIB assumption, taking action $w$ would fully reveal the agent's previous state.
The probability of still being in this state after this action is $0.8$.
Thus, taking action $w$ and guessing the revealed initial state yields an expected return of $0.8\gamma$.
Both are strict overapproximations of the optimal value $0.5$ of the POMDP, and both incorrectly give higher $Q$-values for action $w$ than for $x$ or $y$.

\subsection{Point Set Bounds}
\label{sec:pointsets}

To compute tighter approximations than \FIB, we consider a general value bound that uses \emph{point sets}~\cite{DBLP:conf/ijcai/PineauGT03}: sets of beliefs with known upper bounds.
To make the connection with our own method more clear, we define them using our own (non-standard) notation.
We start by defining a \emph{weight function} as follows:
\begin{definition} \label{def:weightf}
    Let $b \in \distr{\states}$ be a belief and let $\mathcal{B} \subseteq \distr{\states}$ be a point set.
    A \emph{weight function} $w \colon \mathcal{B} \rightarrow \mathbb{R}_{\geq 0}$ is any function satisfying $b(s) = \sum_{b' \in \mathcal{B}} w(b') b'(s)$ for all $s \in \states$.
    $\weightset_{\mathcal{B}, b}$ denotes the set of all possible weight functions for belief $b$ given point set~$\mathcal{B}$.
    \footnote{
    $\weightset_{\mathcal{B}, b}$ may be empty if $b$ does not lie in the convex hull of $\mathcal{B}$.
    }
\end{definition}
\noindent We can use weight functions to compute upper bounds as follows:
\begin{theorem}[Point set bound]
\label{thm:Pointset}
    Given a belief $b$, a \emph{point set} $\mathcal{B}$, and a function   
    $Q \colon \mathcal{B} \times \actions \rightarrow \mathbb{R}$ which over-approximates the $\qpomdp$-values of all beliefs-action pairs $(b',a) \in \mathcal{B} \times \actions$.
    Then, any weight function $w \in \weightset_{\mathcal{B}, b}$ gives an upper bound on the value of $b$:
    \begin{equation} \label{eq:PSA}
        \qpomdp(b,a) \leq \Qinner{}{w}{t}{a} := \sum_{b' \in \mathcal{B}} w(b') Q(b',a).
    \end{equation}
\end{theorem}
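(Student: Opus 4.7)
The plan is to reduce the inequality to two ingredients: the weights in \cref{def:weightf} constitute a genuine convex combination, and $\qpomdp(\cdot,a)$ is convex in the belief argument. With both in hand, the bound follows in one line by applying convexity and then the componentwise upper-bound hypothesis.

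First I would verify that any $w\in\weightset_{\mathcal{B},b}$ satisfies $\sum_{b'\in\mathcal{B}} w(b')=1$. This is the ``conservation of mass'' observation: sum the defining identity $b(s)=\sum_{b'\in\mathcal{B}} w(b')b'(s)$ over $s\in\states$, interchange summations, and use that every belief in $\distr{\states}$ sums to $1$. Combined with nonnegativity from \cref{def:weightf}, this shows $w$ expresses $b$ as a convex combination of points in $\mathcal{B}$.

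Next I would invoke convexity of $\qpomdp(\cdot,a)$ on $\distr{\states}$ for fixed $a$. This is a classical property going back to \citet{DBLP:journals/ior/Sondik78}: since $V^{\ast}(b)=\max_{a}\qpomdp(b,a)$ admits a piecewise-linear-convex $\alpha$-vector representation, each one-step lookahead
\[
\qpomdp(b,a)=R(b,a)+\gamma\sum_{o\in\observations}\Pr(o\midd b,a)\,V^{\ast}(\bel_{b,a,o})
\]
can be rewritten using $\Pr(o\midd b,a)\bel_{b,a,o}(s')=\sum_{s}b(s)\Pr(s',o\midd s,a)$, which makes each summand a pointwise maximum of linear functions of $b$, hence convex. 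Summing over $o$ and adding the linear term $R(b,a)$ preserves convexity. In the plan I would cite this standard fact rather than rederive it in detail, unless the referee prefers a self-contained argument.

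Combining the two steps, convexity gives $\qpomdp(b,a)\le\sum_{b'\in\mathcal{B}}w(b')\,\qpomdp(b',a)$, and the hypothesis $\qpomdp(b',a)\le Q(b',a)$ for $b'\in\mathcal{B}$ upgrades this to the desired $\sum_{b'\in\mathcal{B}}w(b')\,Q(b',a)$. The main obstacle is purely expository: ensuring the convexity of $\qpomdp(\cdot,a)$ is either cleanly cited or briefly justified, because \cref{def:weightf} allows arbitrary elements of $\mathcal{B}$ (not merely unit beliefs), so one cannot lean on the multilinearity argument used in the \FIB derivation. Everything else is bookkeeping.
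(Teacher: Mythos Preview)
Your proposal is correct and follows essentially the same approach as the paper, which simply states that the theorem ``follows directly from the convexity of the value function for POMDPs~\cite{DBLP:journals/ior/Sondik78}.'' You have merely made explicit the two steps (weights sum to one, then apply convexity and the over-approximation hypothesis) that the paper leaves implicit; the unit-mass observation you spell out is in fact the content of \cref{lem:weightfun} in the appendix.
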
  
This theorem follows directly from the convexity of the value function for POMDPs~\cite{DBLP:journals/ior/Sondik78}.
To understand how \cref{thm:Pointset} is used implicitly by \FIB,
consider using point set $\mathcal{B}_\states$ and the weight functions $w_{b,a,o}(\bel_{s'}) = \frac{\Pr(\obs, s' \midd b,a)}{\Pr(\obs \midd b,a)} $.
In that case, we find:
\begin{equation}
    \! H_{\FIB}Q  (b,a) = R(b,a) \!+\! \gamma \!\!\sum_{\obs \in \observations} \max_{\act' \in \actions}\!\left[\Pr(\obs \midd b,a) \Qinner{\FIB}{w_{b,a,o}}{t-1}{a'}\right],\!
\end{equation}
with $\Qinnershort{\FIB}$ the weighted sum over values of $\qfib$ as defined in \cref{eq:PSA}.

Given a point set $\mathcal{B}$ and belief $b$, the \emph{tightest} upper bound we can compute using \cref{thm:Pointset} is found using the linear program~(LP): $\min_{w \in \weightset_{\mathcal{B},b}} \Qinner{}{w}{t}{a}.$
However, POMDP solvers often need to compute upper bounds for many beliefs using large point sets, in which case this method is computationally expensive.
Thus, instead of solving these LPs exactly, solvers may approximate their outcome instead.
One such approximation method is the \emph{sawtooth bound}~\cite{DBLP:journals/jair/Hauskrecht00}.
This bound is based on the observation that, for point sets of the form $\{b'\} \cup \Bs$, an upper bound can be computed in only $\bigO(|\states|)$ time.
Thus, if $\Bs \subseteq \mathcal{B}$, we can compute such a bound for all beliefs $b' \in \mathcal{B}$ and take their minimum.
This takes only $\bigO(|\states| |\mathcal{B}|)$ time, but still yields tight bounds in practice.
We refer to \citet{Kochenderfer2022} for a detailed implementation of the sawtooth bound.

\subsection{Using Bounds in Point-Based Solvers}
\label{sec:SARSOP}

Point set bounds are an important component of \emph{point-based solvers}, a type of algorithm that uses a finite set of beliefs to compute both upper- and lower bounds on the value of a POMDP.
Early methods use predefined sets of beliefs to cover the entire belief space evenly~\cite{DBLP:journals/ior/Lovejoy91,DBLP:conf/ijcai/ZhouH01,DBLP:conf/icml/Bonet02}, but these methods typically scale poorly to large POMDPs.
Instead, state-of-the-art algorithms such as HSVI~\cite{DBLP:conf/uai/SmithS04, DBLP:conf/uai/SmithS05} and \SARSOP~\cite{DBLP:conf/rss/KurniawatiHL08} use \emph{heuristic search} to find beliefs that closely resemble those encountered by an optimal policy, which is sufficient for finding $\epsilon$-optimal solutions~\cite{DBLP:conf/rss/KurniawatiHL08}.

\SARSOP~\cite{DBLP:conf/rss/KurniawatiHL08} is a state-of-the-art point-based solver that uses a variant of value iteration~\cite{DBLP:journals/ior/SmallwoodS73} to compute lower bounds and the sawtooth bound for upper bounds.
The latter requires precomputing value bounds for the set of unit beliefs $\Bs$, which is traditionally done using \FIB. %
The next section proposes methods of computing tighter bounds for this set in tractable time.

\section{Introducing tighter bounds}
\label{sec:heuristics}

 In this section, we introduce three novel bounds on the value function $\qpomdp$, which are tighter than \FIB.

\subsection{Tighter Informed Bound (\BIB)}

\begin{algorithm}[tb]
\caption{\textsc{Precomputations for \BIB, \OBIB and \EBIB}}
\label{alg:BIB}
\begin{algorithmic}
\State Compute all (unique) beliefs in $\Bsao$ \Comment{\cref{eq:Bsao}}
\For{$b,a  \in \Bsao \times \actions$}
    \State $Q'(b,\act) \gets Q_\FIB(b,\act)$ \Comment{\cref{eq:FIB}}
    \State $\forall \obs \in \observations$, precompute $\went_{b,a,o}$ \Comment{\cref{eq:w_ent}, only for \EBIB}
\EndFor
\For{$h$ iterations}
    \State $Q \leftarrow Q'$
    \For{$b \in \Bsao, \act \in \actions$}
        \State $Q'(b,\act) \leftarrow HQ(b,\act)$ \Comment{\cref{eq:BIB}, \cref{eq:OBIB} or \cref{eq:EBIB}}
    \EndFor
    \State \textbf{if } $\frac{\gamma}{1-\gamma} \left\|\frac{Q' - Q}{Q} \right\|_\infty < \epsilon$ \textbf{ then break} \Comment{Required precision reached}
\EndFor
\State \textbf{return} $Q'$
\end{algorithmic}
\end{algorithm}

Firstly, we propose an extension of \FIB that extends the delay at which the full state is observed.
More precisely, we define the \textbf{\emph{tighter informed bound} (\BIB), which assumes an agent fully observes the current and future states with a delay of 2 time steps.}
We define the corresponding $Q$-value function, $\qbib$, as:
\begin{definition}
    $\qbib$ is \emph{the} fixed point of the operator~$H_\BIB$:
    \begin{equation} \label{eq:BIB}
    \begin{aligned}
        H_{\BIB}&Q(b, \act) = R(b,\act) \\ &+ \gamma \! \sum_{\obs \in \observations}  \max\limits_{\act' \in \actions} \sum_{s \in \states} \! \left[ b(s) \Pr(\obs \midd s, \act) Q(\bel_{s,\act,\obs}, \act') \right].
    \end{aligned}
    \end{equation}  
\end{definition}
\noindent We provide a proof that this unique fixed point exists in \cref{ap:proofs}.\footnote{
This follows from showing $H_\BIB$ is a contraction mapping with Lipschitz constant $\gamma < 1$ and using Banach's fixed point theorem~\cite{banach1922operations}.}

Recall from \cref{eq:FIB} that for \FIB, we can compute $Q$-values for any belief using only $Q$-values of unit beliefs $\bel_{s'}$. 
This intuitively corresponds to state~$s'$ being observed with a delay of 1 time step.
In contrast, in \cref{eq:BIB}, we use $Q$-values of one-step beliefs $\bel_{s,a,o}$, which corresponds to state $s$ being revealed with a delay of $2$ time steps and thus aligns with the intuitive definition of $\BIB$.

Next, we highlight some important properties of $\qbib$:
\begin{theorem}
\label{thm:Bib_props}
    $\qbib$ has the following properties:
    \begin{enumerate}
        \item \textbf{Soundness:} $\forall b \in \distr{\states}, a \in \actions \colon \qbib(b,a) \geq \qpomdp(b,a)$;
        \item \textbf{Tightness:} $\forall b \in \distr{\states}, a \in \actions \colon \qfib(b,a) \geq$ \qbib(b,a).
    \end{enumerate}
\end{theorem}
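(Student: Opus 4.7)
Both parts follow a common template. Recall from the footnote that $H_\BIB$ is a $\gamma$-contraction in the supremum norm, and the analogous argument shows $H_\FIB$ is too; both operators are clearly monotone in the sense that $Q \leq Q'$ pointwise implies $HQ \leq HQ'$ pointwise. A standard Banach-style consequence is that a pointwise inequality $Q \leq HQ$ propagates through iteration to $Q \leq Q^*$, and likewise $Q \geq HQ$ yields $Q \geq Q^*$, where $Q^*$ denotes the relevant fixed point. Each claim therefore reduces to establishing a single one-step inequality.

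\textbf{Soundness.} I would prove $Q_\POMDP \leq H_\BIB Q_\POMDP$ pointwise. Starting from $Q_\POMDP = H_\POMDP Q_\POMDP$ (see~\cref{eq:QPOMDP}), the only nontrivial ingredient is the posterior value $Q_\POMDP(\bel_{b,a,o},a')$. A direct calculation from~\cref{eq:bbao} shows that
\[
\bel_{b,a,o}(s') \;=\; \sum_{s \in \states} w(\bel_{s,a,o})\, \bel_{s,a,o}(s'), \qquad w(\bel_{s,a,o}) := \frac{b(s)\Pr(o \mid s,a)}{\Pr(o \mid b,a)},
\]
so that $w$ is a valid weight function (in the sense of~\cref{def:weightf}) on the point set $\{\bel_{s,a,o}\}_{s \in \states} \subseteq \Bsao$ for the belief $\bel_{b,a,o}$. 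Applying~\cref{thm:Pointset} with $Q = Q_\POMDP$ (which trivially over-approximates itself) gives $Q_\POMDP(\bel_{b,a,o},a') \leq \sum_s w(\bel_{s,a,o})\, Q_\POMDP(\bel_{s,a,o},a')$. Multiplying by $\Pr(o \mid b,a)$, noting that $b(s)\Pr(o \mid s,a) = \Pr(o \mid b,a)\, w(\bel_{s,a,o})$, and substituting into~\cref{eq:QPOMDP} yields exactly $H_\BIB Q_\POMDP(b,a)$ as in~\cref{eq:BIB}, proving the desired one-step inequality.

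\textbf{Tightness.} I would prove $H_\BIB Q_\FIB \leq Q_\FIB$, viewing $Q_\FIB$ as extended to arbitrary beliefs via $Q_\FIB = H_\FIB Q_\FIB$ (consistent with its fixed-point property on $\Bs$). For each observation $o$, decompose
\[
\Pr(o,s' \mid b,a) \;=\; \sum_{s \in \states} b(s)\,\Pr(o \mid s,a)\, \bel_{s,a,o}(s'),
\]
so that the inner sum in $H_\FIB Q_\FIB(b,a)$ becomes $\sum_s b(s)\Pr(o \mid s,a) \sum_{s'} \bel_{s,a,o}(s')\, Q_\FIB(\bel_{s'},a')$. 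Comparing term by term with~\cref{eq:BIB}, it suffices to show that for each $s,a,o,a'$,
\[
Q_\FIB(\bel_{s,a,o}, a') \;\leq\; \sum_{s' \in \states} \bel_{s,a,o}(s')\, Q_\FIB(\bel_{s'}, a').
\]
Unfolding both sides via $Q_\FIB = H_\FIB Q_\FIB$, this reduces to the elementary convexity inequality $\max_{a''} \sum_{s'} \bel_{s,a,o}(s')\, X(s',a'') \leq \sum_{s'} \bel_{s,a,o}(s')\, \max_{a''} X(s',a'')$, i.e.\ the maximum of a convex combination is dominated by the convex combination of maxima, applied with $X(s',a'') := \sum_{o',s''}\Pr(o',s'' \mid s',a'')Q_\FIB(\bel_{s''},a'')$ (absorbed into the Bellman unfolding).

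\textbf{Main obstacle.} The conceptual content of soundness is captured entirely by recognising the right weight function; once identified, it is a one-line application of~\cref{thm:Pointset}. The tightness half requires more care: one must carefully rewrite $\Pr(o,s'\mid b,a)$, swap sums, and isolate the inner $\max_{a'}$ so that convexity of max can be invoked at the level of one-step beliefs. Getting the bookkeeping of the three nested sums (over $o$, $s$, and $s'$) right, while ensuring that the $\max_{a'}$ and the sum over states do not commute in the wrong direction, is the one place where the algebra has to be written out carefully.
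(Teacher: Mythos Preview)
Your proposal is correct and follows the same high-level template as the paper: establish a one-step operator inequality and then propagate it via monotonicity and contraction (the paper packages this propagation as \cref{lem:ineq}). Two remarks on the details.

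For soundness, you are actually more direct than the paper. The paper routes the argument through \OBIB, proving $\qpomdp \leq \qobib$ and $\qobib \leq \qbib$ separately; your observation that the specific weight $w(\bel_{s,a,o}) = b(s)\Pr(o\mid s,a)/\Pr(o\mid b,a)$ already witnesses $H_\POMDP \qpomdp \leq H_\BIB \qpomdp$ gets the same conclusion in one step without introducing \OBIB.

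For tightness, the two arguments coincide in content. The paper proves as a standalone lemma that $b \mapsto \qfib(b,a)$ is convex (\cref{lem:fibconvex}) and then invokes convexity to obtain $\qfib(\bel_{s,a,o},a') \leq \sum_{s'} \bel_{s,a,o}(s')\,\qfib(\bel_{s'},a')$. Your single unfolding of $\qfib = H_\FIB\qfib$ is precisely the proof of that convexity lemma, inlined. One bookkeeping slip to fix when you write it out: the inequality must be applied \emph{per observation} $o'$, with $X_{o'}(s',a'') = \sum_{s''}\Pr(o',s''\mid s',a')\,\qfib(\bel_{s''},a'')$; in your stated $X$ the sum over $o'$ is on the wrong side of the $\max_{a''}$, and the conditioning action should be $a'$, not $a''$. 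You already flagged the bookkeeping as the hazard, and this is exactly where it bites.
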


Full proofs of \cref{thm:Bib_props} are provided in \cref{ap:fib_proofs}.
Intuitively, we recall that \BIB and \FIB correspond with the agent observing the state with a delay, which gives them additional information.
Since this information can only help the agent, $\qbib$ is a sound upper bound (1).
Moreover, since our model is Markovian and the delay of \FIB is lower than that of \BIB, the additional information of \FIB is at least as useful.
Thus, for any belief-action pair, $\qbib$ is never higher than $\qfib$ (2).

\subsubsection*{Running example.}
To provide some intuition on the tightness of \BIB, we recall the \custom POMDP.
Under the \BIB assumption, taking action $w$ twice lets an agent observe its initial state.
The probability of still being in this state after these actions is $0.8^2+0.2^2=0.68$.
Thus, taking action $w$ twice and guessing the revealed initial state yields an expected return of $0.68\gamma^2$.
This is a strict overapproximation of the optimal value of the POMDP, which is $0.5$, but significantly tighter than the bound of $0.8\gamma$ found by \FIB.

\subsubsection*{Complexity analysis.}
$\qbib$ can be computed up to an arbitrary precision using value iteration, as shown in \cref{alg:BIB}.
\footnote{When performing $h$ iterations, the computed bound for any belief-action pair is at most $\max_{(s,a) \in \states \times \actions} \frac{1}{1-\gamma} R(s,a)$ away from the fixed point.
For an explanation of the precision parameter, see \cref{ap:proofs}.
}
\cref{table:complexity} shows the computational complexity of computing $\qbib$ and $\qfib$ using such methods.
The computational costs for a single Bellman operation are equal for \FIB and \BIB, which also means the online computational costs are the same.
Precomputations for \BIB are a factor $\bigO(\nicefrac{|\Bsao|}{|\states|}) \in \bigO(|\actions||\observations|)$ more expensive than for \FIB.
The empirical evaluation shows that this is often manageable in practice~(\cref{sec:experiments}).

\subsubsection*{Further increasing delays.}
One method of computing even tighter bounds is to consider even longer observation delays.
However, increasing the observation delay also means we need to consider a larger set of beliefs.
Thus, as shown in \cref{ap:3stepdelay}, a delay of 3 time steps already yields a computational complexity of $\bigO(|\states|^2|\actions|^5|\observations|^4h)$, which is a factor $\bigO(|\actions|^2|\observations|^2)$ larger than for \BIB.
It may be possible to efficiently compute such bounds regardless, but we will not consider this line of research here.
Instead, we focus on methods that can tighten our bounds without further increasing the delay.

\begin{table}[tb]
\caption{
Computational complexities of different bounds, assuming precomputations are performed using \cref{alg:BIB} with $h$ iterations.
$L$ denotes the computational complexity of solving an LP with $|\Bsao|$ variables and $|\states|$ constraints.
}
\label{table:complexity}
\centering
\begin{tabular}{@{}lcl@{}}
\toprule
\textbf{Bound} & Bellman Operation & Precomputations\\
 \cmidrule{1-1} \cmidrule(l){2-2} \cmidrule(l){3-3} 
\FIB  & $\bigO(|\states| |\actions| |\observations|)$ &  $\bigO(|\states|^2|\actions|^2|\observations| h)$ \\
\BIB  & $\bigO(|\states||\actions||\observations|)$ & $\bigO(|\Bsao| |\states| |\actions|^2|\observations| h)$ \\
\OBIB  & $\bigO(|\actions||\observations|L)$ & $\bigO(|\Bsao||\actions|^2|\observations|Lh)$ \\
\EBIB  & $\bigO(|\observations|L)$ & $\bigO(|\Bsao||\actions||\observations|L + |\Bsao|^2|\actions|^2|\observations|h)$ \\
\bottomrule
\end{tabular}

\end{table}

\subsection{Optimized Tighter Informed Bound (OTIB)}

Like for \FIB, we notice \BIB can be rewritten using \cref{thm:Pointset} with point set $\Bsao$, as follows:
\begin{equation*}
    H_{\BIB}Q(b, \act)  = R(b,\act) + \gamma \sum_{\obs \in \observations}  \max\limits_{\act' \in \actions} \left[ \Pr(\obs \midd b, \act)\Qinner{\BIB}{w_{b,\act,\obs}}{t{-}1}{\act}\right],
\end{equation*}
where we use the following weight function:
\[
w_{b,\act,\obs}(\bel_{s,\act,\obs}) = \frac{b(s) \Pr( \obs \midd s, \act)}{\Pr(\obs \midd b, \act)}.
\]
In contrast to \FIB, however, these weights are not necessarily unique, and \cref{thm:Pointset} tells us \emph{any} weight that represents our belief gives a viable upper bound.
Thus, we define the
\textbf{\emph{optimized tighter informed bound} (\OBIB), which assumes the value for future beliefs is equal to the minimal point set bound~(\cref{thm:Pointset}) using point set $\Bsao$.}
We define the corresponding $Q$-value function as follows:
\begin{definition} \label{def:wbib}
    Write $\weightset_{b,\act,\obs} = \weightset_{\Bsao, \bel_{b,\act,\obs}}$. 
    Then, $\qobib$ is \emph{the} fixed point of the operator $H_\OBIB$:
    \begin{equation} \label{eq:OBIB}
    \begin{aligned}
        H_\OBIB &Q(b,\act)  = R(b,\act) \\ &+ \gamma \!\sum_{\obs \in \observations} \max_{\act' \in \actions} [\Pr(\obs \midd b, \act) \!\! \min_{w \in \weightset_{b,a,o}} \!\!\Qinner{\OBIB}{w}{t{-}1}{\act'}].
    \end{aligned}
    \end{equation}
\end{definition}
\noindent
We note that $w_{b,\act,\obs} \in \weightset_{b,\act,\obs}$, which means the minimization in \cref{eq:OBIB} always has a feasible solution.
A full proof of the existence and uniqueness of $\qobib$ is provided in \cref{ap:proofs}.
We highlight a number of properties of $\qobib$:

\begin{theorem}
    $\qobib$ has the following properties:
    \begin{enumerate}
        \item \textbf{Soundness:} $\forall b \nsp\in\nsp\distr{\states}, a \nsp\in\nsp \actions \colon \qobib(b,a) \nsp\geq\nsp \qpomdp(b,a)$;
        \item \textbf{Tightness:} $\forall b \in \distr{\states}, a \in \actions \colon \qbib(b,a) \geq \qobib(b,a) $.
    \end{enumerate}
\end{theorem}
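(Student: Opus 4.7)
The plan is to mirror the Banach-style argument used for $H_\BIB$ in \cref{ap:proofs}: establish a one-step operator inequality and lift it to the fixed points via monotonicity and the contraction property of $H_\OBIB$.

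For \textbf{soundness}, I would first prove the invariance that $Q \geq \qpomdp$ on $\Bsao \times \actions$ implies $H_\OBIB Q \geq \qpomdp$ everywhere. Fix $(b,a)$, $o$, $a'$; for any $w \in \weightset_{b,a,o}$, non-negativity of $w$ together with $Q \geq \qpomdp$ on $\Bsao$ gives
\begin{equation*}
    \sum_{b' \in \Bsao} w(b') Q(b', a') \geq \sum_{b' \in \Bsao} w(b') \qpomdp(b', a') \geq \qpomdp(\bel_{b,a,o}, a'),
\end{equation*}
where the last inequality is \cref{thm:Pointset} applied to $\qpomdp$. The right-hand side is independent of $w$, so the inequality survives $\min_w$; then $\max_{a'}$ and the $\Pr(o \mid b,a)$-weighted sum over $o$ reconstitute $H_{\POMDP} \qpomdp(b,a) = \qpomdp(b,a)$. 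Starting from any $Q_0 \geq \qpomdp$ and iterating $H_\OBIB$, induction gives $H_\OBIB^n Q_0 \geq \qpomdp$ for every $n$, and the iterates converge to $\qobib$ by the contraction property, so $\qobib \geq \qpomdp$ everywhere.

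For \textbf{tightness}, the key observation is that the weight function implicitly driving $H_\BIB$ is one particular feasible choice in the LP defining $H_\OBIB$. Set $w_{b,a,o}(\bel_{s,a,o}) = b(s)\Pr(o \mid s,a) / \Pr(o \mid b,a)$; a short Bayes calculation shows $\sum_s w_{b,a,o}(\bel_{s,a,o})\,\bel_{s,a,o}(s') = \bel_{b,a,o}(s')$, so $w_{b,a,o} \in \weightset_{b,a,o}$. Hence
\begin{equation*}
    \min_{w \in \weightset_{b,a,o}} \sum_{b' \in \Bsao} w(b') Q(b', a') \; \leq \; \sum_{b' \in \Bsao} w_{b,a,o}(b') Q(b', a'),
\end{equation*}
and after pulling the common factor $\Pr(o \mid b,a)$ back through the outer sums in \cref{eq:OBIB} and \cref{eq:BIB}, each summand of $H_\OBIB Q(b,a)$ is dominated by the corresponding summand of $H_\BIB Q(b,a)$, giving $H_\OBIB Q \leq H_\BIB Q$ pointwise for every $Q$. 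To transfer this to the fixed points I would check monotonicity of $H_\OBIB$: if $Q_1 \leq Q_2$ and $w_i$ attain the respective minima, then $\sum_{b'} w_1(b') Q_1(b',a') \leq \sum_{b'} w_2(b') Q_1(b',a') \leq \sum_{b'} w_2(b') Q_2(b',a')$, so the minima are ordered accordingly. Induction yields $H_\OBIB^n Q_0 \leq H_\BIB^n Q_0$ for every $n$, and letting $n \to \infty$ gives $\qobib \leq \qbib$.

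The main obstacle I anticipate is the monotonicity step for $H_\OBIB$: unlike for $H_\BIB$, the weights inside the bound depend on $Q$, so the usual monotonicity of a fixed weighted average must be replaced by the minimizer-comparison argument above. Attainment of the minima is harmless because $\weightset_{b,a,o}$ is a non-empty polytope (it contains $w_{b,a,o}$) and the objective is linear; everything else follows routinely from \cref{thm:Pointset} and the contraction property proved in \cref{ap:proofs}.
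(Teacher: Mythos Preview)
Your proposal is correct and follows essentially the same route as the paper: soundness via convexity of $\qpomdp$ (your invocation of \cref{thm:Pointset}) and tightness via feasibility of the canonical weight $w_{b,a,o}\in\weightset_{b,a,o}$, then lifting one-step operator inequalities to fixed points by monotonicity and the contraction property. The only cosmetic differences are that the paper packages the induction as a separate lemma (\cref{lem:ineq}) and applies the operator inequality at a fixed point rather than running two iterations in parallel, and that the paper asserts monotonicity of $H_\OBIB$ without proof whereas you spell out the minimizer-comparison argument.
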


Proofs are provided in \cref{ap:proofs}.
Soundness follows as for \cref{thm:Bib_props}, tightness from the observation that $w_{b,\act,\obs} \in \weightset_{b,\act,\obs}$.

\subsubsection*{Running example.}
We consider the \custom POMDP (\cref{ex:guessing}).
Under the \OBIB assumption, the belief after taking action $w$ can be expressed using any weight function in $\weightset_{b_0, w, \bot}$.
In particular, since $b_0 \in \Bsao$, one valid choice uses weight 1 for $b_0$ and 0 for all others.
In that case, the action is suboptimal (with value $0.5\gamma$), and the \OBIB bound corresponds with the real value $0.5$.

\subsubsection*{Complexity analysis.}
As for \BIB, $\qobib$ can be approximated using \cref{alg:BIB}.
\OBIB and \BIB use the same point set and thus require the same amount of Bellman operations per iteration.
However, a single Bellman operation for \OBIB is significantly more expensive since it requires solving an LP with at most $|\Bsao|$ variables and $|\states|$ constraints.
This yields the computational complexities shown in \cref{table:complexity}, where $L$ denotes the complexity of solving such an LP.
These computation costs are typically too high for practical use.

\begin{table*}[tb]
\caption{
Upper bounds and computation times of different methods on a number of POMDP benchmarks.
If a bound has not converged within 1200s, we report the last computed bound and denote computation time as TO.
The tightest bounds are bolded.
We include lower bounds computed by \SARSOP as a proxy for the optimal value, with $\epsilon$-optimal values underlined.
}
\label{table:bounds}
\resizebox{\textwidth}{!}{
\centering
\begin{tabular}{@{}lcrrrrrcrrcrrcrrcrrcrr@{}}
\toprule
&& \multicolumn{5}{c}{\textbf{Environment Properties}} && \multicolumn{5}{c}{\textbf{Baselines}} && \multicolumn{8}{c}{\textbf{Our methods}} \\
\cmidrule{3-7} \cmidrule{9-13} \cmidrule{15-22}
 \textbf{Environments} && $|\states|$ & $|\actions|$ & $|\observations|$ & $|\Bsao|$ & $|\Bbao|$ && \multicolumn{2}{c}{$\SARSOP$} && \multicolumn{2}{c}{\FIB} && \multicolumn{2}{c}{\BIB} && \multicolumn{2}{c}{\EBIB} && \multicolumn{2}{c}{\OBIB} \\
\cmidrule{1-1} \cmidrule{3-7} \cmidrule{9-10} \cmidrule{12-13} \cmidrule{15-16} \cmidrule{18-19} \cmidrule{21-22}
\custom &  & 3 & 3 & 1 & 6 & 8 &
& \underline{0.50} & <1s &  & 0.76 & <1s &  & 0.61 & <1s &  & \textbf{0.50} & <1s &  & \textbf{0.50} & <1s \\
\tiger &  & 2 & 3 & 2 & 3 & 5 &  
& \underline{19.4} & <1s &  & 87.2 & <1s &  & 49.6 & <1s &  & \textbf{40.5} & <1s &  & \textbf{40.5} & <1s \\
\gridenv &  & 36 & 5 & 36 & 152 & 722 &  
& {6.42} & TO &  & 8.31 & <1s &  & 8.15 & 4s &  & 7.25 & 5s &  & \textbf{7.20} & 64s \\
\rocksample (5,3) &  & 201 & 8 & 3 & 202 & 210 &  
& \underline{16.9} & 1s &  & \textbf{18.3} & <1s &  & \textbf{18.3} & <1s &  & \textbf{18.3} & <1s &  & \textbf{18.3} & <1s \\
\rocksample (7,8) &  & 13k & 13 & 3 & 13k & 13k &  
& {20.9} & TO &  & 28.5 & 106s &  & \textbf{27.2} & 413s &  & \textbf{27.2} & 728s &  & \textbf{27.2} & 741s \\
\koutofn (2) &  & 16 & 9 & 16 & 61 & 230 &  
& {-1.75} & TO &  & -1.24 & <1s &  & \textbf{-1.52} & 4s &  & \textbf{-1.52} & 4s &  & \textbf{-1.52} & 15s \\
\koutofn (3) &  & 64 & 27 & 64 & 499 & 4.8k &  
& {-2.63} & TO &  & -1.89 & <1s &  & -2.28 & 9s &  & -2.28 & 14s &  & \textbf{-2.29} & 473s \\
\aloha (30) &  & 90 & 29 & 90 & 2.5k & 202k &  
& {389} & TO &  & 394 & 5s &  & \textbf{392} & 24s &  & \textbf{392} & 923s &  & \textbf{392} & TO \\
\tagenv &  & 842 & 5 & 30 & 2.4k & 6.3k &  
& {-10.8} & TO &  & -4.75 & 5s &  & -5.58 & 22s &  & -5.57 & 35s &  & \textbf{-5.64} & 575s \\
\tigergrid &  & 36 & 5 & 36 & 1.4k & 100k &  
& {2.28} & TO &  & 2.73 & <1s &  & 2.58 & 31s &  & 2.57 & 270s &  & \textbf{2.58} & TO \\
\hallwayone &  & 60 & 5 & 60 & 2.2k & 147k &  
& {1.00} & TO &  & 1.29 & 2s &  & 1.19 & 28s &  & \textbf{1.17} & 398s &  & 1.18 & TO \\
\hallwaytwo &  & 92 & 5 & 92 & 3.4k & 229k &  
& {0.34} & TO &  & 0.98 & 5s &  & 0.89 & 52s &  & \textbf{0.88} & 777s &  & 0.89 & TO \\
\pentagon &  & 212 & 4 & 212 & 6.0k & 447k &  
& {0.33} & TO &  & \textbf{0.38} & 4s &  & \textbf{0.38} & 24s &  & \textbf{0.38} & 645s &  & \textbf{0.38} & TO \\
\fourth &  & 1.1k & 4 & 1.1k & 29k & 2125k &  
& {0.06} & TO &  & \textbf{0.09} & 102s &  & \textbf{0.09} & 432s &  & \textbf{0.09} & TO &  & \textbf{0.09} & TO \\
\bottomrule
\end{tabular}

}
\end{table*}

\subsection{Entropy-based Tighter Informed Bound (\EBIB): A Heuristic Approach}

To reduce the complexity of the precomputations of \OBIB, we consider using a single weight for each belief that we reuse for all iterations.
More precisely, we approximate the worst-case weights by those that maximize the \emph{weighted entropy}.
This gives higher weights to more uncertain beliefs, which should intuitively give a tighter bound.
To formalize this, we first define the \emph{maximal entropy weight function} for a belief $\bel_{b,\act,\obs}$ as follows:
\begin{equation}
\label{eq:w_ent}
    \went_{b, \act, \obs} \in \argmax\limits_{w \in \weightset_{b,a,o}} \sum_{b' \in \Bs} H(b')w(b').
\end{equation}
\noindent This equation always has a feasible solution, since $w_{b,a,o} \in \weightset_{b,a,o}$.
Then, \textbf{the entropy-based tighter informed bound (\EBIB) assumes the value for future beliefs is equal to the point set bound (\cref{thm:Pointset}) using point set $\Bsao$ and maximal entropy weight functions (\cref{eq:w_ent})}.
We define the corresponding $Q$-value function as follows:
\begin{definition}    
$\qebib$ is \emph{the} fixed point of the operator $H_\EBIB$:
    \begin{equation} \label{eq:EBIB}
    \begin{aligned}
        H_\EBIB &Q(b,\act)  = R(b,\act) \\ &+ \gamma \!\sum_{\obs \in \observations} \max_{\act' \in \actions} [\Pr(\obs \midd b, \act) \Qinner{}{\went_{b,\act,\obs}}{t{-}1}{\act'}].
    \end{aligned}
    \end{equation}
\end{definition}
As for the other bounds, we provide a proof that this unique fixed point exists in \cref{ap:proofs}.
We highlight the following properties of $\qebib$:

\begin{theorem}
\label{thm:QEBIB}
    $\qebib$ has the following properties:
    \begin{enumerate}
        \item \textbf{Soundness:} $\forall b \nsp\in\nsp \distr{\states}, a \nsp\in\nsp \actions \colon \qebib(b,a) \nsp\geq\nsp \qpomdp(b,a)$;
        \item \textbf{Tightness:} $\forall b \in \distr{\states}, a \in \actions \colon \qfib(b,a) \geq \qebib(b,a)$.
    \end{enumerate}
\end{theorem}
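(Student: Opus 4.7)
The plan is to establish both claims by the standard fixed-point argument: show that $H_\EBIB$ satisfies a one-sided inequality when applied to $\qpomdp$ (for soundness) or $\qfib$ (for tightness), then appeal to the monotonicity and contraction of $H_\EBIB$ (the same machinery used to prove existence of the fixed point in \cref{ap:proofs}) to iterate this inequality to the limit $\qebib$. The structure directly mirrors the soundness and tightness proofs for $\qbib$ in \cref{thm:Bib_props}; the new ingredient is that $\went_{b,a,o} \in \weightset_{b,a,o}$ by construction, so \cref{thm:Pointset} still applies, and the tightness step additionally exploits the convexity of $\qfib$ in its belief argument.

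For soundness, I apply \cref{thm:Pointset} with point set $\Bsao$, weight $\went_{b,a,o}$, and the (trivially overapproximating) function $\qpomdp$ to obtain
\[
    \sum_{b'' \in \Bsao} \went_{b,a,o}(b'')\, \qpomdp(b'',a') \;\geq\; \qpomdp(\bel_{b,a,o},a')
\]
for every successor action $a'$. Substituting into the definition of $H_\EBIB \qpomdp(b,a)$ and comparing with the Bellman equation \cref{eq:QPOMDP} gives $H_\EBIB \qpomdp \geq \qpomdp$ pointwise. Because $H_\EBIB$ is monotonic (the weights $\went$, the probabilities, and the discount are all non-negative, and $\max$ preserves order), iterating $H_\EBIB$ starting from $\qpomdp$ yields a monotonically increasing sequence that converges to $\qebib$ by the contraction property, so $\qebib \geq \qpomdp$.

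For tightness, the goal is $H_\EBIB \qfib \leq \qfib$. The key observation is that $\qfib(\cdot,a')$ is convex in $b$, because its fixed-point equation writes it as a linear term plus a max over joint action assignments of functions linear in $b$. Hence for each $b'' \in \Bsao$, convexity yields $\qfib(b'',a') \leq \sum_{s'} b''(s')\, \qfib(\bel_{s'},a')$. Taking the $\went_{b,a,o}$-weighted sum over $b'' \in \Bsao$ and swapping the order of summation, the inner factor becomes
\[
    \sum_{b'' \in \Bsao} \went_{b,a,o}(b'')\, b''(s') \;=\; \bel_{b,a,o}(s') \;=\; \Pr(s' \mid b,a,o),
\]
by the defining property of a weight function (\cref{def:weightf}). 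Multiplying by $\Pr(o \mid b,a)$, summing over $o$, taking $\max$ over $a'$, and adding $R(b,a)$ reconstructs exactly $H_\FIB \qfib(b,a) = \qfib(b,a)$, establishing $H_\EBIB \qfib \leq \qfib$. By the same monotonicity and contraction argument, iterating $H_\EBIB$ from $\qfib$ produces a decreasing sequence converging to $\qebib$, whence $\qebib \leq \qfib$.

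The main obstacle is the tightness step, which hinges on the convexity of $\qfib$ in $b$; this is not immediate but follows from noting that $H_\FIB$ maps any function on $\Bs$ to a piecewise-linear convex function on $\distr{\states}$, and that $\qfib$ is determined by its values on $\Bs$ alone. Everything else---monotonicity of $H_\EBIB$, its contraction property, and the applicability of \cref{thm:Pointset} to the weight $\went_{b,a,o}$---is already in place from the analogous arguments used for $\qbib$ and $\qobib$.
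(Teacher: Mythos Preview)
Your proposal is correct. The tightness argument is essentially identical to the paper's: both exploit the convexity of $\qfib$ in its belief argument to bound each $\qfib(b'',a')$ by $\sum_{s'} b''(s')\,\qfib(\bel_{s'},a')$, use the defining property of the weight function $\went_{b,a,o}$ to collapse the resulting double sum to $\bel_{b,a,o}(s')$, recover $H_\FIB\qfib = \qfib$ as an upper bound on $H_\EBIB\qfib$, and then iterate via monotonicity and contraction (the paper packages this last step as \cref{lem:ineq}). One phrasing nit: in your reconstruction you must take $\max_{a'}$ \emph{before} summing over $o$, not after, to land on $H_\FIB$; your stated target $H_\FIB\qfib$ makes clear this is just a slip in wording.

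For soundness you take a more direct route than the paper. You show $H_\EBIB\qpomdp \geq H_\POMDP\qpomdp = \qpomdp$ by applying the point-set bound (\cref{thm:Pointset}) with weight $\went_{b,a,o}$ directly to $\qpomdp$, and then iterate. The paper instead factors through \OBIB: it first proves $\qpomdp \leq \qobib$ (\cref{thm:obib_sound}) and separately $\qobib \leq \qebib$ (\cref{thm:obib_bib}, from $\went_{b,a,o} \in \weightset_{b,a,o}$), and chains the two. Both rest on the same convexity-of-$\qpomdp$ idea; the paper's decomposition yields the intermediate ordering $\qobib \leq \qebib$ as a reusable standalone result, while your argument is shorter and self-contained for \EBIB.
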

The proof for \cref{thm:QEBIB} is provided in \cref{ap:proofs} and follows the same intuition as those of \cref{thm:Bib_props}.
In contrast to \OBIB, we note that \emph{\EBIB does not necessarily provide tighter bounds than \BIB}, since there is no guarantee $\hat{w}_{b,a,o}$ yields tighter bounds than $w_{b,a,o}$.
In practice, however, we find \EBIB is at least as tight as \BIB, and sometimes (significantly) tighter.

\subsubsection*{Running example.}
Consider the \custom environment.
Under the \EBIB assumption, the value of taking action $w$ is approximated using the maximal entropy weight function $\hat{w}_{b_0,w,\bot}$.
Since $b_0$ is the belief with the largest entropy in $\Bsao$, this weight function is the weight function defined by $w(b)=1$ if $b=b_0$, and $w(b) = 0$ otherwise.
Thus, \EBIB and \OBIB find the same optimal bound.

\subsubsection*{Complexity analysis.}
As for our other proposed bounds, $\qebib$ can be approximated using \cref{alg:BIB}, with corresponding complexities shown in \cref{table:complexity}.
The computational complexity of a single Bellman operation is a factor $\bigO(|\actions|)$ smaller for \EBIB as compared to \OBIB, since the same weight can be used for each next action $a'$.
Moreover, since these weights can be reused at each iteration, the complexity for precomputations is significantly lower as well.
In practice, the number of beliefs $b'$ with $w_{b,a,o}(b') > 0$ is often much smaller than $|\Bsao|$, in which case the iterations take significantly less time than the complexity bound suggests.

\section{Empirical Evaluation}
\label{sec:experiments}

In this section, we empirically evaluate the proposed bounds: \BIB, \EBIB, and \OBIB. 
We address the following questions:

\begin{questionenum}
    \item \label{q:tightness} \textbf{Bounds tightness.}
        How do the proposed bounds compare to each other and prior bounds such as \FIB? How close are these bounds to the optimal value?
    \item \label{q:overhead} \textbf{Computational cost.}
        What is the computational cost of these bounds? How do they scale with the POMDP size?
    \item \label{q:integration} \textbf{Benefits for \SARSOP.}
        Can these bounds improve the performance of POMDP solvers such as \SARSOP?
    \item \label{q:discount} \textbf{Discount dependency.}
        How does the effect of using these bounds in \SARSOP depend on the discount factor?
\end{questionenum}

\paragraph{Implementation \& Baselines.}
We implement \cref{alg:BIB} within the \emph{POMDPs.jl} framework~\cite{egorov2017pomdps}, and extend the native Julia implementation of \SARSOP~\cite{DBLP:conf/rss/KurniawatiHL08} to use our bounds as initialization.
As discussed in \cref{sec:SARSOP}, we replace the bounds for $\Bs$ with those computed by our methods.
Unless stated otherwise, we use discount factor $\gamma = 0.95$ and compute bounds using relative precision $\epsilon = 10^{-3}$ and $h=250$ maximum iterations. 
By initializing the bound calculations with a (looser) upper bound, we ensure the result is valid despite the finite number of iterations.
\cref{ap:experiments} provides further details. 
All code and data is available at \url{https://github.com/MKrale/Backward_Informed_Bound}

\paragraph{Environments.}
For our experiments, we use several standard POMDP benchmarks: \tiger~\citep{DBLP:conf/aaai/CassandraKL94}, \rocksample~\citep{DBLP:conf/uai/SmithS04}, \aloha~\citep{DBLP:journals/iotj/JeonSJ22}, \tagenv~\citep{DBLP:conf/ijcai/PineauGT03}, \tigergrid~\citep{DBLP:conf/icml/LittmanCK95}, \hallwayone~\citep{DBLP:conf/icml/LittmanCK95}, \hallwaytwo~\citep{DBLP:conf/icml/LittmanCK95}, \pentagon~\citep{DBLP:conf/uai/CassandraLZ97} and \fourth~\citep{DBLP:conf/uai/CassandraLZ97}.\footnote{All environments are publically available within \emph{POMDPs.jl} or on \url{pomdps.org}.}
These environments have diverse characteristics, varying from 2 to 12545 states, from 3 to 29 actions, and from 1 to 1052 observations, as \cref{table:bounds} shows.
Additionally, we consider the \custom environment (\cref{fig:NoOpEnv}) and two new environments inspired by POMDPs in the literature. 
Firstly, we consider a $6\times 6$ grid where an agent needs to navigate from the bottom left to the top right corner. 
The observation function is as in \citet{amato2006optimal}: the agent observes in which column it is, but not in which row.
Secondly, we consider a maintenance environment called  \koutofn with the goal of keeping a number of components from breaking down~\cite{Kapur2014}.
For the latter, we add partial observability using \emph{measuring action}, which gives a negative reward but reveals the current state, and assume the agent gets no observations otherwise (similar to, e.g., \cite{DBLP:conf/ai/BellingerC0T21, DBLP:conf/nips/NamFB21, DBLP:conf/aips/KraleS023}).
\cref{ap:experiments} provides a complete description of both new environments.

\subsection{Bound Tightness and Computational Cost}%
\label{sec:expbounds}

To address \cref{q:tightness,q:overhead}, we compare the upper bounds for the initial beliefs of all environments.
In addition, we show the bounds computed by \FIB and the best lower bound found by \SARSOP within 1200s, which we consider as the closest proxy for the optimal value when evaluating the tightness of the bounds.

\textbf{\OBIB is tight but computationally intractable.}
As shown in  \cref{table:bounds}, \OBIB is always the tightest upper bound for smaller environments.
However, its computation times are significantly higher than of the other tested bounds, and for larger environment it often does not converge within the given time.

\textbf{\BIB and \EBIB are tighter than \FIB, with tractable overhead.}
\BIB and \EBIB are tighter than \FIB in all environments at the cost of longer, but mostly tractable, computation times.
The differences in the bounds are the largest for \custom, \tiger, and \gridenv, where \EBIB performs significantly better than \BIB and about on par with \OBIB.
However, for most other environment the difference between \BIB and \EBIB is minimal.
The difference between \FIB and our proposed bound is small in environments where all uncertainty is contained in the initial state, as is the case for \rocksample.

\begin{table*}[tb]
\caption{
The tightest relative value gap found by \SARSOP for different computational budgets with different bounds as initialization.
N/A denotes no lower bound has been found within the given budget.
}
\label{table:SARSOP_large}
\centering
\begin{tabular}{@{}lrrrrrrrrr@{}}
\toprule
& \multicolumn{3}{c}{\textbf{600s}} & \multicolumn{3}{c}{\textbf{1200s}} & \multicolumn{3}{c}{\textbf{3600s}}\\
\cmidrule(l){2-4} \cmidrule(l){5-7} \cmidrule(l){8-10}
\textbf{Environments} & \multicolumn{1}{c}{\FIB} & \multicolumn{1}{c}{\BIB} & \multicolumn{1}{c}{\EBIB} & \multicolumn{1}{c}{\FIB} & \multicolumn{1}{c}{\BIB} & \multicolumn{1}{c}{\EBIB} & \multicolumn{1}{c}{\FIB} & \multicolumn{1}{c}{\BIB} & \multicolumn{1}{c}{\EBIB} \\
\cmidrule{1-1} \cmidrule(l){2-4} \cmidrule(l){5-7} \cmidrule(l){8-10}
\gridenv & 0.16 & 0.16 & \textbf{0.07} & 0.12 & 0.12 & \textbf{0.06} & 0.09 & 0.08 & \textbf{0.06} \\
\rocksample (7,8) & \textbf{0.23} & 0.27 & N/A & 0.19 & \textbf{0.18} & 0.21 & \textbf{0.14} & 0.15 & \textbf{0.14} \\
\koutofn (3) & 0.21 & \textbf{0.12} & \textbf{0.12} & 0.20 & \textbf{0.11} & \textbf{0.11} & 0.18 & 0.10 & \textbf{0.09} \\
\tagenv & 0.44 & \textbf{0.40} & \textbf{0.40} & 0.43 & \textbf{0.39} & \textbf{0.39} & 0.41 & \textbf{0.37} & \textbf{0.37} \\
\tigergrid & 0.11 & \textbf{0.09} & \textbf{0.09} & 0.11 & \textbf{0.08} & \textbf{0.08} & 0.10 & \textbf{0.08} & \textbf{0.08} \\
\hallwayone & 0.22 & \textbf{0.16} & 0.17 & 0.21 & \textbf{0.16} & \textbf{0.16} & 0.21 & \textbf{0.15} & \textbf{0.15} \\
\hallwaytwo & 1.77 & \textbf{1.53} & N/A & 1.66 & \textbf{1.47} & 1.57 & 1.56 & \textbf{1.35} & 1.38 \\
\pentagon & 0.19 & \textbf{0.14} & N/A & 0.15 & \textbf{0.10} & 0.15 & 0.12 & \textbf{0.08} & 0.12 \\
\fourth & \textbf{0.66} & 0.92 & N/A & \textbf{0.57} & 0.62 & N/A & 0.44 & \textbf{0.41} & N/A \\
\bottomrule
\end{tabular}

\end{table*}

\subsection{Improvement of \SARSOP}

Next, we investigate \cref{q:integration} by comparing the performance of \SARSOP when using \FIB, \BIB, and \EBIB as initialization.
For our evaluation, we split the environments into two groups.
For the smaller environments where \SARSOP finds an $\epsilon$-optimal policy within one hour, we consider convergence times.
For the larger environments, where \SARSOP does not converge within an hour, we instead consider the relative value gap 
\[
V_\text{gap} = \frac{\overline{V}(b_0) - \underline{V}(b_0)}{\underline{V}(b_0)}
\]
after 600s, 1200s and 3600s.
We also provide the upper- and lower bounds in \cref{ap:experiments}.
All running times include the precomputation times of the bounds.

\begin{table}[tb]
\caption{
Computation times of \SARSOP for different environments, using different heuristics as initialization.
}
\label{table:SARSOP_small}
\centering
\begin{tabular}{lcrcrcr}
\toprule
\textbf{Environments} && \multicolumn{1}{c}{\FIB} && \BIB && \EBIB \\
\cmidrule{1-1} \cmidrule{3-3} \cmidrule{5-5} \cmidrule{7-7}
\custom & & <1s & & <1s & & <1s \\
\tiger & & <1s & & <1s & & <1s \\
\rocksample (5,3) & & <1s & & <1s & & <1s \\
\koutofn (2) & & 616s & & 106s & & 119s \\
\aloha (30) & & 45s & & 40s & & 945s \\
\bottomrule
\end{tabular}

\end{table}

\begin{figure}[tb]
    \includegraphics[width=0.8\columnwidth]{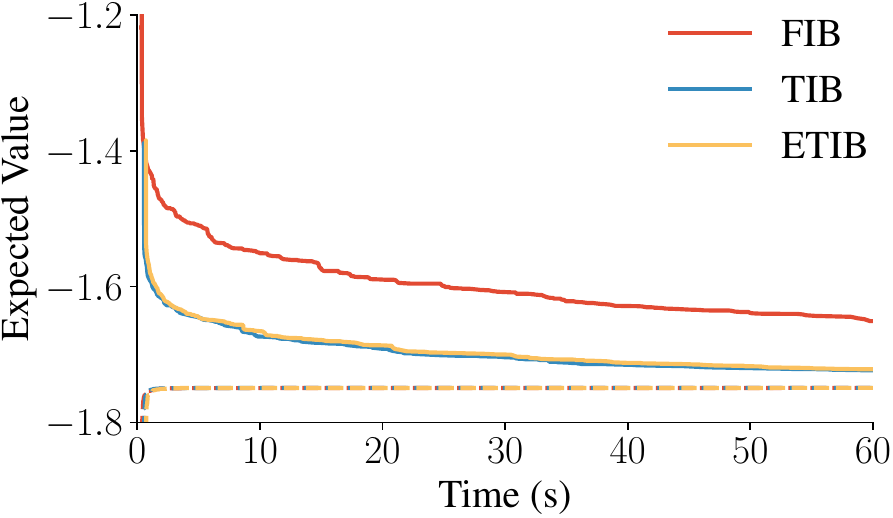}
    \caption{
    Upper- and lower bounds on the initial value of the \koutofn (2) environments as computed by \SARSOP in the first 60s, using different bounds as initialization.
    Solid lines show upper bounds, and dashed lines lower bounds.}
    \label{fig:SarsopKoutofN2}
    \Description{TODO} %
\end{figure}

\textbf{In smaller environments, using \BIB or \EBIB to initialize \SARSOP yields mixed results.}
As shown in \cref{table:SARSOP_small}, for most small environments, \SARSOP is already sufficiently fast that the initialization has little effect on computation times.
The exceptions are \koutofn (2), where using \BIB and \EBIB is significantly quicker than using \FIB, and \aloha (30), where \EBIB is significantly slower.
For \koutofn (2),  \cref{fig:SarsopKoutofN2} shows the upper- and lower bounds as computed by \SARSOP over time.
We see that when using \FIB, an initial upper bound is computed slightly quicker, but the convergence speed is worse than when using \BIB or \EBIB.

\textbf{In larger environments, using \BIB improves the bounds computed by \SARSOP.}
\Cref{table:SARSOP_large} shows the tightest relative value gap found with different computational budgets for our larger environments.
We see that using \BIB typically improves the performance of \SARSOP given a sufficiently large computational budget.
However, for environments where \BIB is computationally expensive (such as \rocksample (7,8) and \fourth), we find that using \FIB (initially) yields better results.
In contrast, using \EBIB yields better results for \gridenv, but otherwise performs similar or worse then \BIB due to its higher computational cost.

\begin{figure*}[tbh]
\center{
        \includegraphics[width=0.32\textwidth]{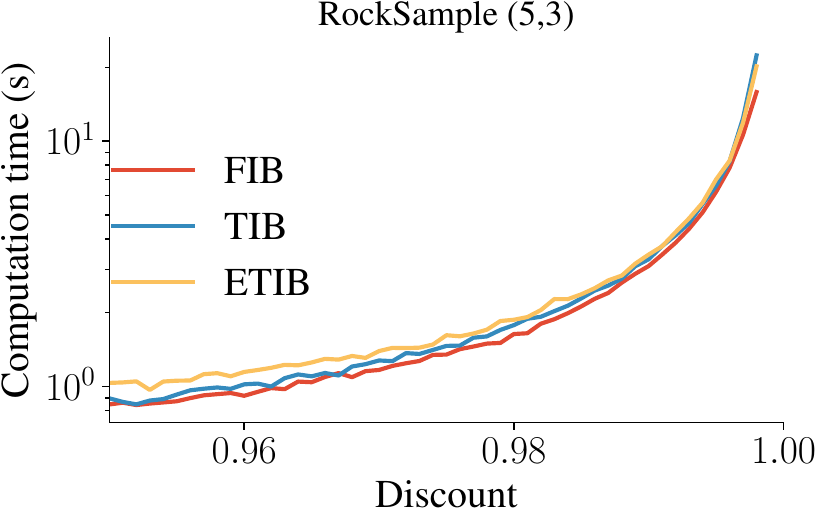}
        \includegraphics[width=0.32\textwidth]{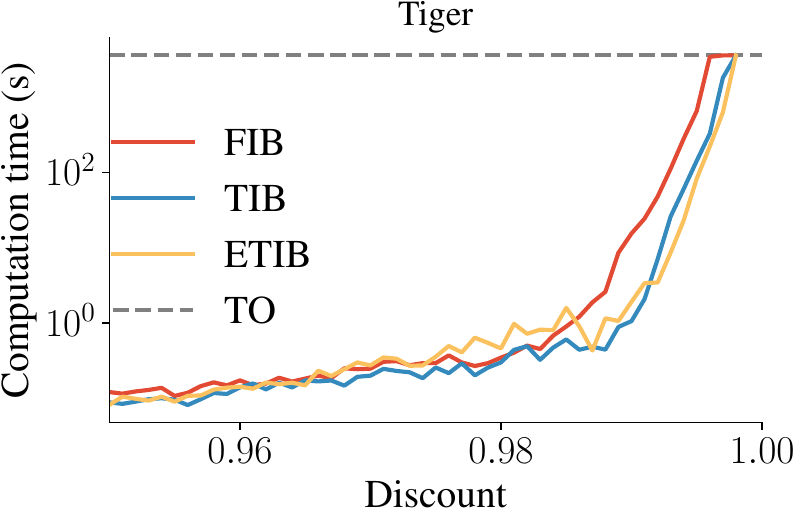}  
        \includegraphics[width=0.32\textwidth]{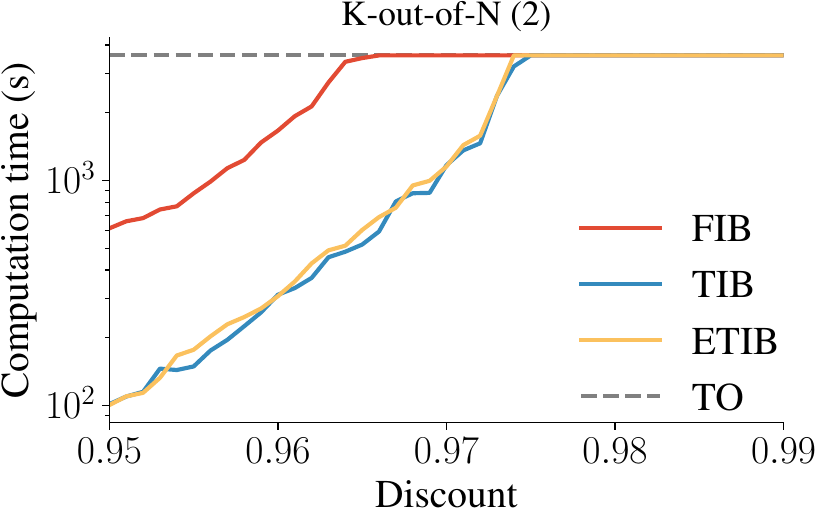}
        }
    \caption{Computation times of \SARSOP against the discount factor, using different bounds as initialization.}
    \label{fig:DiscountDependency}
    \Description{}
\end{figure*}

\subsection{Discount dependency}

Lastly, we investigate \cref{q:discount} by testing how the discount factor affects the computation times of \SARSOP, given different initialization bounds.
Due to the computational cost, we only test on smaller environments, but we expect this behavior to translate to larger environments as well.

\textbf{\SARSOP{} profits more from tighter initial bounds for high discount factors.}
As shown in \cref{fig:DiscountDependency}, the effect of different initialization is minimal for goal-oriented environments, such as  \rocksample~(5).
However, for non-goal-oriented problems (such as \tiger and \koutofn), we find that the absolute speedup of using \BIB and \EBIB increases with the discount factor.

\section{Related Work}

Besides QMDP~\cite{DBLP:conf/icml/LittmanCK95}, FIB~\cite{DBLP:journals/jair/Hauskrecht00} and point-based methods~\cite{DBLP:journals/ior/Lovejoy91,DBLP:conf/ijcai/ZhouH01,DBLP:conf/icml/Bonet02,DBLP:conf/ijcai/PineauGT03,DBLP:conf/uai/SmithS04,DBLP:conf/uai/SmithS05} which we introduced in \cref{sec:background}, we mention a number of other methods used for computing upper bounds.
Firstly, a number of works consider simplifying the set of reachable beliefs by discretizing the belief space in a similar style as point-based solvers~\cite{DBLP:conf/tacas/BorkKQ22, DBLP:journals/corr/abs-2104-07276, DBLP:conf/iros/WrayZ17, DBLP:journals/rts/Norman0Z17,DBLP:conf/atva/BorkJKQ20}.
Next, \citet{DBLP:conf/aaai/YoonFGK08} introduce `hindsight optimization', which uses deterministic planning in a number of sampled `situations' to approximate the value of a (PO)MDP.
\citet{DBLP:journals/tac/HaughL20} use `information relaxation' in a similar way.
\citet{DBLP:conf/nips/BarenboimI23} consider only a subset of possible outcomes of the transition- and observation function to compute upper bounds in online solvers.
However, all these methods are typically less tight (though computationally cheaper) than our proposed bounds.
Lastly, some bounds are based on the properties of a particular type of POMDP.
For example, \citet{DBLP:journals/cor/Sinuany-SternDB97} consider POMDPs that model maintenance, while \citet{DBLP:conf/aips/KraleS023} consider POMDPs where agents have explicit measuring actions.

Our empirical analysis focuses on \SARSOP~\cite{DBLP:conf/rss/KurniawatiHL08}, but we mention a few related state-of-the-art POMDP solvers.
Firstly, POMCP \cite{DBLP:conf/nips/SilverV10}, and AdaOPS \cite{DBLP:conf/nips/WuYZYLLH21} are both variants of Monte Carlo tree search (MCTS) adapted for POMDPs.
DESPOT \cite{DBLP:journals/jair/YeSHL17} is also based on tree search but uses a method based on hindsight optimization to increase tractability.
Lastly, many methods make use of (deep) reinforcement learning to find approximate solutions to POMDPs \cite{DBLP:conf/aaaifs/HausknechtS15,DBLP:conf/nips/LeeNAL20,DBLP:conf/iclr/HanDT20}.
However, all these methods focus on large (continuous) state-, action- and observation spaces where our bounds are computationally intractable.

\emph{Deterministic Delay MDPs} (DDMDPs)~\cite{DBLP:conf/sigmetrics/AltmanN92,DBLP:journals/tac/KatsikopoulosE03,DBLP:journals/aamas/WalshNLL09} are MDPs where the agent can fully observe its state with some constant delay, which is conceptually similar to FIB and TIB. 
Finding exact solutions to DDMDPs is NP-hard~\cite{DBLP:journals/aamas/WalshNLL09}, but efficient approximate solvers exist~\cite{DBLP:journals/prl/AgarwalA21,DBLP:conf/iclr/DermanDM21}.
However, FIB and TIB  take into account (partial) observations occurring before the state is fully revealed, while such observations do not exist in DDMDPs.
This means that in POMDPs with no observations, FIB and TIB correspond to the solutions of DDMDPs with delays 1 and 2, respectively.
However, solutions for DDMDPs are not sound upper bounds for POMDPs in general, so we do not compare our method with DDMDP solvers.

Lastly, we mention a number of other works related to $\epsilon$-optimal POMDP solving.
\citet{DBLP:conf/aaai/WalravenS17} and \citet{DBLP:conf/uai/HansenB20} propose methods to speed up the incremental pruning of $\alpha$-vectors, which constitutes a considerable amount of the computation time of \SARSOP. 
Relatedly, \citet{DBLP:conf/aaai/DujardinDC17} propose a method that uses less $\alpha$-vectors instead.
\citet{DBLP:conf/aaai/WangPBS06} proposes to use quadratic functions instead of piecewise-linear functions to represent the upper bound.

\section{Conclusion}

To improve the performance of $\epsilon$-optimal solvers, we introduced three novel bounds for POMDPs (\BIB, \OBIB, and \EBIB).
We prove these bounds are tighter than the commonly used \FIB, and show they can be computed using value iteration.
Empirically, both \BIB and \EBIB are computationally tractable on a large range of benchmarks.
Moreover, using these bounds to initialize state-of-the-art solver \SARSOP improves its performance.

Future work may focus on increasing the tractability of our bounds.
For example, instead of computing bounds for all beliefs $b \in \Bsao$, it may be quicker to use \FIB for those that have a low probability of being reached.
Alternatively, more research could be done on different heuristic choices for weights, particularly choices that do not require solving LPs.
We consider and test one such choice in \cref{ap:experiments} with limited success, but using different (combinations of) heuristic(s) could potentially get tight bounds at lower computational costs than \EBIB.
Lastly, future work could consider how our bounds can be applied to other settings, such as finite- or indefinite horizon problems.

\balance

\bibliographystyle{ACM-Reference-Format}
\bibliography{references}

\newpage
\onecolumn
\appendix

\section{Technical Details Experiments}
\label{ap:experiments}

In this section, we provide additional details about our experimental evaluation that did not fit in the main paper.
All code can be found in the supplementary materials and will be made publicly available after publication.
All experiments are performed on a 3.7GHz Intel Core i9 CPU running Ubuntu 22.04.1 LTS. 

\subsection{Implementation details}

In this section, we give a high-level overview of how we implemented the precomputation of our bounds.

\subsubsection{\FIB and QMDP}
We initially used the \emph{FIB.jl} framework for computing \FIB, but found that it did not perform well on larger environments.
Thus, we used a custom implementation to precompute \FIB using value iteration.
To speed up convergence, we additionally wrote a custom implementation for QMDP (using value iteration) that we use as an initialization for \FIB. 
To guarantee we obtain sound upper bounds, we initialize QMDP using $\max\limits_{(s,a) \in \states \times \actions} R(s,a) \frac{1}{1-\gamma}$ for all state-action pairs.

\subsubsection{\BIB, \EBIB and \OBIB}
The precomputation of \BIB, \EBIB, and \OBIB essentially follows \cref{alg:BIB}.
To improve computational speeds, we cache transition probabilities and reachable beliefs for all beliefs in $\Bsao$.
For solving LPs, we use the \emph{JuMP.jl} framework~\cite{Lubin2023} and the Clp optimizer~\cite{john_forrest_2024_13347196}, which we found to perform best in practice.
We initialize our values using \FIB, which guarantees soundness.

\subsubsection{Implementation \SARSOP}
Our experiments use the native Julia \SARSOP solver (\url{https://github.com/JuliaPOMDP/NativeSARSOP.jl}), which we alter in two ways:

\paragraph{Initialisation with different bounds:}
Firstly, to be able to use our bounds, we altered \SARSOP such that it can use any solver to compute an initial upper bound.
More precisely, the Julia implementation of \SARSOP uses a vector \emph{cornervalues} to represent the values of beliefs $\Bs$, which are initialized using \FIB by default.
We alter the code so that it can be initialized using \emph{any} solver.
We additionally add two vectors \emph{B\_heuristic} and \emph{V\_heuristic}, which get treated as part of the point set when computing upper bounds.
For our experiments, however, we initialize these with empty vectors, meaning they have no impact.

\paragraph{Relative precision:}
Secondly, by default, \SARSOP uses an \emph{absolute} target precision $\epsilon$ as a stopping condition and a planning precision $\epsilon_p$ to determine the sampling depth.
To allow for better comparison between benchmarks, we alter \SARSOP such that the precision is \emph{relative}, i.e. the algorithm terminates when $\frac{\bar{V}(b_0) - \underline{V}(b_0)}{\underline{V}(b_0)} \leq \epsilon$.
Prior works have found that picking $\epsilon_p > \epsilon$ often gives better performance, but then there is no guarantee that an $\epsilon$-optimal policy can ever be found.
As a compromise between practical and theoretical performance, we let $\epsilon_p$ linearly decrease with the number of iterations, as $\epsilon_p = \max \{ \epsilon,  \frac{i+1}{N+1}\}$, with $i$ the iteration number and $N=1000$ a hyperparameter.
In practice, we found that this yields good performance on both large and small benchmarks.

\subsection{Environment Descriptions}

Next, we give a short overview of our novel benchmarks environments.
Both are implemented using the \emph{POMDPs.jl} framework and can thus be used by others.

\paragraph{Grid}
We consider a $6 \times 6$ Grid POMDP. 
This POMDP has 36 states, each representing that the agent is in a particular cell in the grid. 
The possible actions are staying in-place and moving in one of the four directions: up, down, left, right. 
Staying in place is always successful. 
However, moving is only successful with probability 0.6, with there also being a probability of 0.1 for moving in each of the other three directions or staying in-place. 
If moving in a particular direction would cause the agent to go outside the grid, the probability for that direction is instead added to the probability for staying in-place.
The goal is to navigate to the top right corner. 
Accordingly, the reward is 1 when the \emph{new} state is the top right corner, and 0 otherwise (using the transition function, this can be translated to expected rewards given the old state). 
The initial belief assigns probability 1 to the bottom left corner. 
There are six observations, one for each column. 
Regardless of the action taken, the agent is always informed in which column it is, but not in which row it is. 

\paragraph{K-out-of-N}
We consider a POMDP that consists of $N$ components, which can all be in one of $s_\text{max} = 4$ degradation states, where degradation state $s_\text{max}$ represents a broken component.
For each component and at each time step, the agent can choose either to do nothing, inspect the component, or repair it.
When repairing, the component returns to a non-degraded state.
Otherwise, each component has a chance to degrade (i.e., its degradation state increases by 1), with a probability that depends on the number of other components that are currently broken.
More precisely, with probability $0.2$ if no components are broken, probability $0.5$ if one component is broken, and $0.9$ otherwise.
The agent cannot observe the degradation state of each component unless they take an inspection action.
However, this yields a negative reward of $0.05$.
Moreover, the agent gains a reward of $-0.5$ for every broken component and a negative reward of $0.25$ when repairing.
\koutofn (x) denotes a K-out-of-N problem with x components.
We note that using a larger number of components increases both the state space as well as the action- and observation spaces.

\subsection{Other heuristic choices: CTIB}

\begin{table}[tb]
\caption{
Upper bounds and computation times of CTIB on a number of POMDP benchmarks, as compared to \BIB and \EBIB.
If a bound has not converged within 1200s, we report the last computed bound and denote computation time as TO.
The tightest bounds are bolded.
}
\label{table:Bounds_CTIB}
\centering
\begin{tabular}{lrcrrcrrcr}
\toprule
 \textbf{Environments} && \multicolumn{2}{c}{\BIB} && \multicolumn{2}{c}{CTIB} && \multicolumn{2}{c}{\EBIB} \\
\cmidrule{1-1} \cmidrule{3-4} \cmidrule{6-7} \cmidrule{9-10}
\custom &  & 0.61 & <1s &  & \textbf{0.50} & <1s &  & \textbf{0.50} & <1s \\
\tiger &  & 49.6 & <1s &  & 48.6 & <1s &  & \textbf{40.5} & <1s \\
\gridenv &  & 8.15 & 4s &  & 7.41 & 6s &  & \textbf{7.25} & 5s \\
\rocksample (5,3) &  & \textbf{18.3} & <1s &  & \textbf{18.3} & <1s &  & \textbf{18.3} & <1s \\
\rocksample (7,8) &  & \textbf{27.2} & 413s &  & \textbf{27.2} & 717s &  & \textbf{27.2} & 728s \\
\koutofn (2) &  & \textbf{-1.52} & 4s &  & -1.47 & 6s &  & \textbf{-1.52} & 4s \\
\koutofn (3) &  & \textbf{-2.28} & 9s &  & -2.22 & 16s &  & \textbf{-2.28} & 14s \\
\aloha (30) &  & \textbf{392} & 24s &  & \textbf{392} & 271s &  & \textbf{392} & 923s \\
\tagenv &  & \textbf{-5.58} & 22s &  & -5.42 & 36s &  & -5.57 & 35s \\
\tigergrid &  & 2.58 & 31s &  & 2.58 & 90s &  & \textbf{2.57} & 270s \\
\hallwayone &  & 1.19 & 28s &  & 1.18 & 103s &  & \textbf{1.17} & 398s \\
\hallwaytwo &  & 0.89 & 52s &  & 0.89 & 208s &  & \textbf{0.88} & 777s \\
\pentagon &  & \textbf{0.38} & 24s &  & \textbf{0.38} & 250s &  & \textbf{0.38} & 645s \\
\fourth &  & \textbf{0.09} & 432s &  & \textbf{0.09} & TO &  & \textbf{0.09} & TO \\
\bottomrule
\end{tabular}

\end{table}

In response to one of our reviewers, we implement and test an additional heuristic which we refer to as \emph{Closeness-based Tighter Informed Bound} (CTIB).
For CTIB, for any belief $b$ we first find the belief $b' \in \Bsao$ that has the highest \emph{minimum ratio} \cite{Kochenderfer2022}:
\begin{equation}
    b_\text{closest}(b) = \argmax_{b' \in \Bsao} \min_{s \in \states} \frac{b(s)}{b'(s)}.
\end{equation}
Given $b_\text{closest}$, we construct a weight function using only this belief and beliefs in $\Bs$, as follows:
\begin{equation}
    \went'_{b,a,o}(b') = \begin{cases}
        \min_{s \in \states} \frac{b(s)}{b'(s)} & \text{ if } b'=b_c(b) \\
        \sum_s \max \big(0, b(s) - b_c(s)\big) & \text{ if } b \in \Bs\\
        0 & \text{ otherwise}
    \end{cases} 
\end{equation}
This weight function performs well if the belief with the highest minimum ratio also has high state uncertainty, which is often the case.
Moreover, since finding $b_\text{closest}$ does not require solving an LP, it can be significantly quicker than ETIB for some benchmarks.

We repeat the experiments to answer questions \cref{q:tightness} and \cref{q:tightness} (\cref{sec:expbounds}) for CTIB.
The results are shown in \cref{table:Bounds_CTIB}.
As expected, CTIB is computationally cheaper than \EBIB, particularly for larger environments.
However, its tightness is less consistent: CTIB never finds tighter bounds than \EBIB and even performs worse than \BIB on some benchmarks, such as on the two \koutofn benchmarks.
Thus, as mentioned in the conclusion, we believe more research is required into cheaper heuristics before they are a viable alternative to \BIB or \EBIB.

\subsection{Further experimental findings}

All experimental results are contained in a Jupyter notebook, which is included in the supplementary material.
Here, we include a variant of \cref{table:SARSOP_large}, which shows both upper- and lower bounds for each computational budget (instead of only the relative value gap).
Our bounds mostly impact the upper bounds found by \SARSOP.
For the lower bounds, we find that our bounds have a negligible effect.

\begin{table*}[tb]
\footnotesize
\begin{minipage}{.99\textwidth}
\centering
\resizebox{\textwidth}{!}{
\begin{tabular}{lcrrrcrrrcrrr}
\toprule
&& \multicolumn{3}{c}{\textbf{600s}} && \multicolumn{3}{c}{\textbf{1200s}} && \multicolumn{3}{c}{\textbf{3600s}}\\
\cmidrule{3-5} \cmidrule{7-9} \cmidrule{11-13}
\textbf{Environments} && \multicolumn{1}{c}{\FIB} & \multicolumn{1}{c}{\BIB} & \multicolumn{1}{c}{\EBIB} && \multicolumn{1}{c}{\FIB} & \multicolumn{1}{c}{\BIB} & \multicolumn{1}{c}{\EBIB} && \multicolumn{1}{c}{\FIB} & \multicolumn{1}{c}{\BIB} & \multicolumn{1}{c}{\EBIB} \\
\cmidrule{1-1} \cmidrule{3-5} \cmidrule{7-9} \cmidrule{11-13}
\gridenv & & 6.42, 7.50 & 6.41, 7.42 & 6.41, 6.89 & & 6.42, 7.29 & 6.42, 7.24 & 6.42, 6.84 & & 6.42, 7.04 & 6.42, 6.99 & 6.42, 6.76 \\
\rocksample (7,8) & & 20.45, 25.45 & 20.16, 25.68 & 19.65, 25.71 & & 20.92, 25.25 & 20.75, 25.27 & 21.09, 25.38 & & 21.93, 25.01 & 21.81, 24.98 & 21.65, 25.03 \\
\koutofn (3) & & -2.63, -2.07 & -2.63, -2.33 & -2.63, -2.32 & & -2.63, -2.10 & -2.63, -2.35 & -2.63, -2.34 & & -2.63, -2.14 & -2.63, -2.37 & -2.63, -2.37 \\
\tagenv & & -10.86, -6.05 & -10.86, -6.44 & -10.87, -6.45 & & -10.85, -6.13 & -10.84, -6.51 & -10.84, -6.52 & & -10.82, -6.29 & -10.82, -6.69 & -10.82, -6.69 \\
\tigergrid & & 2.28, 2.53 & 2.26, 2.47 & 2.24, 2.47 & & 2.28, 2.52 & 2.28, 2.47 & 2.27, 2.46 & & 2.29, 2.52 & 2.28, 2.46 & 2.28, 2.46 \\
\hallwayone & & 0.99, 1.21 & 0.99, 1.15 & 0.97, 1.15 & & 1.00, 1.21 & 0.99, 1.15 & 0.99, 1.15 & & 1.00, 1.21 & 1.00, 1.15 & 1.00, 1.15 \\
\hallwaytwo & & 0.32, 0.91 & 0.34, 0.86 & N/A & & 0.33, 0.91 & 0.34, 0.85 & 0.33, 0.86 & & 0.35, 0.90 & 0.36, 0.85 & 0.35, 0.85 \\
\pentagon & & 0.32, 0.38 & 0.33, 0.38 & N/A & & 0.33, 0.38 & 0.34, 0.38 & 0.32, 0.38 & & 0.34, 0.38 & 0.35, 0.38 & 0.34, 0.38 \\
\fourth & & 0.05, 0.09 & 0.04, 0.09 & N/A & & 0.05, 0.09 & 0.05, 0.09 & N/A & & 0.06, 0.09 & 0.06, 0.09 & N/A \\
\bottomrule
\end{tabular}

}
\end{minipage}
\caption{
Upper- and lower bounds found by \SARSOP for different computational budgets with different bounds as initialization.
N/A denotes no lower bound has been found within the given budget.
}
\label{table:SARSOP_large_full}
\end{table*}

\newpage

\section{Proofs}
\label{ap:proofs}

In this section, we provide detailed proofs of all theorems stated in the main text. 
We first provide some auxiliary lemmas, and then we proceed by proving the main results, ordered by bound. 
We also provide results for \FIB, since our exposition differs slightly from the exposition in \citet{DBLP:journals/jair/Hauskrecht00}.  
Finally, we provide some general properties of the value iteration algorithm that apply to each of our bounds. 

\subsection{Auxiliary Lemmas}

\subsubsection{Weight Functions} 
We start by showing a simple property of weight functions. 
\begin{lemma} \label{lem:weightfun}
Let $\mathcal{B}$ be a point set.  If $w \in \weightset_{\mathcal{B}, b}$ is a weight function, then $\sum_{b' \in \mathcal{B}}   w(b') = 1$.
\end{lemma}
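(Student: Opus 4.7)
The plan is to exploit the fact that both $b$ and every $b' \in \mathcal{B}$ are probability distributions over $\states$, so each sums to $1$ over the state space. The identity $\sum_{b' \in \mathcal{B}} w(b') = 1$ should then drop out by summing the defining equation of a weight function over all states and swapping the order of summation.

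Concretely, I would start from the defining property $b(s) = \sum_{b' \in \mathcal{B}} w(b') b'(s)$ for every $s \in \states$. Summing both sides over $s$ gives
\[
\sum_{s \in \states} b(s) = \sum_{s \in \states} \sum_{b' \in \mathcal{B}} w(b') b'(s).
\]
The left side equals $1$ because $b \in \distr{\states}$. On the right side, $\mathcal{B}$ and $\states$ are both finite, so Fubini/swap of finite sums applies and I can pull $w(b')$ outside the inner sum:
\[
\sum_{s \in \states} \sum_{b' \in \mathcal{B}} w(b') b'(s) = \sum_{b' \in \mathcal{B}} w(b') \sum_{s \in \states} b'(s) = \sum_{b' \in \mathcal{B}} w(b'),
\]
using $\sum_{s \in \states} b'(s) = 1$ since each $b' \in \mathcal{B} \subseteq \distr{\states}$. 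Combining the two chains yields the claim.

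There is essentially no obstacle here; the only subtlety worth mentioning is that the statement implicitly assumes $\weightset_{\mathcal{B}, b}$ is nonempty, which is already flagged in the footnote after \cref{def:weightf}, so the lemma is vacuous otherwise. No appeal to the nonnegativity of $w$ is needed for the equality itself.
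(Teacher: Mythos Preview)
Your proof is correct and follows essentially the same approach as the paper: sum the defining relation $b(s) = \sum_{b' \in \mathcal{B}} w(b') b'(s)$ over all $s \in \states$, interchange the finite sums, and use that $b$ and each $b'$ are probability distributions summing to $1$. The paper's proof is identical in substance, just written as a single chain of equalities.
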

\begin{proof} By \cref{def:weightf}, a weight function $w \in \weightset_{\mathcal{B}, b}$ satisfies $\sum_{b' \in \mathcal{B}}   w(b') b'(s) = b(s)$ for all $s \in \states$. Summing these equalities and interchanging the order of summation yields
\[
\sum_{b' \in \mathcal{B}}   w(b') = \sum_{b' \in \mathcal{B}}   w(b') \left(\sum_{s \in \states} b'(s) \right)  =  \sum_{b' \in \mathcal{B}}  \sum_{s \in \states}  w(b') b'(s) = \sum_{s \in \states} \sum_{b' \in \mathcal{B}}   w(b') b'(s) = \sum_{s \in \states} b(s) = 1,
\]
where we use that $b$ and each $b'$ are probability distributions and hence sum to 1 over their domain $\states$.
\end{proof}

\subsubsection{Contraction Mappings} 
Next, we prove a property that we will use when showing that our Bellman operators are contraction mappings.

\begin{lemma} \label{lem:contraction}
Let $X$ be a set and let $f, g \colon X \rightarrow \mathbb{R}$ be functions. Then \[\left|\sup_{x\in X} f(x) - \sup_{x\in X} g(x)\right| \leq \sup_{x\in X} |f(x) - g(x)| \qquad \text{and} \qquad \left|\inf_{x\in X} f(x) - \inf_{x\in X} g(x)\right| \leq \sup_{x\in X} |f(x) - g(x)|.\]
\end{lemma}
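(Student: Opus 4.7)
The plan is to prove the two inequalities by the standard "bound pointwise, then take supremum" argument, applied symmetrically to remove the absolute value.

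First I would handle the supremum case. For any $x \in X$, I would start from the trivial pointwise bound $f(x) \leq g(x) + |f(x) - g(x)|$, and then upper-bound each summand by its own supremum to get $f(x) \leq \sup_{y \in X} g(y) + \sup_{y \in X}|f(y) - g(y)|$. Taking the supremum over $x$ on the left, which does not affect the right-hand side (a constant in $x$), yields $\sup_{x\in X} f(x) - \sup_{x \in X} g(x) \leq \sup_{x \in X} |f(x) - g(x)|$. Swapping the roles of $f$ and $g$ in the same chain gives the reverse inequality, and since $|f-g| = |g-f|$, combining the two bounds produces the absolute value on the left, establishing the first claim.

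For the infimum case, the cleanest route is to reduce it to the supremum case via the identity $\inf_{x \in X} h(x) = -\sup_{x \in X}(-h(x))$, applied to $h = f$ and $h = g$. Then
\[
\left|\inf_{x\in X} f(x) - \inf_{x\in X} g(x)\right| = \left|\sup_{x\in X}(-g(x)) - \sup_{x\in X}(-f(x))\right| \leq \sup_{x\in X}|(-g(x)) - (-f(x))| = \sup_{x\in X}|f(x) - g(x)|,
\]
where the middle step invokes the already-proven supremum inequality with $-g, -f$ in place of $f, g$. Alternatively one can simply repeat the pointwise argument with reversed inequalities: from $g(x) - |f(x)-g(x)| \leq f(x)$ one obtains $\inf_y g(y) - \sup_y |f(y)-g(y)| \leq f(x)$, take infimum over $x$, and symmetrize.

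I do not anticipate a serious obstacle here; the only subtlety is making sure the argument does not implicitly assume the suprema or infima are finite. If $\sup_x |f(x) - g(x)| = +\infty$, the inequalities hold trivially, so we may assume it is finite, in which case $|f(x) - g(x)|$ is bounded and hence $f$ and $g$ differ by a bounded function; this guarantees $\sup f$ and $\sup g$ are simultaneously finite or simultaneously $+\infty$ (and likewise for infima), and in the latter case both sides of the displayed inequality should be interpreted in the extended reals, where the argument still goes through.
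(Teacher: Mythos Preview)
Your proposal is correct and essentially matches the paper's proof: both argue the supremum case by bounding one direction pointwise and then symmetrizing in $f,g$, and both reduce the infimum case to the supremum case by applying it to $-f,-g$. The only cosmetic difference is that the paper writes the one-sided bound as $\sup_x f(x) - \sup_x g(x) = \sup_x[f(x) - \sup_{x'} g(x')] \leq \sup_x[f(x)-g(x)]$, whereas you start from $f(x)\leq g(x)+|f(x)-g(x)|$; your extra remark about the extended-real case is additional care not present in the paper.
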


\begin{proof} We have 
\[
\sup_{x\in X} f(x) - \sup_{x\in X} g(x) = \sup_{x\in X} \left[f(x) - \sup_{x'\in X} g(x')\right] \leq  \sup_{x\in X} \left[f(x) - g(x)\right] \leq \sup_{x\in X} |f(x) - g(x)|,
\]
where for the middle inequality we use that $\sup\limits_{x'\in X} g(x') \geq g(x)$ for any $x \in X$. By exchanging the role of $f$ and $g$, we similarly get 
\[
\sup_{x\in X} g(x) - \sup_{x\in X} f(x) \leq \sup_{x\in X} |g(x) - f(x)| = \sup_{x\in X} |f(x) - g(x)|,
\]
which altogether implies the first inequality $\displaystyle \left|\sup_{x\in X} f(x) - \sup_{x\in X} g(x)\right| \leq \sup_{x\in X} |f(x) - g(x)|$. 

For the second inequality, we apply the first inequality to the functions $-g$ and $-f$, which yields 
\[ \left|\inf_{x\in X} f(x) - \inf_{x\in X} g(x)\right|  = \left|\sup_{x\in X} (-g)(x) - \sup_{x\in X} (-f)(x)\right| \leq \sup_{x\in X} |(-g)(x) - (-f)(x)| = \sup_{x\in X} |f(x) - g(x)|. \qedhere \]
\end{proof}

\noindent In practice, when we apply Lemma \ref{lem:contraction}, we know that the suprema and infima on the left hand side are actually attained (and are hence written as maxima and minima), although on the right side we will write suprema when $X$ is an infinite set. 

\subsubsection{Uniqueness of the Fixed Point.} Next, we explain why showing that our Bellman operators are contraction mappings is sufficient to show that they have a unique fixed point. 
For this, we recall Banach's fixed point theorem \cite{banach1922operations}:

\begin{theorem}[Banach's fixed point theorem]\label{thm:banach}  Let $(X, d)$ be a non-empty complete metric space and let $T\colon X \rightarrow X$ be a contraction mapping. Then $T$ has a unique fixed point $x^*$. Moreover, we can find $x^*$ in the limit by starting from an arbitrary element $x \in X$ and repeatedly applying $T$: if we define the sequence $(x_n)_{n\geq 0}$ by $x_0 = x$ and $x_{n+1} = T(x_n)$, then $x^* = \lim\limits_{n\to\infty}x_n$. 
\end{theorem}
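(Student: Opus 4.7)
The plan is to prove Banach's theorem by the classical Picard iteration argument: construct a candidate fixed point as the limit of repeated applications of $T$, use completeness to ensure the limit exists, use continuity of $T$ to verify the fixed point property, and exploit the contraction constant being strictly less than $1$ to force uniqueness. Let $c \in [0,1)$ denote the Lipschitz constant of $T$, so $d(T(x), T(y)) \leq c \cdot d(x,y)$ for all $x,y \in X$.

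First, I would fix an arbitrary $x_0 \in X$ and define the iterates $x_{n+1} = T(x_n)$. By induction, applying the contraction property $n$ times yields $d(x_{n+1}, x_n) \leq c^n d(x_1, x_0)$. Next, I would show $(x_n)$ is Cauchy: for any $m > n$, the triangle inequality combined with the bound on consecutive distances gives
\[
d(x_m, x_n) \leq \sum_{k=n}^{m-1} d(x_{k+1}, x_k) \leq d(x_1, x_0) \sum_{k=n}^{m-1} c^k \leq \frac{c^n}{1-c} d(x_1, x_0),
\]
which goes to $0$ as $n \to \infty$ because $c < 1$. Since $X$ is complete, the sequence converges to some $x^* \in X$.

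To verify $x^*$ is a fixed point, I would note that any contraction mapping is Lipschitz continuous (with constant $c$), so $T$ commutes with limits. Therefore $T(x^*) = T(\lim_{n\to\infty} x_n) = \lim_{n\to\infty} T(x_n) = \lim_{n\to\infty} x_{n+1} = x^*$. For uniqueness, if $y^*$ were another fixed point, then the contraction property would give $d(x^*, y^*) = d(T(x^*), T(y^*)) \leq c \cdot d(x^*, y^*)$, and since $c < 1$ this forces $d(x^*, y^*) = 0$, i.e., $x^* = y^*$. The "moreover" part about the limit is obtained for free since the construction started from an arbitrary $x_0$, and uniqueness ensures the limit is independent of that choice.

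The main technical step is establishing the Cauchy property via the geometric bound; everything else (continuity of contractions, uniqueness via $c<1$) is a short algebraic manipulation. Since the statement is standard and the completeness hypothesis is explicitly given, there are no hidden obstacles; the only care needed is to invoke continuity of $T$ cleanly when passing the limit through $T$, which is immediate from the Lipschitz bound $d(T(x_n), T(x^*)) \leq c \cdot d(x_n, x^*) \to 0$.
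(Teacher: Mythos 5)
Your proof is correct and complete: the Picard-iteration argument with the geometric-series Cauchy estimate, completeness for existence of the limit, Lipschitz continuity to pass the limit through $T$, and the contraction constant $c<1$ for uniqueness is exactly the classical proof of Banach's theorem. Note that the paper itself does not prove this statement; it only recalls it with a citation to Banach's original 1922 work, so there is no in-paper argument to compare against --- your proof is precisely the standard one that the citation points to.
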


Recall that $\setQ$ is  the set of functions $Q\colon \distr{\states} \times \actions \rightarrow \mathbb{R}$. Using Banach's fixed point theorem, we prove the following:

\begin{lemma} \label{lem:banach}
Let $\mathcal{B}$ be a finite point set, let $\setQ_{\mathcal{B}}$ be the set of functions $Q\colon \mathcal{B} \times \actions \rightarrow \mathbb{R}$, and let $H  \colon \setQ \rightarrow \setQ$ be a contraction mapping. Equip $\setQ$ and $\setQ_{\mathcal{B}}$ with the $\infty$-norm.
Assume that we can compute $HQ(b,a)$ for any $(b,a) \in \distr{\states} \times \actions$ using only the $Q$-values on $\mathcal{B}  \times \actions$. Then we can also see $H$ as an operator mapping $\setQ_{\mathcal{B}}$ to $\setQ$, which induces an operator $H|_{\mathcal{B}} \colon \setQ_{\mathcal{B}} \rightarrow \setQ_{\mathcal{B}}$ by restricting the domain $\distr{\states} \times \actions \rightarrow \mathbb{R}$ of the each function $HQ$ to $\mathcal{B} \times \actions \rightarrow \mathbb{R}$. 
Then $H|_{\mathcal{B}} \colon \setQ_{\mathcal{B}} \rightarrow \setQ_{\mathcal{B}}$ has a unique fixed point $Q^*|_{\mathcal{B}}\colon \mathcal{B} \times \actions \rightarrow \mathbb{R}$. Moreover, if we can compute the $Q$-value for any belief $b \in \distr{\states}$ using the $Q$-values on $\mathcal{B}$, this induces a unique fixed point $Q^*\colon  \distr{\states} \times \actions \rightarrow \mathbb{R}$ of $H$. 
\end{lemma}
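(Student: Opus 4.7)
The plan is to apply Banach's fixed point theorem (\cref{thm:banach}) twice, once on the restricted space $\setQ_{\mathcal{B}}$ to get $Q^*|_{\mathcal{B}}$, and then to bootstrap this to a fixed point on all of $\setQ$ using the assumption that $HQ$ depends only on the $Q$-values on $\mathcal{B}$. The hypothesis that $\mathcal{B}$ is \emph{finite} is essential because it makes $\setQ_{\mathcal{B}}$ a finite-dimensional real vector space, whereas $\setQ$ is in general not.

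First I would verify that $(\setQ_{\mathcal{B}}, \|\cdot\|_\infty)$ is a complete metric space. Since $\mathcal{B}$ and $\actions$ are finite, $\setQ_{\mathcal{B}}$ is isomorphic to $\mathbb{R}^{|\mathcal{B}| \cdot |\actions|}$, on which the $\infty$-norm is complete. Next, I would check that $H|_{\mathcal{B}}$ inherits the contraction constant of $H$. Given $Q_1, Q_2 \in \setQ_{\mathcal{B}}$, extend them to $\tilde{Q}_1, \tilde{Q}_2 \in \setQ$ by setting them equal to $Q_i$ on $\mathcal{B} \times \actions$ and equal to each other (say, zero) elsewhere. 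Then $\|\tilde{Q}_1 - \tilde{Q}_2\|_\infty = \|Q_1 - Q_2\|_\infty$. Since by hypothesis $H\tilde{Q}_i$ depends only on the values of $\tilde{Q}_i$ on $\mathcal{B} \times \actions$, restricting to $\mathcal{B}$ gives $H|_{\mathcal{B}} Q_i$, and the contraction property of $H$ on $\setQ$ yields
\[
\|H|_{\mathcal{B}} Q_1 - H|_{\mathcal{B}} Q_2 \|_\infty \leq \|H\tilde{Q}_1 - H\tilde{Q}_2\|_\infty \leq c \, \|\tilde{Q}_1 - \tilde{Q}_2\|_\infty = c \, \|Q_1 - Q_2\|_\infty,
\]
where $c < 1$ is the contraction constant of $H$. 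Banach's theorem then furnishes the unique fixed point $Q^*|_{\mathcal{B}}$.

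To lift this to a fixed point of $H$ on $\setQ$, I would define $Q^* \in \setQ$ by $Q^*(b,a) := HQ^*|_{\mathcal{B}}(b,a)$ for every $b \in \distr{\states}$ and $a \in \actions$ (well-defined because $H$'s output depends only on the values on $\mathcal{B}$). Restricting $Q^*$ back to $\mathcal{B}$ gives $H|_{\mathcal{B}} Q^*|_{\mathcal{B}} = Q^*|_{\mathcal{B}}$ by the fixed point property, so $Q^*$ and $Q^*|_{\mathcal{B}}$ agree on $\mathcal{B}$. Consequently $HQ^*$ (which depends only on values on $\mathcal{B}$) equals $HQ^*|_{\mathcal{B}} = Q^*$, confirming $Q^*$ is a fixed point. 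For uniqueness, any fixed point $Q'$ of $H$ satisfies $Q'|_{\mathcal{B}} = H|_{\mathcal{B}} Q'|_{\mathcal{B}}$, hence $Q'|_{\mathcal{B}} = Q^*|_{\mathcal{B}}$; applying $H$ to both sides gives $Q' = HQ' = HQ^*|_{\mathcal{B}} = Q^*$.

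The main obstacle is not really technical depth but notational care: keeping track of when we are viewing $H$ as acting on $\setQ$, on $\setQ_{\mathcal{B}}$, or as a map $\setQ_{\mathcal{B}} \to \setQ$, and checking that extending functions on $\mathcal{B}$ to all of $\distr{\states}$ in the contraction argument does not artificially inflate the $\infty$-norm. The hypothesis that $HQ$ depends only on values on $\mathcal{B}$ is exactly what rules out such inflation and ties the two fixed points together.
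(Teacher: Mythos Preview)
Your proposal is correct and follows essentially the same route as the paper: establish completeness of $\setQ_{\mathcal{B}}$ via finite-dimensionality, apply Banach's theorem to $H|_{\mathcal{B}}$, then extend the fixed point to all of $\setQ$ using that $HQ$ depends only on the values on $\mathcal{B}$. If anything, you are more careful than the paper, which tacitly assumes (rather than verifies, as you do via the zero-extension argument) that $H|_{\mathcal{B}}$ inherits the contraction constant, and which does not spell out uniqueness of the extended $Q^*$.
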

\begin{proof}
Since $\mathcal{B}$ is finite and $\actions$ is finite, the set  $\setQ_{\mathcal{B}}$ can be seen as a finite-dimensional normed vector space (isomorphic to $\mathbb{R}^{|\mathcal{B} \times \actions|}$). Since the metric space corresponding to any finite-dimensional normed vector space is complete, Banach's fixed point theorem (\cref{thm:banach}) now implies that $H|_{\mathcal{B}}$ has a unique fixed point $Q^*|_{\mathcal{B}} \colon \mathcal{B} \times \actions \rightarrow \mathbb{R}$. Since we can compute the $Q$-value for any belief $b \in \distr{\states}$ using the $Q$-values on $\mathcal{B}$, we can use this to compute a function $Q^*\colon  \distr{\states} \times \actions \rightarrow \mathbb{R}$. Since $H$ fixes the restriction of $Q^*$ to $\mathcal{B} \times \actions$, and these values determine all other values, it follows that $H$ fixes $Q^*$.
\end{proof}

\noindent Note that we go via the finite-dimensional space  $\setQ_{\mathcal{B}}$ here, since infinite-dimensional spaces are not necessarily complete. 

\subsubsection{Inequalities.} Finally, we prove a lemma that we can use to show inequalities between different $Q$-functions. For this, we will use that all Bellman operators $H$ we consider in this paper ($H_\FIB$, $H_\BIB$, $H_{\OBIB}$, $H_\EBIB$ and $H_\POMDP$) are \emph{monotone}: If $Q \leq Q'$ pointwise (i.e.\ $Q(b,a) \leq Q'(b,a)$ for all $(b,a) \in \distr{\states} \times \actions$), then also $HQ \leq HQ'$ pointwise. With this, we can show the following:

\begin{lemma} \label{lem:ineq}
Let $\mathcal{B}$ be a finite point set, let $\setQ_{\mathcal{B}}$ be the set of functions $Q\colon \mathcal{B} \times \actions \rightarrow \mathbb{R}$, and let $H_\mathrm{X}, H_\mathrm{Y}  \colon \setQ \rightarrow \setQ$ be monotone contraction mappings. Assume that $ H_\mathrm{Y}$ has the property that we can compute $H_\mathrm{Y}Q(b,a)$ for any $(b,a) \in \distr{\states} \times \actions$ using only the $Q$-values on $\mathcal{B}  \times \actions$. Let $Q_\mathrm{X}$ and $ Q_\mathrm{Y}$ be the fixed points of $H_\mathrm{X}$ and $ H_\mathrm{Y}$. Assume that $H_\mathrm{Y} Q_\mathrm{X} \leq Q_\mathrm{X}$. Then $Q_\mathrm{Y} \leq Q_\mathrm{X}$. Similarly, if $H_\mathrm{Y} Q_\mathrm{X} \geq Q_\mathrm{X}$, then $Q_\mathrm{Y} \geq Q_\mathrm{X}$.
\end{lemma}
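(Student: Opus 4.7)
The plan is to iterate $H_\mathrm{Y}$ starting from $Q_\mathrm{X}$ and use the monotonicity of $H_\mathrm{Y}$ to show that the iterates stay pointwise below $Q_\mathrm{X}$ while converging to $Q_\mathrm{Y}$. Concretely, define the sequence $Q_0 = Q_\mathrm{X}$ and $Q_{n+1} = H_\mathrm{Y} Q_n$ for $n \geq 0$.

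First I would establish a monotone-decrease property for this sequence. The hypothesis gives $Q_1 = H_\mathrm{Y} Q_\mathrm{X} \leq Q_\mathrm{X} = Q_0$. By the monotonicity of $H_\mathrm{Y}$, applying $H_\mathrm{Y}$ to both sides of $Q_n \leq Q_{n-1}$ yields $Q_{n+1} \leq Q_n$. Hence by induction $Q_n \leq Q_\mathrm{X}$ pointwise for every $n \geq 0$.

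Next I would argue that $Q_n \to Q_\mathrm{Y}$ pointwise. Restricting everything to $\mathcal{B} \times \actions$, the induced map $H_\mathrm{Y}|_\mathcal{B}$ is a contraction on the finite-dimensional complete space $\setQ_\mathcal{B}$ with unique fixed point $Q_\mathrm{Y}|_\mathcal{B}$, so \cref{lem:banach} (via Banach's fixed point theorem) gives $Q_n|_\mathcal{B} \to Q_\mathrm{Y}|_\mathcal{B}$ in the sup norm. Since by hypothesis $H_\mathrm{Y} Q(b,a)$ is determined by the values of $Q$ on $\mathcal{B} \times \actions$, and since $H_\mathrm{Y}$ is Lipschitz (being a contraction), we conclude $Q_{n+1}(b,a) = (H_\mathrm{Y} Q_n)(b,a) \to (H_\mathrm{Y} Q_\mathrm{Y})(b,a) = Q_\mathrm{Y}(b,a)$ for every $(b,a) \in \distr{\states} \times \actions$.

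Passing to the limit in the pointwise inequality $Q_n \leq Q_\mathrm{X}$ now yields $Q_\mathrm{Y} \leq Q_\mathrm{X}$, as desired. The reverse statement is symmetric: if $H_\mathrm{Y} Q_\mathrm{X} \geq Q_\mathrm{X}$, the same argument with reversed inequalities produces a non-decreasing sequence $Q_n$ with $Q_n \geq Q_\mathrm{X}$ converging to $Q_\mathrm{Y}$, giving $Q_\mathrm{Y} \geq Q_\mathrm{X}$. The only delicate step — and hence the main obstacle — is the passage from sup-norm convergence on the finite set $\mathcal{B}$ to pointwise convergence on all of $\distr{\states}$; this is handled precisely by the hypothesis that $H_\mathrm{Y}$-values are determined by values on $\mathcal{B}$, combined with the Lipschitz property inherent in being a contraction.
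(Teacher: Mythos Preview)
Your proposal is correct and follows essentially the same approach as the paper: iterate $H_\mathrm{Y}$ from $Q_\mathrm{X}$, use monotonicity to keep the iterates below $Q_\mathrm{X}$, invoke Banach (via \cref{lem:banach}) for convergence on $\mathcal{B}\times\actions$, and pass to the limit. The only cosmetic differences are that the paper's induction bounds $Q_n\le Q_\mathrm{X}$ directly (using $H_\mathrm{Y}Q_\mathrm{X}\le Q_\mathrm{X}$ in the step) rather than first showing $Q_{n+1}\le Q_n$, and the paper extends the inequality from $\mathcal{B}\times\actions$ to all of $\distr{\states}\times\actions$ by one more application of monotonicity ($Q_\mathrm{Y}=H_\mathrm{Y}Q_\mathrm{Y}\le H_\mathrm{Y}Q_\mathrm{X}\le Q_\mathrm{X}$) rather than your Lipschitz-continuity argument.
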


\begin{proof}
Since the proofs  of the `$\leq$'-statement and the `$\geq$'-statement are similar, we only prove the  `$\leq$'-statement. Define a sequence $(Q_n)_{n\geq0}$ by $Q_0 = Q_\mathrm{X}$ and $Q_{n+1} = H_\mathrm{Y} Q_n$. We prove by induction on $n \geq 0$ that $Q_n \leq Q_\mathrm{X}$. The base case states $Q_\mathrm{X} \leq Q_\mathrm{X}$, so we proceed with the induction step. Assume that $Q_n \leq Q_\mathrm{X}$. Applying the monotone operator $H_\mathrm{Y}$ to both sides then yields $Q_{n+1} = H_\mathrm{Y} Q_n \leq H_\mathrm{Y} Q_\mathrm{X}$. Moreover, we are given that $H_\mathrm{Y} Q_\mathrm{X} \leq Q_\mathrm{X}$, so we conclude that $Q_{n+1} \leq Q_\mathrm{X}$. This completes the induction, so we conclude that $Q_n \leq Q_\mathrm{X}$ for all $n \geq 0$. By Banach's fixed point theorem (applied in a similar way as in \cref{lem:banach}), it follows that $\lim\limits_{n \to\infty} Q_n = Q_\mathrm{Y}$ on $\mathcal{B}\times \actions$. By taking the limit $n \to \infty$ in the inequality  $Q_n \leq Q_\mathrm{X}$, it hence follows that $Q_\mathrm{Y}  \leq Q_\mathrm{X}$  on $\mathcal{B}\times \actions$. But since the values on $\mathcal{B}\times \actions$ are the only ones that are relevant for computing $H_\mathrm{Y} Q$ and since $H_\mathrm{Y} Q$ is monotone, it follows that $Q_\mathrm{Y}  = H_\mathrm{Y}  Q_\mathrm{Y}  \leq H_\mathrm{Y} Q_\mathrm{X} \leq Q_\mathrm{X}$.  
\end{proof}

\subsection{Fast Informed Bound}
\label{ap:fib_proofs}

We start by providing proofs for some properties of the Fast Informed Bound, since our exposition is slightly different from the exposition in \citet{DBLP:journals/jair/Hauskrecht00} and since we will actually use these properties directly in our proof. 

\begin{theorem} 
$H_\FIB$ is a contraction operator w.r.t.\ the  $\infty$-norm with Lipschitz constant $\gamma < 1$, and $H_\FIB$ has a unique fixed point $\qfib$. 
\end{theorem}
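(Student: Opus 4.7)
The plan is to establish the contraction property by direct calculation and then invoke \cref{lem:banach} with the finite point set $\mathcal{B}_\states$ for existence and uniqueness.

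For the contraction property, I would take any two $Q$-functions $Q, Q' \in \setQ$ and any belief-action pair $(b,a)$. The reward terms in $H_\FIB Q(b,a)$ and $H_\FIB Q'(b,a)$ cancel, leaving
\[
\bigl| H_\FIB Q(b,a) - H_\FIB Q'(b,a) \bigr| = \gamma \left| \sum_{o} \Bigl[ \max_{a'} \sum_{s'} \Pr(o, s' \mid b,a) Q(\bel_{s'},a') - \max_{a'} \sum_{s'} \Pr(o, s' \mid b,a) Q'(\bel_{s'},a') \Bigr] \right|.
\]
Pulling the absolute value inside the sum over observations (triangle inequality) and then applying \cref{lem:contraction} to the difference of maxima, each summand is bounded by
\[
\max_{a'} \left| \sum_{s'} \Pr(o, s' \mid b,a) \bigl[Q(\bel_{s'},a') - Q'(\bel_{s'},a')\bigr] \right| \leq \sum_{s'} \Pr(o, s' \mid b,a) \, \|Q - Q'\|_\infty = \Pr(o \mid b,a)\, \|Q - Q'\|_\infty.
\]
Summing over $o$ and using $\sum_{o} \Pr(o \mid b,a) = 1$ yields $\bigl|H_\FIB Q(b,a) - H_\FIB Q'(b,a)\bigr| \leq \gamma \|Q - Q'\|_\infty$. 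Taking the supremum over $(b,a)$ gives $\|H_\FIB Q - H_\FIB Q'\|_\infty \leq \gamma \|Q - Q'\|_\infty$, which is the desired contraction since $\gamma < 1$.

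For existence and uniqueness of the fixed point, I would observe from \eqref{eq:FIB} that $H_\FIB Q(b,a)$ depends on $Q$ only through the values $Q(\bel_{s'}, a')$ with $\bel_{s'} \in \mathcal{B}_\states$. Since $\mathcal{B}_\states$ is finite, \cref{lem:banach} applies and produces a unique fixed point of the restricted operator $H_\FIB|_{\mathcal{B}_\states}$ on the finite-dimensional space $\setQ_{\mathcal{B}_\states}$, which then extends uniquely to a fixed point $\qfib \in \setQ$ by evaluating \eqref{eq:FIB} at arbitrary $(b,a)$.

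The main obstacle, such as it is, is just carefully shepherding the maxima past the absolute value and the summation over observations; once \cref{lem:contraction} is invoked, the rest reduces to the standard observation that the observation probabilities $\Pr(o \mid b,a)$ form a probability distribution that absorbs the weighted sup-norm bound without inflating it. No new ideas beyond the auxiliary lemmas are required.
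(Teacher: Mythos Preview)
Your proposal is correct and follows essentially the same argument as the paper: cancel the reward terms, push the absolute value inside the observation sum via the triangle inequality, apply \cref{lem:contraction} to the difference of maxima, bound by the $\infty$-norm using that the joint probabilities sum to one, and then invoke \cref{lem:banach} with the finite point set $\mathcal{B}_\states$. The only cosmetic difference is that the paper keeps the double sum $\sum_{o}\sum_{s'} \Pr(o,s'\mid b,a)=1$ to the end, whereas you first collapse $\sum_{s'}\Pr(o,s'\mid b,a)=\Pr(o\mid b,a)$ and then sum over $o$.
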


\begin{proof}
Let $Q, Q' \colon \distr{\states} \times \actions \rightarrow \mathbb{R}$ be any $Q$-functions. Fix $b \in \distr{\states}$ and $a \in \actions$. Then we have
\begin{align*}
    | H_\FIB Q(b,a) - H_\FIB Q'(b,a) | &= \left| \gamma\sum_{\obs \in \observations}  \max\limits_{\act' \in \actions} \sum_{s' \in \states}  \left[ \Pr(\obs, s' \midd b,a) Q(\bel_{s'},\act') \right] - \gamma\sum_{\obs \in \observations}  \max\limits_{\act' \in \actions} \sum_{s' \in \states}  \left[ \Pr(\obs, s' \midd b,a) Q'(\bel_{s'},\act') \right] \right| \\
    &\leq  \gamma \sum_{\obs \in \observations}  \left| \max\limits_{\act' \in \actions} \sum_{s' \in \states}  \left[ \Pr(\obs, s' \midd b,a) Q(\bel_{s'},\act') \right] -  \max\limits_{\act' \in \actions} \sum_{s' \in \states}  \left[ \Pr(\obs, s' \midd b,a) Q'(\bel_{s'},\act') \right]\right|  \\
    &\leq  \gamma \sum_{\obs \in \observations} \max\limits_{\act' \in \actions}  \left| \sum_{s' \in \states}  \left[ \Pr(\obs, s' \midd b,a) Q(\bel_{s'},\act') \right] - \sum_{s' \in \states}  \left[ \Pr(\obs, s' \midd b,a) Q'(\bel_{s'},\act') \right] \right|  \\
    &\leq  \gamma \sum_{\obs \in \observations} \max\limits_{\act' \in \actions}  \sum_{s' \in \states} \Pr(\obs, s' \midd b, \act)  \left|  Q(\bel_{s'},\act') - Q'(\bel_{s'},\act') \right|  \\
    &\leq  \gamma \sum_{\obs \in \observations} \max\limits_{\act' \in \actions}  \sum_{s' \in \states} \Pr(\obs, s' \midd b, \act) \left\|   Q- Q' \right\|_{\infty} \\
    &= \gamma \left\|   Q- Q' \right\|_{\infty} \sum_{\obs \in \observations}  \sum_{s' \in \states} \Pr(\obs, s' \midd b, \act) = \gamma \left\|   Q- Q' \right\|_{\infty},
\end{align*}
where the inequalities follow from the triangle inequality, \cref{lem:contraction}, the triangle inequality and the definition of the $\infty$-norm. 

By the definition of the $\infty$-norm, it follows that
\[
\left\|   H_\FIB Q - H_\FIB Q' \right\|_{\infty} \leq \max_{(b,a) \in \distr{\states} \times \actions} | H_\FIB Q(b,a) - H_\FIB Q'(b,a) | \leq \gamma \left\|   Q- Q' \right\|_{\infty},
\]
which shows that  $H_\FIB$ is a contraction operator with Lipschitz constant $\gamma$. Since $H_\FIB$ can be computed using only the finite set of unit beliefs, \cref{lem:banach} now implies that $H_\FIB$ has a unique fixed point $\qfib$.
\end{proof}

\begin{theorem}  \label{lem:fibconvex}
Let $\act \in \actions$ be given. Then  $b \mapsto \qfib(b,a)$ is convex.
\end{theorem}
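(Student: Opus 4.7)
The plan is to prove convexity directly, without any induction, by exploiting the defining feature of $H_\FIB$: the operator accesses $Q$ only on the finite set of unit beliefs $\mathcal{B}_\states$, whose elements are independent of the input belief $b$. I will show that $b \mapsto H_\FIB Q(b,a)$ is convex for \emph{any} $Q \in \setQ$, and then specialize to $Q = \qfib$ via the fixed point equation $\qfib = H_\FIB \qfib$.

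Fix $a \in \actions$. First I would observe that $R(b,a) = \sum_{s \in \states} b(s) R(s,a)$ is linear (hence convex) in $b$. Next, for each pair $(o, a') \in \observations \times \actions$, the expression
\[
f_{o, a'}(b) := \sum_{s' \in \states} \Pr(o, s' \mid b, a)\, Q(\bel_{s'}, a')
\]
is linear in $b$, because $\Pr(o, s' \mid b, a) = \sum_{s \in \states} b(s) \Pr(o, s' \mid s, a)$ is linear in $b$ and the factor $Q(\bel_{s'}, a')$ does not depend on $b$ (this is the crucial point where the restriction of $H_\FIB$ to unit beliefs is used).

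From here it is routine convex-analysis bookkeeping: the pointwise maximum $\max_{a' \in \actions} f_{o, a'}(b)$ of finitely many linear (hence convex) functions is convex; a nonnegative sum of convex functions is convex, so $\sum_{o \in \observations} \max_{a' \in \actions} f_{o, a'}(b)$ is convex; multiplying by $\gamma \geq 0$ and adding the linear term $R(b,a)$ preserves convexity. Therefore $b \mapsto H_\FIB Q(b,a)$ is convex. Taking $Q = \qfib$ and invoking the fixed-point identity yields the claim.

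There is no real obstacle here; the only subtle point worth stating clearly is that $Q(\bel_{s'}, a')$ does \emph{not} vary with $b$, which is what reduces the inner sum to a linear functional and allows the standard ``maximum of affine functions is convex'' argument to apply. Had the operator instead evaluated $Q$ at beliefs of the form $\bel_{b,a,o}$ (as in $H_\POMDP$), this direct argument would fail and an inductive argument along the value-iteration sequence starting from a convex $Q_0$ would be needed instead.
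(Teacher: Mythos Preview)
Your proposal is correct and follows essentially the same approach as the paper: show that $b \mapsto H_\FIB Q(b,a)$ is convex for any $Q$ by observing that each inner sum is linear in $b$ (since $Q(\bel_{s'},a')$ is independent of $b$), take the pointwise maximum and sum, and then specialize to $Q = \qfib$ via the fixed-point identity. The paper's proof and yours are virtually identical in structure and emphasis.
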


\begin{proof}
Let $Q$ be any $Q$-function. We first show that for given $\act \in \actions$, the function $b \mapsto H_\FIB Q(b, \act)$ is convex. 
The function $b \mapsto   \Pr(\obs, s' \midd b, \act) = \sum_{s \in \states} b(s) \Pr(\obs, s' \midd s, \act)$ is linear in $b$ (since $\Pr(\obs, s' \midd s, \act)$ does not depend on $b$). Hence, also \[b \mapsto \sum_{s' \in \states}  \left[ \Pr(\obs, s' \midd b,a) Q(\bel_{s'},\act') \right]\] is linear in $b$ (since the values of $Q(\bel_{s'},\act')$ do not depend on $b$), and hence convex in $b$. 
Since the pointwise maximum of convex functions is convex and the sum of convex functions is convex, it follows that 
\[
b \mapsto  \sum_{\obs \in \observations}  \max\limits_{\act' \in \actions} \sum_{s' \in \states}  \left[ \Pr(\obs, s' \midd b,a) Q(\bel_{s'},\act') \right]
\]
is convex in $b$. Finally, since $b \mapsto R(b,a)$ is linear (and hence convex) in $b$, we conclude that $b \mapsto H_\FIB Q(b, \act)$ is convex by \cref{eq:FIB}. Since $\qfib$ satisfies $\qfib = H_\FIB Q$, this in particular implies that $b \mapsto \qfib(b,a)$ is convex for any given $\act \in \actions$.
\end{proof}

\subsection{Tighter Informed Bound}
\label{ap:tib_proofs}

Next, we turn to \BIB. We first show that the fixed point $\qbib$ exists and is unique.

\begin{theorem}  \label{thm:qbib_exists}
$H_\BIB$ is a contraction operator w.r.t.\ the  $\infty$-norm with Lipschitz constant $\gamma < 1$, and $H_\BIB$ has a unique fixed point $\qbib$. 
\end{theorem}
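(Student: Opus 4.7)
The plan is to mirror the contraction proof already given for $H_\FIB$, adapted to the \BIB setting where the relevant posterior beliefs are the one-step beliefs $\bel_{s,\act,\obs} \in \Bsao$ rather than the unit beliefs $\bel_{s'} \in \Bs$. The key observation is that although $H_\BIB$ is defined on all of $\setQ$, computing $H_\BIB Q(b,\act)$ only requires the $Q$-values on the finite set $\Bsao \times \actions$, which lets us invoke \cref{lem:banach} to conclude both existence and uniqueness of the fixed point once contraction is established.

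First I would fix arbitrary $Q, Q' \in \setQ$ and arbitrary $(b, \act) \in \distr{\states} \times \actions$, and compute $|H_\BIB Q(b,\act) - H_\BIB Q'(b,\act)|$. The reward terms $R(b,\act)$ cancel, leaving a difference of two sums over observations, each containing a maximum over $\act'$. I would then chain four standard inequalities in sequence: (i) triangle inequality to move the absolute value inside the sum over $\obs$; (ii) \cref{lem:contraction} to move the absolute value inside $\max_{\act'}$; (iii) triangle inequality to move the absolute value inside the sum over $s$; (iv) bounding $|Q(\bel_{s,\act,\obs},\act') - Q'(\bel_{s,\act,\obs},\act')|$ by $\|Q - Q'\|_\infty$ uniformly. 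This produces the bound
\[
|H_\BIB Q(b,\act) - H_\BIB Q'(b,\act)| \leq \gamma \|Q - Q'\|_\infty \sum_{\obs \in \observations} \sum_{s \in \states} b(s)\Pr(\obs \midd s, \act).
\]

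The remaining algebraic step is to recognise that the double sum collapses: $\sum_{\obs} \Pr(\obs \midd s,\act) = 1$ for every $s$, so $\sum_{\obs} \sum_{s} b(s)\Pr(\obs \midd s, \act) = \sum_s b(s) = 1$. Taking the supremum of the left-hand side over $(b,\act)$ yields $\|H_\BIB Q - H_\BIB Q'\|_\infty \leq \gamma \|Q - Q'\|_\infty$, which establishes that $H_\BIB$ is a contraction with Lipschitz constant $\gamma < 1$.

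For the fixed-point claim, I would then observe directly from \cref{eq:BIB} that evaluating $H_\BIB Q(b,\act)$ only requires $Q$-values at belief-action pairs $(\bel_{s,\act,\obs}, \act')$ with $s \in \states$, $\act \in \actions$, $\obs \in \observations$, all of which lie in $\Bsao$. Thus $H_\BIB$ fits the hypotheses of \cref{lem:banach} with $\mathcal{B} = \Bsao$, and the lemma immediately delivers a unique fixed point $\qbib$. I expect no real obstacle here; the only subtlety compared to the \FIB proof is the bookkeeping in step (iii), where the absolute value has to be pulled past a sum over $s$ weighted by $b(s)\Pr(\obs \midd s,\act)$ rather than past a sum over $s'$ weighted by $\Pr(\obs, s' \midd b, \act)$, but the argument is identical since both weightings are non-negative and sum to one over the combined indices.
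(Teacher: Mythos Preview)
Your proposal is correct and follows essentially the same approach as the paper's proof: the same four-step chain of inequalities (triangle inequality over $\obs$, \cref{lem:contraction} for the $\max_{\act'}$, triangle inequality over $s$, and the $\infty$-norm bound), the same collapse of $\sum_{\obs}\sum_{s} b(s)\Pr(\obs \midd s,\act)$ to $1$, and the same appeal to \cref{lem:banach} with the finite point set $\Bsao$.
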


\begin{proof}
Let $Q, Q' \colon \distr{\states} \times \actions \rightarrow \mathbb{R}$ be any $Q$-functions. Fix $b \in \distr{\states}$ and $a \in \actions$. Then we have
\begin{align*}
    | H_\BIB Q(b,a) - H_\BIB Q'(b,a) | &= \left| \gamma\sum_{\obs \in \observations}  \max\limits_{\act' \in \actions} \sum_{s \in \states} \left[ b(s) \Pr(\obs \midd s, \act)  Q(\bel_{s,\act,\obs}, \act') \right] - \gamma\sum_{\obs \in \observations}  \max\limits_{\act' \in \actions} \sum_{s \in \states} \left[ b(s) \Pr(\obs \midd s, \act)  Q'(\bel_{s,\act,\obs}, \act') \right]\right| \\
    &\leq  \gamma \sum_{\obs \in \observations}  \left| \max\limits_{\act' \in \actions} \sum_{s \in \states} \left[ b(s) \Pr(\obs \midd s, \act)  Q(\bel_{s,\act,\obs}, \act') \right] -  \max\limits_{\act' \in \actions} \sum_{s \in \states} \left[ b(s) \Pr(\obs \midd s, \act)  Q'(\bel_{s,\act,\obs}, \act') \right]\right|  \\
    &\leq  \gamma \sum_{\obs \in \observations} \max\limits_{\act' \in \actions}  \left| \sum_{s \in \states} \left[ b(s) \Pr(\obs \midd s, \act)  Q(\bel_{s,\act,\obs}, \act') \right] -  \sum_{s \in \states} \left[ b(s) \Pr(\obs \midd s, \act)  Q'(\bel_{s,\act,\obs}, \act') \right]\right|  \\
    &\leq  \gamma \sum_{\obs \in \observations} \max\limits_{\act' \in \actions}  \sum_{s \in \states} b(s) \Pr(\obs \midd s, \act)  \left|   Q(\bel_{s,\act,\obs}, \act') - Q'(\bel_{s,\act,\obs}, \act') \right|  \\
    &\leq  \gamma \sum_{\obs \in \observations} \max\limits_{\act' \in \actions}  \sum_{s \in \states} b(s) \Pr(\obs \midd s, \act)  \left\|   Q- Q' \right\|_{\infty} \\
    &= \gamma \left\|   Q- Q' \right\|_{\infty} \sum_{\obs \in \observations}  \sum_{s \in \states} b(s) \Pr(\obs \midd s, \act) = \gamma \left\|   Q- Q' \right\|_{\infty},
\end{align*}
where the inequalities follow from the triangle inequality, \cref{lem:contraction}, the triangle inequality and the definition of the $\infty$-norm. 

By the definition of the $\infty$-norm, it follows that
\[
\left\|   H_\BIB Q - H_\BIB Q' \right\|_{\infty} \leq \max_{(b,a) \in \distr{\states} \times \actions} | H_\BIB Q(b,a) - H_\BIB Q'(b,a) | \leq \gamma \left\|   Q- Q' \right\|_{\infty},
\]
which shows that  $H_\BIB$ is a contraction operator with Lipschitz constant $\gamma$. Since $H_\BIB$ can be computed using only the finite set of one-step beliefs, \cref{lem:banach} now implies that $H_\BIB$ has a unique fixed point $\qbib$.
\end{proof}

\noindent Next, we show that $\qbib$ is convex in $b$. 

\begin{theorem}  \label{lem:bibconvex}
Let $\act \in \actions$ be given. Then  $b \mapsto \qbib(b,a)$ is convex.
\end{theorem}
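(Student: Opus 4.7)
The plan is to mirror the proof of \cref{lem:fibconvex}. That is, I would first establish that for any $Q$-function $Q \colon \distr{\states} \times \actions \rightarrow \mathbb{R}$ and any fixed $\act \in \actions$, the map $b \mapsto H_\BIB Q(b, \act)$ is convex in $b$. Then, since $\qbib$ satisfies the fixed-point equation $\qbib = H_\BIB \qbib$ by \cref{thm:qbib_exists}, specializing the convexity claim to $Q = \qbib$ gives the desired convexity of $b \mapsto \qbib(b, \act)$.

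To prove convexity of $b \mapsto H_\BIB Q(b, \act)$, I would inspect the defining expression \cref{eq:BIB}. The crucial observation is that for each fixed choice of $s$, $\act$, $\obs$, and $\act'$, none of the quantities $\Pr(\obs \midd s, \act)$, $\bel_{s,\act,\obs}$, or $Q(\bel_{s,\act,\obs}, \act')$ depends on $b$, because the one-step belief $\bel_{s,\act,\obs}$ is obtained from the \emph{unit} belief $\bel_s$ rather than from $b$. Hence for each fixed $\obs$ and $\act'$, the inner sum $\sum_{s \in \states} b(s)\,\Pr(\obs \midd s,\act)\, Q(\bel_{s,\act,\obs},\act')$ is a linear function of $b$. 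Taking the pointwise maximum over $\act' \in \actions$ of these linear functions yields a convex function of $b$; summing the resulting convex functions over $\obs \in \observations$ and scaling by $\gamma > 0$ preserves convexity; and finally adding the linear term $R(b,\act) = \sum_s b(s)\, R(s,\act)$ preserves convexity as well. This gives convexity of $b \mapsto H_\BIB Q(b, \act)$ for arbitrary $Q$.

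I do not anticipate any significant obstacles, since this argument is a direct adaptation of the \FIB case. The reason the adaptation is so clean is that the beliefs appearing inside $Q$ in \cref{eq:BIB} are one-step beliefs $\bel_{s,\act,\obs}$ indexed by the (fixed) state $s$, not by $b$; were the operator instead formulated using the posterior belief $\bel_{b,\act,\obs}$ (which does depend on $b$), establishing convexity would require a more delicate argument along the lines of the one used for $\qpomdp$. The only thing worth double-checking is that the quantifiers line up correctly: convexity of $b \mapsto H_\BIB Q(b,\act)$ holds for \emph{every} $Q$ without any assumption on $Q$, which is precisely what is needed in order to plug in $Q = \qbib$ at the end.
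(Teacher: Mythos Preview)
Your proposal is correct and follows essentially the same approach as the paper's proof: both show that $b \mapsto H_\BIB Q(b,\act)$ is convex for an arbitrary $Q$ by observing that the inner sum is linear in $b$ (since the coefficients $\Pr(\obs\mid s,\act)\,Q(\bel_{s,\act,\obs},\act')$ do not depend on $b$), then use closure of convexity under pointwise maxima and sums, and finally specialize to $Q=\qbib$ via the fixed-point equation.
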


\begin{proof}
Let $Q$ be any $Q$-function. We first show that for given $\act \in \actions$, the function $b \mapsto H_\BIB Q(b, \act)$ is convex. 
The function $b \mapsto  \sum_{s \in \states} \left[ b(s) \Pr(\obs \midd s, \act)  Q(\bel_{s,\act,\obs}, \act') \right]$ is linear in $b$ (since $ \Pr(\obs \midd s, \act)  Q(\bel_{s,\act,\obs}, \act')$ does not depend on $b$), and hence convex in $b$.
Since the pointwise maximum of convex functions is convex and the sum of convex functions is convex, it follows that 
\[
b \mapsto  \sum_{\obs \in \observations}  \max\limits_{\act' \in \actions} \sum_{s \in \states} \left[ b(s) \Pr(\obs \midd s, \act)  Q(\bel_{s,\act,\obs}, \act') \right]
\]
is convex in $b$. Finally, since $b \mapsto R(b,a)$ is linear (and hence convex) in $b$, we conclude that $b \mapsto H_\BIB Q(b, \act)$ is convex by \cref{eq:BIB}. Since $\qbib$ satisfies $\qbib = H_\BIB Q$, this in particular implies that $b \mapsto \qbib(b,a)$ is convex for any given $\act \in \actions$.
\end{proof}

\noindent The next theorem shows that \BIB is a sound upper bound.

\begin{theorem}
For any belief $b \in \distr{\states}$ and any $\act \in \actions$, it holds that $\qbib(b, \act) \geq \qpomdp(b,\act)$.
\end{theorem}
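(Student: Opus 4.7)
The plan is to invoke \cref{lem:ineq} with $H_\mathrm{X} = H_\BIB$ (so $Q_\mathrm{X} = \qbib$) and $H_\mathrm{Y} = H_\POMDP$ (so $Q_\mathrm{Y} = \qpomdp$), using the ``$\leq$''-version: it suffices to show $H_\POMDP \qbib \leq \qbib$ pointwise, which will yield $\qpomdp \leq \qbib$. Note that $H_\POMDP$ is easily seen to be a monotone contraction, so the hypotheses of the lemma are satisfied.

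The key structural observation is that the one-step posterior $\bel_{b,\act,\obs}$ can be written as a convex combination of the one-step beliefs $\bel_{s,\act,\obs}$ that already appear in the definition of $H_\BIB$. Concretely, unfolding \cref{eq:bbao} and regrouping terms yields
\[
\bel_{b,\act,\obs}(s') = \sum_{s \in \states} \frac{b(s)\Pr(\obs \midd s,\act)}{\Pr(\obs \midd b,\act)}\, \bel_{s,\act,\obs}(s'),
\]
so setting $w_s = \tfrac{b(s)\Pr(\obs \midd s,\act)}{\Pr(\obs \midd b,\act)}$ gives nonnegative weights summing to one, and $\bel_{b,\act,\obs} = \sum_s w_s\, \bel_{s,\act,\obs}$. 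This is exactly the weight function $w_{b,\act,\obs}$ that also underlies the point-set reformulation of \BIB.

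Now I would apply \cref{lem:bibconvex} (convexity of $\qbib$ in the belief argument) to this convex combination, for each fixed $\act' \in \actions$:
\[
\qbib(\bel_{b,\act,\obs}, \act') \;\leq\; \sum_{s \in \states} w_s\, \qbib(\bel_{s,\act,\obs}, \act').
\]
Multiplying both sides by $\Pr(\obs \midd b,\act) \geq 0$ absorbs the denominator and produces $\sum_s b(s)\Pr(\obs \midd s,\act)\,\qbib(\bel_{s,\act,\obs},\act')$ on the right. Taking the maximum over $\act'$ of both sides (and using that $\Pr(\obs \midd b,\act)$ is a nonnegative scalar independent of $\act'$, so it commutes with the max on the left), then summing over $\obs$ and adding $R(b,\act)$ after multiplication by $\gamma$, yields precisely $H_\POMDP \qbib(b,\act) \leq H_\BIB \qbib(b,\act) = \qbib(b,\act)$, where the final equality uses that $\qbib$ is the fixed point of $H_\BIB$ (\cref{thm:qbib_exists}).

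The main technical subtlety is the step that moves the maximum past the probability: one must carefully verify that the pointwise inequality in $\act'$ survives taking $\max_{\act'}$ on both sides, so that no slack is lost and no spurious $\Pr(\obs \midd b,\act)$ factor is needed on the right-hand side. Everything else is a mechanical assembly of the Bellman operators, and convexity of $\qbib$ (already established via the same argument structure as in \cref{lem:fibconvex}) is what makes the bound go through.
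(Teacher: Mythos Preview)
Your argument is correct in substance and is more direct than the paper's. The paper does not prove soundness of \BIB\ from scratch: it first proves soundness of \OBIB\ (\cref{thm:obib_sound}) by showing $H_\POMDP Q \leq H_\OBIB Q$ for convex $Q$, applies this with $Q = \qpomdp$, and then chains with $\qobib \leq \qbib$ (\cref{thm:obib_bib}) to conclude $\qpomdp \leq \qobib \leq \qbib$. You instead establish $H_\POMDP Q \leq H_\BIB Q$ for convex $Q$ directly, using exactly the canonical weights $w_{b,\act,\obs}$, which bypasses \OBIB\ entirely. The paper's detour is chosen only because it simultaneously yields the stronger \OBIB\ soundness result; your route is the natural one if all you want is soundness of \BIB. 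Incidentally, the ``subtlety'' you flag about moving the $\max_{\act'}$ past $\Pr(\obs \midd b,\act)$ is harmless: the pointwise inequality holds for every $\act'$, so it is preserved under taking maxima on both sides.

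There is, however, a small technical mismatch in how you invoke \cref{lem:ineq}. That lemma requires $H_\mathrm{Y}$ to depend on $Q$ only through its values on a \emph{finite} point set $\mathcal{B}$ (this is what lets its proof go through \cref{lem:banach}). You set $H_\mathrm{Y} = H_\POMDP$, which does not have this property: $H_\POMDP Q$ depends on $Q$ at all posterior beliefs $\bel_{b,\act,\obs}$, an uncountable set. The fix is immediate and mirrors what the paper does for \OBIB: apply your inequality $H_\POMDP Q \leq H_\BIB Q$ with $Q = \qpomdp$ (which is convex) rather than with $Q = \qbib$. This gives $\qpomdp = H_\POMDP \qpomdp \leq H_\BIB \qpomdp$, and now \cref{lem:ineq} applies cleanly with $H_\mathrm{Y} = H_\BIB$ (which does depend only on $Q$-values on the finite set $\Bsao$), yielding $\qpomdp \leq \qbib$.
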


\begin{proof}
By \cref{thm:obib_sound} and \cref{thm:obib_bib}, we have that $\qbib(b, \act) \geq \qobib(b,\act) \geq \qpomdp(b,\act)$.
\end{proof}

\noindent Finally, we show that TIB is indeed tighter than FIB.

\begin{theorem} 
For any belief $b \in \distr{\states}$ and any $\act \in \actions$, it holds that $\qbib(b, \act) \leq \qfib(b,\act)$.
\end{theorem}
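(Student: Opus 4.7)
The plan is to reduce the claim to an inequality between Bellman operators applied once, and then invoke \cref{lem:ineq}. Specifically, since $H_\BIB$ is a monotone contraction (this is essentially what was shown in \cref{thm:qbib_exists}) and $\qbib$ is its unique fixed point, it suffices to prove the single inequality $H_\BIB \qfib \leq \qfib$ pointwise on $\distr{\states} \times \actions$. Since $\qfib = H_\FIB \qfib$, this reduces to showing
\[
H_\BIB \qfib(b,a) \;\leq\; H_\FIB \qfib(b,a) \qquad \text{for all } (b,a) \in \distr{\states} \times \actions.
\]

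Because both operators share the reward term $R(b,a)$ and the discount factor, the task further reduces to comparing, for each fixed observation $o \in \observations$ and each fixed $a' \in \actions$ (before taking the max over $a'$), the inner summation
\[
\sum_{s \in \states} b(s)\,\Pr(o \mid s,a)\,\qfib(\bel_{s,a,o}, a')
\quad \text{versus} \quad
\sum_{s' \in \states} \Pr(o, s' \mid b, a)\,\qfib(\bel_{s'}, a').
\]
The key ingredient I will use is the convexity of $b \mapsto \qfib(b,a')$ established in \cref{lem:fibconvex}. By definition \eqref{eq:bbao}, the one-step belief $\bel_{s,a,o}$ is exactly the convex combination $\bel_{s,a,o}(s') = \Pr(s',o\mid s,a)/\Pr(o\mid s,a)$ of unit beliefs $\bel_{s'}$, so convexity yields
\[
\qfib(\bel_{s,a,o}, a') \;\leq\; \sum_{s' \in \states} \frac{\Pr(s', o \mid s, a)}{\Pr(o \mid s, a)}\, \qfib(\bel_{s'}, a').
\]
Multiplying by $b(s)\,\Pr(o\mid s,a)$, summing over $s$, and observing that $\sum_s b(s)\,\Pr(s',o\mid s,a) = \Pr(s',o\mid b,a)$ collapses the double sum to the FIB-style inner expression. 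Since the inequality holds for every $a'$ it is preserved under $\max_{a'}$, and since it holds for every $o$ it is preserved under $\sum_{o}$; adding $R(b,a)$ and multiplying the summation by $\gamma$ then gives $H_\BIB \qfib(b,a) \leq H_\FIB \qfib(b,a) = \qfib(b,a)$. A final application of \cref{lem:ineq} with $H_\mathrm{X} = H_\FIB$ and $H_\mathrm{Y} = H_\BIB$ (noting that $H_\BIB$ only requires the $Q$-values on the finite point set $\Bsao$ to compute its output on an arbitrary belief) yields $\qbib \leq \qfib$ everywhere.

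The main obstacle is really just the convexity step: one must recognize $\bel_{s,a,o}$ as the correct convex combination of unit beliefs and invoke \cref{lem:fibconvex}. The subsequent algebra is a routine reindexing of a double sum using $b(s)\,\Pr(s',o\mid s,a) = b(s)\,\Pr(o\mid s,a)\cdot \bel_{s,a,o}(s')$, and the monotonicity of $\max_{a'}$ and $\sum_o$ is transparent since all coefficients $b(s)\,\Pr(o\mid s,a)$ and $\Pr(o,s'\mid b,a)$ are non-negative.
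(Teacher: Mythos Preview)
Your proposal is correct and follows essentially the same approach as the paper: both arguments express $\bel_{s,a,o}$ as a convex combination of unit beliefs, invoke the convexity of $\qfib$ from \cref{lem:fibconvex} to bound $\qfib(\bel_{s,a,o},a')$, reindex the resulting double sum to recover the $H_\FIB$ expression, and then apply \cref{lem:ineq}. The only cosmetic difference is that the paper first proves $H_\BIB Q \leq H_\FIB Q$ for an arbitrary convex $Q$ before specializing to $\qfib$, whereas you work directly with $\qfib$ throughout; the content is identical.
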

\begin{proof}
Let $Q$ be any convex $Q$-function. We first show that $H_{\BIB}Q \leq H_{\FIB}Q$, i.e. that $H_{\BIB}Q(b, \act) \leq H_{\FIB}Q(b,\act)$ for all $b \in \distr{\states}$ and $\act \in \actions$.
 We write the belief $\bel_{s,\act,\obs}$ as a convex combination of state beliefs:
$
\bel_{s, \act, \obs} = \sum_{s' \in \states} \left[\frac{\Pr(s', \obs \midd s, \act)}{\Pr(\obs \midd s, \act)} \bel_{s'}\right].$
Hence, the convexity of $Q$ implies
\[
Q(\bel_{s, \act, \obs}, \act') \leq \sum_{s' \in \states} \left[\frac{\Pr(s', \obs \midd s, \act)}{\Pr(\obs \midd s, \act)} Q(\bel_{s'}, \act')\right].
\]
Using this, we find
\begin{align*}
\sum_{s \in \states} \left[ b(s) \Pr(\obs \midd s, \act)  Q(\bel_{s,\act,\obs}, \act') \right] 
&\leq \sum_{s \in \states} \left[ b(s) \Pr(\obs \midd s, \act) \sum_{s' \in \states} \left[\frac{\Pr(s', \obs \midd s, \act)}{\Pr(\obs \midd s, \act)} Q(\bel_{s'}, \act')\right]\right] \\
&= \sum_{s \in \states}  \sum_{s' \in \states} \left[ b(s)  \Pr(s', \obs \midd s, \act)Q(\bel_{s'},  \act')\right] \\
&=  \sum_{s' \in \states}  \left[ \sum_{s \in \states} [b(s)  \Pr(s', \obs \midd s, \act)]  Q(\bel_{s'}, \act')\right] \\
&=  \sum_{s' \in \states}  \left[ \Pr(s', \obs \midd b, \act)  Q(\bel_{s'}, \act')\right]. 
\end{align*}

Hence, we get
\begin{align*}
H_\BIB Q(b, \act) &= R(b,\act) + \gamma \sum_{\obs \in \observations}  \max\limits_{\act' \in \actions} \sum_{s \in \states} \left[ b(s) \Pr(\obs \midd s, \act)  Q(\bel_{s,\act,\obs}, \act') \right] \\
 & \leq R(b,\act) + \gamma \sum_{\obs \in \observations}  \max\limits_{\act' \in \actions} \sum_{s' \in \states}  \left[ \Pr(s', \obs \midd b, \act)  Q(\bel_{s'}, \act')\right] = H_\FIB Q(b, \act).
\end{align*}
Since $\qfib$ is convex (by \cref{lem:bibconvex}), applying the result $H_{\BIB}Q \leq H_{\FIB}Q$ to $\qfib$ shows that $H_\BIB \qfib \leq H_\FIB \qfib = \qfib$. By \cref{lem:ineq}, it now follows that $\qbib \leq \qfib$, i.e.\ that $\qbib(b, \act) \leq \qfib(b,\act)$ for all $b \in \distr{\states}$ and $\act \in \actions$.
\end{proof}

\subsection{Optimized Tighter Informed Bound}

Now we turn to \OBIB. We again start by showing that the fixed point $\qobib$ exists and is unique. The techniques are mostly similar to the techniques used in the proof of \cref{thm:qbib_exists}, but we need a second application of \cref{lem:contraction} to deal with the minimization over the weights.

\begin{theorem} 
$H_\OBIB$ is a contraction operator w.r.t.\ the  $\infty$-norm with Lipschitz constant $\gamma < 1$, and $H_\OBIB$ has a unique fixed point $\qobib$. 
\end{theorem}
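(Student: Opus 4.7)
The plan is to follow the same template used for the proofs that $H_\FIB$ and $H_\BIB$ are contractions (\cref{thm:qbib_exists}), but with one extra application of \cref{lem:contraction} to handle the inner minimization over weight functions. I would fix arbitrary $Q, Q' \in \setQ$, $b \in \distr{\states}$, and $a \in \actions$, and then strip the outer structure of $H_\OBIB$ layer by layer. First, the sum over $o \in \observations$ combined with the factor $\Pr(o \mid b,a)$ can be pulled outside using the triangle inequality, just as in the earlier proofs. Next, using \cref{lem:contraction} on the outer $\max_{a' \in \actions}$, the absolute difference is bounded by $\max_{a'}$ of the absolute difference of the two $\min_{w \in \weightset_{b,a,o}}$-expressions.

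The key new step is the second application of \cref{lem:contraction}, this time to the \emph{infimum} over $w \in \weightset_{b,a,o}$ (which is an attained minimum here, but the lemma is stated for sup/inf so it applies regardless of whether $\weightset_{b,a,o}$ is finite). This yields the pointwise upper bound
\begin{equation*}
\sup_{w \in \weightset_{b,a,o}} \left| \Qinner{}{w}{t-1}{a'} - {\Qinnertext{}{w}{t-1}{a'}}' \right|,
\end{equation*}
where I abuse the $\Qinnershort{}{}{}{}$ notation to indicate which $Q$ it is computed with. For any fixed $w \in \weightset_{b,a,o}$, I can then bound the weighted difference by
\begin{equation*}
\left| \sum_{b' \in \Bsao} w(b') \bigl( Q(b',a') - Q'(b',a') \bigr) \right| \leq \sum_{b' \in \Bsao} w(b') \, \| Q - Q' \|_\infty = \| Q - Q' \|_\infty,
\end{equation*}
where the final equality uses \cref{lem:weightfun} (weight functions sum to $1$). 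Combining these bounds and using $\sum_{o} \Pr(o \mid b, a) = 1$ gives $| H_\OBIB Q(b,a) - H_\OBIB Q'(b,a) | \leq \gamma \| Q - Q' \|_\infty$, and taking the supremum over $(b,a)$ yields the contraction property with Lipschitz constant $\gamma < 1$.

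For the uniqueness of the fixed point, I would invoke \cref{lem:banach} with point set $\mathcal{B} = \Bsao$: since $\Bsao$ is finite (\cref{eq:Bsao}) and $H_\OBIB Q(b,a)$ can be evaluated using only the $Q$-values on $\Bsao \times \actions$ (the weight functions $w$ are supported on $\Bsao$, and the observation $\bel_{b,a,o}$ always lies in the convex hull of $\Bsao$ since $w_{b,a,o} \in \weightset_{b,a,o}$), the contraction restricts to an operator on the finite-dimensional normed space $\setQ_{\Bsao}$ and hence admits a unique fixed point $\qobib$.

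The main obstacle I expect is a cosmetic/notational one: the inner minimization is over a (potentially non-finite) convex set $\weightset_{b,a,o}$, so care must be taken to phrase things as $\inf$/$\sup$ to legitimately apply \cref{lem:contraction} rather than writing $\min$/$\max$ prematurely. The substantive contraction estimate itself is clean because \cref{lem:weightfun} immediately eliminates the $w$-dependence from the pointwise bound, giving the same $\gamma$-contraction as for $H_\FIB$ and $H_\BIB$.
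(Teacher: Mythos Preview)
Your proposal is correct and follows essentially the same route as the paper's proof: triangle inequality for the sum over $o$, two applications of \cref{lem:contraction} (first for $\max_{a'}$, then for the inner $\min_{w}$), then \cref{lem:weightfun} to collapse the weights, and finally \cref{lem:banach} with the finite point set $\Bsao$ for the unique fixed point. The only differences are cosmetic (notation and your explicit remark that $\weightset_{b,a,o}$ is nonempty because $w_{b,a,o}\in\weightset_{b,a,o}$), so there is nothing substantive to add.
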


\begin{proof}
Let $Q, Q' \colon \distr{\states} \times \actions \rightarrow \mathbb{R}$ be any $Q$-functions. Fix $b \in \distr{\states}$ and $a \in \actions$. Then we have
\begin{align*}
    | H_\OBIB  Q(b,a) - H_\OBIB Q'(b,a) |  
     &= \left| \gamma \!\sum_{\obs \in \observations} \max_{\act' \in \actions} [\Pr(\obs \midd b, \act) \!\! \min_{w \in \weightset_{b,a,o}} \!\!\Qinner{}{w}{}{\act'}] - \gamma \!\sum_{\obs \in \observations} \max_{\act' \in \actions} [\Pr(\obs \midd b, \act) \!\! \min_{w \in \weightset_{b,a,o}} \!\!\Qinnerprime{}{w}{}{\act'}]\right| \\
    &\leq  \gamma \sum_{\obs \in \observations}  \left|  \max_{\act' \in \actions} [\Pr(\obs \midd b, \act) \!\! \min_{w \in \weightset_{b,a,o}} \!\!\Qinner{}{w}{}{\act'}] - \max_{\act' \in \actions} [\Pr(\obs \midd b, \act) \!\! \min_{w \in \weightset_{b,a,o}}  \!\!\Qinnerprime{}{w}{}{\act'}] \right|  \\
    &\leq  \gamma \sum_{\obs \in \observations} \max\limits_{\act' \in \actions}  \Pr(\obs \midd b, \act) \left| \min_{w \in \weightset_{b,a,o}} \!\!\Qinner{}{w}{}{\act'}- \min_{w \in \weightset_{b,a,o}} \!\!\Qinnerprime{}{w}{}{\act'} \right|  \\
    &\leq  \gamma \sum_{\obs \in \observations} \max\limits_{\act' \in \actions}  \Pr(\obs \midd b, \act) \sup_{w \in \weightset_{b,a,o}}   \left| \Qinner{}{w}{}{\act'} - \Qinnerprime{}{w}{}{\act'} \right|  \\[-6pt]
    &=  \gamma \sum_{\obs \in \observations} \max\limits_{\act' \in \actions}  \Pr(\obs \midd b, \act) \sup_{w \in \weightset_{b,a,o}}   \Bigg| \sum_{b' \in \Bsao} w(b') \left(Q(b', a') - Q'(b',a')\right)\Bigg|  \\
    &\leq  \gamma \sum_{\obs \in \observations} \max\limits_{\act' \in \actions}  \Pr(\obs \midd b, \act) \sup_{w \in \weightset_{b,a,o}} \sum_{b' \in \Bsao}   w(b') \left|  \left(Q(b', a') - Q'(b',a')\right)\right|  \\
    &\leq  \gamma \sum_{\obs \in \observations} \max\limits_{\act' \in \actions}  \Pr(\obs \midd b, \act) \sup_{w \in \weightset_{b,a,o}} \sum_{b' \in \Bsao}   w(b') \left\|   Q- Q' \right\|_{\infty}   \\
    &= \gamma \left\|   Q- Q' \right\|_{\infty} \sum_{\obs \in \observations}\Pr(\obs \midd b, \act) \sup_{w \in \weightset_{b,a,o}} \sum_{b' \in \Bsao}   w(b') \\
    &= \gamma \left\|   Q- Q' \right\|_{\infty} \sum_{\obs \in \observations}  \Pr(\obs \midd b, \act) = \gamma \left\|   Q- Q' \right\|_{\infty}.
\end{align*}
The first three inequalities follow from the triangle inequality, and applying \cref{lem:contraction} twice. Next, we use \cref{eq:PSA},  the triangle inequality, the definition of the $\infty$-norm, and \cref{lem:weightfun}. By the definition of the $\infty$-norm, it follows that
\[
\left\|   H_\OBIB Q - H_\OBIB Q' \right\|_{\infty} \leq \max_{(b,a) \in \distr{\states} \times \actions} | H_\OBIB Q(b,a) - H_\OBIB Q'(b,a) | \leq \gamma \left\|   Q- Q' \right\|_{\infty},
\]
which shows that  $H_\OBIB$ is a contraction operator with Lipschitz constant $\gamma$. Since $H_\OBIB$ can be computed using only the finite set of one-step beliefs, \cref{lem:banach} now implies that $H_\OBIB$ has a unique fixed point $\qobib$.
\end{proof}

\noindent Next, we show that $\qobib$ is convex in $b$.

\begin{theorem}  
Let $\act \in \actions$ be given. Then  $b \mapsto \qobib(b,a)$ is convex.
\end{theorem}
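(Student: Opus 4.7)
The plan is to mimic the two-step structure used for Theorems~\ref{lem:fibconvex} and \ref{lem:bibconvex}: first show that for \emph{any} $Q$-function $Q$ and any fixed $\act \in \actions$, the map $b \mapsto H_\OBIB Q(b,\act)$ is convex in $b$, and then apply this to the fixed point via $\qobib = H_\OBIB \qobib$. Since $R(b,\act)$ is linear in $b$, and convexity is preserved under nonnegative sums and pointwise maxima, the work reduces to showing that for each fixed $\obs \in \observations$ and $\act' \in \actions$, the map
\[
b \;\mapsto\; \Pr(\obs \mid b,\act) \cdot \min_{w \in \weightset_{b,\act,\obs}} \sum_{b' \in \Bsao} w(b')\, Q(b',\act')
\]
is convex in $b$.

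The key idea is a reparametrization that turns this into the value function of a parametric LP with a right-hand side that depends \emph{linearly} on $b$. Specifically, setting $\tilde w(b') := \Pr(\obs \mid b,\act)\, w(b')$, the constraint $\sum_{b'} w(b')\, b'(s) = \bel_{b,\act,\obs}(s)$ becomes $\sum_{b'} \tilde w(b')\, b'(s) = \Pr(s,\obs \mid b,\act) = \sum_{s''} b(s'')\Pr(s,\obs \mid s'',\act)$, whose right-hand side is linear in $b$; and the objective becomes $\sum_{b'} \tilde w(b')\, Q(b',\act')$, which is linear in $\tilde w$ and independent of $b$. Thus our expression equals the optimal value $v(b)$ of a minimization LP in the variable $\tilde w \geq 0$ whose constraint right-hand side is linear in $b$.

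The main step is then the standard convexity argument for such parametric LPs. Given $b_1, b_2 \in \distr{\states}$ and $\lambda \in [0,1]$, set $b_\lambda = \lambda b_1 + (1-\lambda)b_2$ and take optimizers $\tilde w_1^*, \tilde w_2^*$ at $b_1, b_2$. By linearity of the constraint in $b$, the convex combination $\lambda \tilde w_1^* + (1-\lambda)\tilde w_2^*$ is feasible at $b_\lambda$, so
\[
v(b_\lambda) \;\leq\; \lambda \sum_{b'} \tilde w_1^*(b')\, Q(b',\act') \;+\; (1-\lambda)\sum_{b'} \tilde w_2^*(b')\, Q(b',\act') \;=\; \lambda v(b_1) + (1-\lambda)v(b_2),
\]
giving convexity. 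Summing over $\obs$, taking max over $\act'$, and adding $R(b,\act)$ preserves convexity, so $b \mapsto H_\OBIB Q(b,\act)$ is convex. Applying this to $Q = \qobib$ and using $\qobib = H_\OBIB \qobib$ yields the claim.

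The main subtlety I expect is handling feasibility and the attained minimum cleanly: we must ensure $\weightset_{b,\act,\obs}$ is non-empty so that the minimum makes sense, which is guaranteed by the remark after \cref{def:wbib} that $w_{b,\act,\obs} \in \weightset_{b,\act,\obs}$, and we should separately treat observations with $\Pr(\obs\mid b,\act)=0$ (they contribute $0$ under the $\tilde w$-reparametrization since the reparametrized feasible set then forces $\tilde w = 0$, consistent with the convention that such $\obs$ are omitted from the sum). A minor point is that the feasible set for $\tilde w$ at $b_\lambda$ may contain $\tilde w_1^*$ or $\tilde w_2^*$ only after this reparametrization, which is exactly why the reparametrization is needed: the un-reparametrized weight sets $\weightset_{b,\act,\obs}$ correspond to \emph{normalized} posteriors whose convex combinations do \emph{not} represent the posterior at $b_\lambda$, and the trick of pulling $\Pr(\obs\mid b,\act)$ into the weight is precisely what restores linearity in $b$.
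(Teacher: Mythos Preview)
Your proof is correct and follows essentially the same route as the paper's own proof. The paper also shows that $b \mapsto H_\OBIB Q(b,\act)$ is convex for any $Q$ by taking $b'' = \lambda b + (1-\lambda)b'$ and constructing, from optimal weights $w \in \weightset_{b,a,o}$ and $w' \in \weightset_{b',a,o}$, the feasible weight $w'' = \frac{\lambda \Pr(\obs \mid b,\act)}{\Pr(\obs \mid b'',\act)} w + \frac{(1-\lambda)\Pr(\obs \mid b',\act)}{\Pr(\obs \mid b'',\act)} w'$ for $\bel_{b'',\act,\obs}$; under your reparametrization $\tilde w = \Pr(\obs \mid b,\act)\, w$ this is exactly $\tilde w'' = \lambda \tilde w + (1-\lambda)\tilde w'$, so the two arguments are the same computation, with your parametric-LP framing providing a slightly cleaner packaging of the paper's explicit verification.
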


\begin{proof}
Let $Q$ be any $Q$-function. We first show that for given $\act \in \actions$, the function $b \mapsto H_\OBIB Q(b, \act)$ is convex using the definition of convexity. Let $b, b' \in \distr{\states}$ and $\lambda \in [0,1]$ be given. Write $b'' = \lambda b + (1-\lambda)b'$. Then
\[
\Pr(o \midd b'', a) = \sum_{s \in \states} b''(s) \Pr(\obs \midd s, \act) = \lambda \sum_{s \in \states} b(s) \Pr(\obs \midd s, \act)  + (1-\lambda)\sum_{s \in \states} b'(s) \Pr(\obs \midd s, \act) = \lambda \Pr(\obs \midd b, \act)  + (1-\lambda) \Pr(\obs \midd b', \act).
\]
We now show that if $w \in \weightset_{b, a, o}$ and $w' \in \weightset_{b', a, o}$ are weight functions for $\bel_{b,a,o}$ and $\bel_{b',a,o}$, then $w'' =  \frac{\lambda \Pr(\obs \midd b, \act)}{\Pr(o \midd b'', a)} w + \frac{(1-\lambda) \Pr(\obs \midd b', \act)}{\Pr(o \midd b'', a)} w'$ is a weight function for $\bel_{b'', a, o}$. This follows from
\begin{align*}
\sum_{\tilde{b} \in \mathcal{B}}   w''(\tilde{b} ) \tilde{b} (s) &= \frac{\lambda \Pr(\obs \midd b, \act)}{\Pr(o \midd b'', a)}  \sum_{\tilde{b} \in \mathcal{B}}   w(\tilde{b} ) \tilde{b} (s) + \frac{(1-\lambda) \Pr(\obs \midd b', \act)}{\Pr(o \midd b'', a)} \sum_{\tilde{b} \in \mathcal{B}}   w'(\tilde{b} ) \tilde{b} (s) \\ &= \frac{\lambda \Pr(\obs \midd b, \act)}{\Pr(o \midd b'', a)}  \bel_{b,a,o}(s) + \frac{(1-\lambda) \Pr(\obs \midd b', \act)}{\Pr(o \midd b'', a)} \bel_{b',a,o}(s)
 \\ &= \frac{\lambda \Pr(\obs \midd b, \act)}{\Pr(o \midd b'', a)} \frac{\sum_{s\in \states} b(s) \Pr(s', \obs \midd s, \act)}{\Pr(\obs \midd b, \act)} + \frac{(1-\lambda) \Pr(\obs \midd b', \act)}{\Pr(o \midd b'', a)} \frac{\sum_{s\in \states} b'(s) \Pr(s', \obs \midd s, \act)}{\Pr(\obs \midd b', \act)}
 \\ &=  \frac{\lambda \sum_{s\in \states} b(s) \Pr(s', \obs \midd s, \act) + (1-\lambda) \sum_{s\in \states} b'(s) \Pr(s', \obs \midd s, \act)}{\Pr(o \midd b'', a)}.
  \\ &=  \frac{\sum_{s\in \states} b''(s) \Pr(s', \obs \midd s, \act)}{\Pr(o \midd b'', a)} = \bel_{b'', a, o}(s).
\end{align*}
Moreover, we have
\begin{align*}
\Pr(o \midd b'', a) \Qinner{}{w''}{}{\act'} &= \Pr(o \midd b'', a)  \sum_{\tilde{b} \in \mathcal{B}} w''(\tilde{b}) Q(\tilde{b}, a') \\ &= \lambda\Pr(\obs \midd b, \act) \sum_{\tilde{b} \in \mathcal{B}} w(\tilde{b}) Q(\tilde{b}, a') + (1-\lambda)\Pr(\obs \midd b', \act) \sum_{\tilde{b} \in \mathcal{B}} w'(\tilde{b}) Q(\tilde{b}, a') \\ &= \lambda \Pr(\obs \midd b, \act) \Qinner{}{w}{}{\act'} + (1-\lambda)\Pr(\obs \midd b', \act) \Qinner{}{w'}{}{\act'}.
\end{align*}
Hence, for any  $w \in \weightset_{b, a, o}$ and $w' \in \weightset_{b', a, o}$, there exists a $w'' \in \weightset_{b'', a, o}$ such that 
\[
\Pr(o \midd b'', a) \Qinner{}{w''}{}{\act'}  = \lambda \Pr(\obs \midd b, \act) \Qinner{}{w}{}{\act'} + (1-\lambda)\Pr(\obs \midd b', \act) \Qinner{}{w'}{}{\act'}.
\]
By taking $w \in \argmin\limits_{w \in \weightset_{b,a,o}}  \Qinner{}{w}{}{\act'}$ and $w' \in \argmin\limits_{w' \in \weightset_{b',a,o}}  \Qinner{}{w'}{}{\act'}$, this implies 
\[
\Pr(o \midd b'', a) \!\! \min\limits_{w'' \in \weightset_{b'',a,o}}\!\!   \Qinner{}{w''}{}{\act'}  \leq \lambda \Pr(\obs \midd b, \act) \!\min\limits_{w \in \weightset_{b,a,o}}\!\!   \Qinner{}{w}{}{\act'} + (1-\lambda)\Pr(\obs \midd b', \act) \!\min\limits_{w' \in \weightset_{b',a,o}}\!\!  \Qinner{}{w'}{}{\act'}.
\]
This in turn implies that
\begin{align*}
\max\limits_{\act' \in \actions} \left[\Pr(o \midd b'', a) \!\! \min\limits_{w'' \in \weightset_{b'',a,o}}\!\!   \Qinner{}{w''}{}{\act'} \right]  &\leq \max\limits_{\act' \in \actions} \left[\lambda \Pr(\obs \midd b, \act) \!\min\limits_{w \in \weightset_{b,a,o}}\!\!   \Qinner{}{w}{}{\act'} + (1-\lambda)\Pr(\obs \midd b', \act) \!\min\limits_{w' \in \weightset_{b',a,o}}\!\!  \Qinner{}{w'}{}{\act'}\right] \\
 &\leq \lambda \max\limits_{\act' \in \actions} \left[\Pr(\obs \midd b, \act) \!\min\limits_{w \in \weightset_{b,a,o}}\!\!   \Qinner{}{w}{}{\act'}\right] + (1-\lambda)\max\limits_{\act' \in \actions} \left[\Pr(\obs \midd b', \act) \!\min\limits_{w' \in \weightset_{b',a,o}}\!\!  \Qinner{}{w'}{}{\act'}\right]
\end{align*}
Summing over all $o \in \observations$ and using that $R(b'', a) = \lambda R(b,a) + (1-\lambda)R(b',a)$ now yields
\begin{align*}
H_\OBIB &Q(b'', \act) = R(b'', a) + \gamma \sum_{o \in \observations} \max\limits_{\act' \in \actions} \left[\Pr(o \midd b'', a) \!\! \min\limits_{w'' \in \weightset_{b'',a,o}}\!\!   \Qinner{}{w''}{}{\act'} \right] \\
 &\leq \lambda \left[R(b, a) \! +\!  \gamma\! \sum_{o \in \observations} \max\limits_{\act' \in \actions} \left[\Pr(\obs \midd b, \act) \!\min\limits_{w \in \weightset_{b,a,o}}\!\!   \Qinner{}{w}{}{\act'}\right]\right] + (1\!-\!\lambda)\left[R(b', a) \! +\!  \gamma \! \sum_{o \in \observations}  \max\limits_{\act' \in \actions} \left[\Pr(\obs \midd b', \act) \!\min\limits_{w' \in \weightset_{b',a,o}}\!\!  \Qinner{}{w'}{}{\act'}\right]\right] \\
 &= \lambda H_\OBIB Q(b, \act) +  (1\!-\!\lambda) H_\OBIB Q(b', \act).
\end{align*}
Since $b, b' \in \distr{\states}$ and $\lambda \in [0,1]$ were given arbitrarily, we conclude that the function $b \mapsto H_\OBIB Q(b, \act)$ is convex.
Since $\qobib$ satisfies $\qobib = H_\OBIB \qobib$, this in particular implies that $b \mapsto \qobib(b,a)$ is convex for any given $\act \in \actions$.
\end{proof}

\newpage

\noindent Next, we show that \OBIB is a sound upper bound. 

\begin{theorem} \label{thm:obib_sound}
For any belief $b \in \distr{\states}$ and any $\act \in \actions$, it holds that $\qobib(b, \act) \geq \qpomdp(b,\act)$.
\end{theorem}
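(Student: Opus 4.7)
The plan is to reduce soundness of \OBIB to the classical convexity of $\qpomdp$ combined with \cref{lem:ineq}. Specifically, I would show that $H_\OBIB \qpomdp \geq \qpomdp$ pointwise, and then invoke \cref{lem:ineq} (using that $H_\OBIB$ is a monotone contraction, as established earlier in this subsection and as listed among the monotone operators in the preamble to \cref{lem:ineq}) to conclude $\qobib \geq \qpomdp$.

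First I would recall Sondik's classical result that for every fixed $a' \in \actions$, the map $b \mapsto \qpomdp(b, a')$ is convex on $\distr{\states}$; this is the same convexity property that underpins \cref{thm:Pointset}. Next, fix a belief $b \in \distr{\states}$, action $a$, observation $o$ with $\Pr(o \mid b,a) > 0$, and next action $a'$. For any weight function $w \in \weightset_{b,a,o} = \weightset_{\Bsao,\, \bel_{b,a,o}}$, the defining identity of a weight function writes $\bel_{b,a,o} = \sum_{b' \in \Bsao} w(b')\, b'$ as a convex combination (note $\sum_{b'} w(b') = 1$ by \cref{lem:weightfun}). Applying convexity of $\qpomdp(\cdot, a')$ gives
\[
\Qinner{\POMDP}{w}{}{a'} \;=\; \sum_{b' \in \Bsao} w(b')\, \qpomdp(b', a') \;\geq\; \qpomdp(\bel_{b,a,o}, a').
\]
Taking the minimum over $w \in \weightset_{b,a,o}$ preserves this inequality, and then taking the maximum over $a'$ preserves it as well, so
\[
\max_{a' \in \actions} \min_{w \in \weightset_{b,a,o}} \Qinner{\POMDP}{w}{}{a'} \;\geq\; \max_{a' \in \actions} \qpomdp(\bel_{b,a,o}, a').
\]

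Multiplying by $\Pr(o \mid b,a) \geq 0$, summing over $o$, scaling by $\gamma$, and adding $R(b,a)$ on both sides yields exactly $H_\OBIB \qpomdp(b,a) \geq H_\POMDP \qpomdp(b,a) = \qpomdp(b,a)$, where the last equality uses that $\qpomdp$ is the fixed point of $H_\POMDP$ (\cref{eq:QPOMDP}). Since $(b,a)$ was arbitrary, $H_\OBIB \qpomdp \geq \qpomdp$ pointwise. Finally, applying \cref{lem:ineq} with $H_\mathrm{X} = H_\POMDP$, $Q_\mathrm{X} = \qpomdp$, $H_\mathrm{Y} = H_\OBIB$, $Q_\mathrm{Y} = \qobib$, and $\mathcal{B} = \Bsao$ (noting that $H_\OBIB Q(b,a)$ depends only on the values of $Q$ on $\Bsao \times \actions$) yields $\qobib \geq \qpomdp$, as desired.

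The only nontrivial step is the convexity-based inequality on the weighted sum, but this is a direct consequence of Jensen's inequality once one has convexity of $\qpomdp$ in the belief argument; the rest is bookkeeping with the monotonicity of $\max$, $\min$, and nonnegative linear combinations, followed by a direct invocation of \cref{lem:ineq}. I do not anticipate a real obstacle here — the hardest part is simply to carefully track that the minimization over $w$ on the \OBIB side is always at most the value obtained by the ``true posterior weight'' $w_{b,a,o}$, so the inequality goes in the right direction.
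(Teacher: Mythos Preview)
Your proposal is correct and follows essentially the same route as the paper: establish the operator inequality $H_\OBIB \qpomdp \geq H_\POMDP \qpomdp = \qpomdp$ from convexity of $\qpomdp$ in the belief argument, then invoke \cref{lem:ineq} with $H_\mathrm{Y} = H_\OBIB$ and $\mathcal{B} = \Bsao$ to conclude $\qobib \geq \qpomdp$. The only cosmetic difference is that the paper first proves the slightly more general statement $H_\POMDP Q \leq H_\OBIB Q$ for \emph{any} convex $Q$ before specializing to $\qpomdp$, whereas you work directly with $\qpomdp$; the substance is identical.
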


\begin{proof}
Let $Q$ be any convex $Q$-function. We first show that $H_{\POMDP}Q \leq H_{\OBIB}Q$.
 By definition, a weight function $w \in \weightset_{b,a,o}$ writes the belief $\bel_{b,\act,\obs}$ as a convex combination of state beliefs:
$
\bel_{b, \act, \obs} = \sum_{b' \in \Bsao} \left[w(b') b'\right].$
Hence, the convexity of $Q$ implies
\[
Q(\bel_{b, \act, \obs}, \act') \leq \sum_{b' \in \Bsao} \left[ w(b') Q(b', \act')\right] = \Qinner{}{w}{}{\act'}.
\]
Since this holds for any $w \in \weightset_{b,a,o}$, we find $Q(\bel_{b, \act, \obs}, \act') \leq \min\limits_{w \in \weightset_{b,a,o}} \Qinner{}{w}{}{\act'}$.
Hence, we get
\begin{align*}
H_\POMDP Q(b, \act) 
& =  R(b,\act) + \gamma \sum_{\obs \in \observations}  \max\limits_{\act' \in \actions} \left[ \Pr(\obs \midd b, \act)  Q(\bel_{b,\act,\obs}, \act') \right]\\ &\leq  R(b,\act) + \gamma \sum_{\obs \in \observations} \max_{\act' \in \actions}  \left[\Pr(\obs \midd b, \act) \!\min\limits_{w \in \weightset_{b,a,o}}\!\!   \Qinner{}{w}{}{\act'}\right] 
  = H_\OBIB Q(b, \act).
\end{align*}
Since $\qpomdp$ is convex, applying the result $H_{\POMDP}Q \leq H_{\OBIB}Q$ to $\qpomdp$ shows that $\qpomdp = H_{\POMDP}\qpomdp \leq H_{\OBIB} \qpomdp$. By \cref{lem:ineq}, it now follows that $\qpomdp \leq \qobib$, i.e.\ that $\qpomdp(b, \act) \leq \qobib(b,\act)$ for all $b \in \distr{\states}$ and $\act \in \actions$.
\end{proof}

\noindent Finally, we show that \OBIB is tighter than \BIB and \EBIB.

\begin{theorem} \label{thm:obib_bib}
For any belief $b \in \distr{\states}$ and any $\act \in \actions$, it holds that $\qobib(b, \act) \leq \qbib(b,\act)$ and $\qobib(b, \act) \leq \qebib(b,\act)$ .
\end{theorem}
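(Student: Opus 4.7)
The plan is to prove both inequalities in a uniform way by showing that \OBIB's Bellman operator is pointwise dominated by those of \BIB and \EBIB, and then invoking \cref{lem:ineq}.

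First, I would rewrite $H_\BIB$ in the weight-function form already derived in the paper: using $w_{b,\act,\obs}(\bel_{s,\act,\obs}) = b(s)\Pr(o\mid s,a)/\Pr(o\mid b,a)$, one has
\[
H_\BIB Q(b,a) = R(b,a) + \gamma \sum_{o\in\observations}\max_{a'\in\actions}\bigl[\Pr(o\mid b,a)\, \Qinner{}{w_{b,\act,\obs}}{}{a'}\bigr].
\]
The definition of $H_\EBIB$ is already of this form, with $w_{b,\act,\obs}$ replaced by $\went_{b,\act,\obs}$. Thus both $H_\BIB$ and $H_\EBIB$ have the same structural shape as $H_\OBIB$, except that $H_\OBIB$ takes a minimum over \emph{all} feasible weight functions.

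The second step is the core observation: both $w_{b,\act,\obs}$ and $\went_{b,\act,\obs}$ lie in $\weightset_{b,\act,\obs}$. For $w_{b,\act,\obs}$ this follows from the computation after \cref{def:wbib}; for $\went_{b,\act,\obs}$ it is explicit in \cref{eq:w_ent}. Consequently, for every $Q$, every $o$, and every $a'$,
\[
\min_{w\in\weightset_{b,a,o}}\Qinner{}{w}{}{a'} \;\leq\; \Qinner{}{w_{b,\act,\obs}}{}{a'} \quad\text{and}\quad \min_{w\in\weightset_{b,a,o}}\Qinner{}{w}{}{a'} \;\leq\; \Qinner{}{\went_{b,\act,\obs}}{}{a'}.
\]
Multiplying by the non-negative factor $\Pr(o\mid b,a)$, taking the max over $a'$, summing over $o$, multiplying by $\gamma$, and adding $R(b,a)$ preserves both inequalities termwise, yielding the pointwise operator bounds $H_\OBIB Q \leq H_\BIB Q$ and $H_\OBIB Q \leq H_\EBIB Q$ for every $Q\in\setQ$.

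Finally, I would apply these inequalities at the fixed points $\qbib$ and $\qebib$: using $H_\BIB\qbib = \qbib$, we get $H_\OBIB \qbib \leq H_\BIB \qbib = \qbib$, and similarly $H_\OBIB \qebib \leq \qebib$. Since $H_\OBIB$ is a monotone contraction whose action on any belief is determined by its values on the finite point set $\Bsao$, \cref{lem:ineq} applies with $H_\mathrm{Y} = H_\OBIB$ and $Q_\mathrm{X} \in \{\qbib, \qebib\}$, giving $\qobib \leq \qbib$ and $\qobib \leq \qebib$ pointwise, as desired. The only step requiring any care is verifying the monotonicity of $H_\OBIB$ and the premise that it is computable from values on $\Bsao$, so that \cref{lem:ineq} legitimately applies; both are immediate from \cref{eq:OBIB} and were implicit in the earlier proof that $\qobib$ exists.
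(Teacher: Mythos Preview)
Your proposal is correct and follows essentially the same argument as the paper: rewrite $H_\BIB$ in weight-function form, observe that $w_{b,\act,\obs}$ and $\went_{b,\act,\obs}$ are feasible for the minimization in $H_\OBIB$, deduce the pointwise operator inequality, and then invoke \cref{lem:ineq}. The only cosmetic difference is that you evaluate the operator inequality at $\qbib$ (resp.\ $\qebib$) and use the $\leq$-direction of \cref{lem:ineq} with $H_\mathrm{Y}=H_\OBIB$, whereas the paper evaluates it at $\qobib$ and uses the $\geq$-direction with $H_\mathrm{Y}=H_\BIB$ (resp.\ $H_\EBIB$); both routes are equivalent.
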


\begin{proof}
Let $Q$ be any $Q$-function. We first show that $H_{\BIB}Q \geq H_{\OBIB}Q$. Recall that we can write \begin{equation*}
    H_{\BIB}Q(b, \act)  = R(b,\act) + \gamma \sum_{\obs \in \observations}  \max\limits_{\act' \in \actions} \left[ \Pr(\obs \midd b, \act)\Qinner{\BIB}{w_{b,\act,\obs}}{t{-}1}{\act}\right],
\end{equation*}
with $w_{b,\act,\obs}(\bel_{s,\act,\obs}) = \frac{b(s) \Pr( \obs \midd s, \act)}{\Pr(\obs \midd b, \act)}\in \weightset_{b,a,o} $. Hence, we get $\Qinner{}{w_{b,\act,\obs}}{t{-}1}{\act} \geq\min\limits_{w \in \weightset_{b,a,o}}\!\!   \Qinner{}{w}{}{\act'} $, so
\begin{align*}
H_\BIB Q(b, \act) 
& =  R(b,\act) + \gamma \sum_{\obs \in \observations}  \max\limits_{\act' \in \actions} \left[ \Pr(\obs \midd b, \act)\Qinner{\BIB}{w_{b,\act,\obs}}{t{-}1}{\act}\right] \\ &\geq  R(b,\act) + \gamma \sum_{\obs \in \observations} \max_{\act' \in \actions}  \left[\Pr(\obs \midd b, \act) \!\min\limits_{w \in \weightset_{b,a,o}}\!\!   \Qinner{}{w}{}{\act'}\right] 
  = H_\OBIB Q(b, \act).
\end{align*}
Applying the result $H_{\OBIB}Q \leq H_{\BIB}Q$ to $\qobib$ shows that $\qobib = H_{\OBIB}\qobib \leq H_{\BIB} \qobib$. By \cref{lem:ineq}, it now follows that $\qobib \leq \qbib$. The proof that  $\qobib \leq \qebib$ is analogous, except that we now use that the entropy-based  weights $\went_{b,a,o}$ satisfy $\went_{b,a,o}\in \weightset_{b,a,o}$ and hence $\Qinner{}{\went_{b,\act,\obs}}{t{-}1}{\act} \geq\min\limits_{w \in \weightset_{b,a,o}}\!\!   \Qinner{}{w}{}{\act'} $,
\end{proof}

\subsection{Entropy-Based Tighter Informed Bound}

Now we turn to \EBIB. We again start by showing that the fixed point $\qebib$ exists and is unique.

\begin{theorem} 
$H_\EBIB$ is a contraction operator w.r.t.\ the  $\infty$-norm with Lipschitz constant $\gamma < 1$, and $H_\EBIB$ has a unique fixed point $\qebib$. 
\end{theorem}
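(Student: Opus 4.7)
The plan is to mirror the structure of the preceding proofs for $H_\FIB$, $H_\BIB$, and $H_\OBIB$, since $H_\EBIB$ differs from $H_\OBIB$ only in that the minimization over $\weightset_{b,a,o}$ is replaced by a fixed weight $\went_{b,\act,\obs}$. That is, the plan is to fix arbitrary $Q, Q' \in \setQ$, pick $(b,a) \in \distr{\states}\times\actions$, and bound $|H_\EBIB Q(b,a) - H_\EBIB Q'(b,a)|$ by $\gamma\|Q - Q'\|_\infty$.

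First, using the definition in \cref{eq:EBIB}, I would pull the $\gamma$ outside, apply the triangle inequality to move the absolute value inside the sum over $\obs \in \observations$, and then apply \cref{lem:contraction} to interchange the absolute value with the maximum over $\act' \in \actions$. Because the weights $\went_{b,\act,\obs}$ do not depend on $Q$ or $Q'$ (unlike for $\OBIB$, where we needed a second application of \cref{lem:contraction} to push past the minimization), I would only need \cref{lem:contraction} once here, which is actually simpler than the $\OBIB$ calculation. At this point the integrand becomes $\Pr(\obs \midd b,\act)\bigl|\Qinner{}{\went_{b,\act,\obs}}{}{\act'} - \Qinner{'}{\went_{b,\act,\obs}}{}{\act'}\bigr|$, which by \cref{eq:PSA} equals $\Pr(\obs \midd b,\act)\bigl|\sum_{b'\in\Bsao}\went_{b,\act,\obs}(b')(Q(b',a') - Q'(b',a'))\bigr|$.

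Next, I would apply the triangle inequality to the inner sum, upper bound each $|Q(b',a') - Q'(b',a')|$ by $\|Q - Q'\|_\infty$, pull the constant out, and use \cref{lem:weightfun} (applicable since $\went_{b,\act,\obs}\in\weightset_{b,a,o}$ by the feasibility remark following \cref{eq:w_ent}) to conclude $\sum_{b'\in\Bsao}\went_{b,\act,\obs}(b') = 1$. This leaves $\gamma\|Q - Q'\|_\infty \sum_{\obs\in\observations}\Pr(\obs\midd b,\act) = \gamma\|Q - Q'\|_\infty$. Taking the maximum over $(b,a) \in \distr{\states}\times\actions$ then yields $\|H_\EBIB Q - H_\EBIB Q'\|_\infty \leq \gamma\|Q - Q'\|_\infty$, establishing the contraction property with Lipschitz constant $\gamma < 1$.

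Finally, for the existence and uniqueness of the fixed point, I would observe that $H_\EBIB Q(b,a)$ depends only on the values $Q(b', a')$ for $b' \in \Bsao$, since the weight function $\went_{b,\act,\obs}$ is supported on $\Bsao$ by construction. Thus \cref{lem:banach}, applied with $\mathcal{B} = \Bsao$, immediately gives a unique fixed point $\qebib$. The main (mild) obstacle is just carefully checking that $\went_{b,\act,\obs}$ is well-defined and lies in $\weightset_{b,a,o}$ so that \cref{lem:weightfun} applies; this is already noted after \cref{eq:w_ent}, so no new work is needed.
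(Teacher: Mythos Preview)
Your proposal is correct and follows essentially the same approach as the paper: fix $Q,Q'$ and $(b,a)$, apply the triangle inequality over $\obs$, use \cref{lem:contraction} to pass the absolute value through the $\max_{a'}$, expand via \cref{eq:PSA}, apply the triangle inequality and the $\infty$-norm bound, invoke \cref{lem:weightfun} (using $\went_{b,\act,\obs}\in\weightset_{b,a,o}$), and finish with \cref{lem:banach}. Your remark that only one application of \cref{lem:contraction} is needed here (since the weights are fixed, unlike in the \OBIB{} case) is in fact more precise than the paper's explanatory sentence, which appears to be copy-pasted from the \OBIB{} proof.
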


\begin{proof}
Let $Q, Q' \colon \distr{\states} \times \actions \rightarrow \mathbb{R}$ be any $Q$-functions. Fix $b \in \distr{\states}$ and $a \in \actions$. Then we have
\begin{align*}
    | H_\EBIB  Q(b,a) - H_\EBIB Q'(b,a) |  
     &= \left| \gamma \!\sum_{\obs \in \observations} \max_{\act' \in \actions} [\Pr(\obs \midd b, \act) \Qinner{}{\went_{b,\act,\obs}}{t{-}1}{\act'}] - \gamma \!\sum_{\obs \in \observations} \max_{\act' \in \actions} [\Pr(\obs \midd b, \act) \Qinner{}{\went_{b,\act,\obs}}{t{-}1}{\act'}]\right| \\
    &\leq  \gamma \sum_{\obs \in \observations}  \left|  \max_{\act' \in \actions} [\Pr(\obs \midd b, \act) \Qinner{}{\went_{b,\act,\obs}}{t{-}1}{\act'}] - \max_{\act' \in \actions} [\Pr(\obs \midd b, \act) \Qinner{}{\went_{b,\act,\obs}}{t{-}1}{\act'}] \right|  \\
    &\leq  \gamma \sum_{\obs \in \observations} \max\limits_{\act' \in \actions}  \Pr(\obs \midd b, \act)  \left|  \Qinner{}{\went_{b,\act,\obs}}{t{-}1}{\act'}] - \Qinner{}{\went_{b,\act,\obs}}{t{-}1}{\act'} \right|  \\[-6pt]
\displaybreak
    &=  \gamma \sum_{\obs \in \observations} \max\limits_{\act' \in \actions}  \Pr(\obs \midd b, \act)   \Bigg| \sum_{b' \in \Bsao} \went_{b,\act,\obs}(b') \left(Q(b', a') - Q'(b',a')\right)\Bigg|  \\
    &\leq  \gamma \sum_{\obs \in \observations} \max\limits_{\act' \in \actions}  \Pr(\obs \midd b, \act) \sum_{b' \in \Bsao}    \went_{b,\act,\obs}(b') \left|  \left(Q(b', a') - Q'(b',a')\right)\right|  \\
    &\leq  \gamma \sum_{\obs \in \observations} \max\limits_{\act' \in \actions}  \Pr(\obs \midd b, \act) \sum_{b' \in \Bsao}   \went_{b,\act,\obs}(b')  \left\|   Q- Q' \right\|_{\infty}   \\
    &= \gamma \left\|   Q- Q' \right\|_{\infty} \sum_{\obs \in \observations}\Pr(\obs \midd b, \act) \sum_{b' \in \Bsao}   \went_{b,\act,\obs}(b') \\
    &= \gamma \left\|   Q- Q' \right\|_{\infty} \sum_{\obs \in \observations}  \Pr(\obs \midd b, \act) = \gamma \left\|   Q- Q' \right\|_{\infty}.
\end{align*}
The first three inequalities follow from the triangle inequality, and applying \cref{lem:contraction} twice. Next, we use \cref{eq:PSA},  the triangle inequality, the definition of the $\infty$-norm, and \cref{lem:weightfun}. 

By the definition of the $\infty$-norm, it follows that
\[
\left\|   H_\EBIB Q - H_\EBIB Q' \right\|_{\infty} \leq \max_{(b,a) \in \distr{\states} \times \actions} | H_\EBIB Q(b,a) - H_\EBIB Q'(b,a) | \leq \gamma \left\|   Q- Q' \right\|_{\infty},
\]
which shows that  $H_\EBIB$ is a contraction operator with Lipschitz constant $\gamma$. Since $H_\EBIB$ can be computed using only the finite set of one-step beliefs, \cref{lem:banach} now implies that $H_\EBIB$ has a unique fixed point $\qebib$.
\end{proof}

\noindent The next theorem shows that \EBIB is a sound upper bound.

\begin{theorem}
For any belief $b \in \distr{\states}$ and any $\act \in \actions$, it holds that $\qebib(b, \act) \geq \qpomdp(b,\act)$.
\end{theorem}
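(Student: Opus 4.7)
The plan is to piggyback on the machinery already built for \OBIB, reducing this claim to two results established earlier in the appendix. By \cref{thm:obib_sound} we already have $\qpomdp(b,\act) \leq \qobib(b,\act)$ for all $b,\act$, and by \cref{thm:obib_bib} we have $\qobib(b,\act) \leq \qebib(b,\act)$. Chaining these two inequalities immediately yields the desired bound $\qpomdp(b,\act) \leq \qebib(b,\act)$. The only thing to check is that nothing in the statement of \cref{thm:obib_bib} was conditional on using specifically the minimizing weight versus the entropy weight in a way that would not apply here; reading its proof, the argument is exactly that $\went_{b,a,o}$ is a feasible element of $\weightset_{b,a,o}$, so this route is clean.

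As an alternative (and in the spirit of the other soundness proofs in the appendix), I would also propose a direct argument that parallels the proof of \cref{thm:obib_sound}. The steps would be: (i) fix any convex $Q$-function and observe that, since $\went_{b,\act,\obs} \in \weightset_{b,\act,\obs}$ by construction, convexity of $Q$ gives $Q(\bel_{b,\act,\obs},\act') \leq \Qinner{}{\went_{b,\act,\obs}}{}{\act'}$; (ii) multiply by $\Pr(\obs \midd b,\act)$, take the max over $\act'$, sum over $\obs$, add $R(b,\act)$ and scale by $\gamma$ to conclude $H_\POMDP Q(b,\act) \leq H_\EBIB Q(b,\act)$; (iii) specialize to $Q = \qpomdp$, which is convex, to obtain $\qpomdp = H_\POMDP \qpomdp \leq H_\EBIB \qpomdp$; (iv) invoke \cref{lem:ineq} with $H_\mathrm{X} = H_\POMDP$ and $H_\mathrm{Y} = H_\EBIB$ (both are monotone contractions, and $H_\EBIB$ depends only on $Q$-values on the finite point set $\Bsao$) to conclude $\qpomdp \leq \qebib$ pointwise.

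The main obstacle in the direct route is the preservation of the weight functions across the fixed-point iteration: the entropy weights $\went_{b,a,o}$ are chosen once based on $\weightset_{b,a,o}$ and then reused, so it is essential that feasibility, not optimality, is what drives the bound in step~(i). This is exactly where the direct proof is slightly simpler than the \OBIB analogue, since no minimization over $w$ needs to be carried through. Given that both approaches work and the shortcut via \cref{thm:obib_sound,thm:obib_bib} requires no new computation, I would present the short chained argument as the main proof and only sketch the direct derivation for completeness.
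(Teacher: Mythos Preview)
Your proposal is correct and matches the paper's own proof exactly: the paper simply chains \cref{thm:obib_sound} and \cref{thm:obib_bib} to get $\qebib(b,\act) \geq \qobib(b,\act) \geq \qpomdp(b,\act)$. Your additional direct argument is sound and a nice supplement, but not needed.
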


\begin{proof}
By \cref{thm:obib_sound} and \cref{thm:obib_bib}, we have that $\qebib(b, \act) \geq \qobib(b,\act) \geq \qpomdp(b,\act)$.
\end{proof}

\noindent Finally, we show that \EBIB is tighter than \OBIB.

\begin{theorem} 
For any belief $b \in \distr{\states}$ and any $\act \in \actions$, it holds that $\qebib(b, \act) \leq \qfib(b,\act)$.
\end{theorem}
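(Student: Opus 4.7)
The plan is to follow the same blueprint used for the analogous TIB/FIB comparison (and essentially dualize the argument used for OBIB/BIB): first show that $H_\EBIB Q \leq H_\FIB Q$ whenever $Q$ is convex in $b$ for each fixed $a$, then instantiate this with $Q = \qfib$ (which is convex by \cref{lem:fibconvex}) to obtain $H_\EBIB \qfib \leq H_\FIB \qfib = \qfib$, and finally invoke \cref{lem:ineq} to conclude $\qebib \leq \qfib$.

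The only nontrivial content is the pointwise comparison $H_\EBIB Q \leq H_\FIB Q$ for convex $Q$. I would rewrite $H_\FIB$ in the same point-set form used for $H_\EBIB$: using $\Pr(s',o \mid b,a) = \Pr(o \mid b,a) \, \bel_{b,a,o}(s')$, we get
\[
H_\FIB Q(b,a) = R(b,a) + \gamma \sum_{o \in \observations} \max_{a' \in \actions} \Pr(o \mid b,a) \sum_{s' \in \states} \bel_{b,a,o}(s') Q(\bel_{s'}, a'),
\]
so that after factoring out $\Pr(o \mid b,a)$ in both operators, it suffices to show, for each $(b,a,o,a')$,
\[
\sum_{b' \in \Bsao} \went_{b,a,o}(b') \, Q(b',a') \;\leq\; \sum_{s' \in \states} \bel_{b,a,o}(s') \, Q(\bel_{s'}, a').
\]

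The key step is that every $b' \in \Bsao$ is itself a probability distribution over $\states$ and hence $b' = \sum_{s'} b'(s') \bel_{s'}$. Applying convexity of $Q$ in $b$ yields $Q(b',a') \leq \sum_{s'} b'(s') Q(\bel_{s'}, a')$. Substituting and swapping the order of summation gives
\[
\sum_{b' \in \Bsao} \went_{b,a,o}(b') Q(b',a') \leq \sum_{s' \in \states} Q(\bel_{s'}, a') \sum_{b' \in \Bsao} \went_{b,a,o}(b') b'(s') = \sum_{s' \in \states} \bel_{b,a,o}(s') Q(\bel_{s'}, a'),
\]
where the final equality uses the defining property of $\went_{b,a,o} \in \weightset_{b,a,o}$, namely that it represents $\bel_{b,a,o}$ as a convex combination of points in $\Bsao$. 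Summing over $o$ (with weights $\Pr(o \mid b,a)$) and taking the maximum over $a'$ preserves the inequality, so $H_\EBIB Q \leq H_\FIB Q$.

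I do not anticipate a real obstacle here; the argument is essentially a ``one-level'' convexity bound identical in spirit to the TIB $\leq$ FIB proof, with the only subtlety being the bookkeeping to notice that $\went_{b,a,o}$ is a weight function over $\Bsao$ (not over $\Bs$), so convexity of $\qfib$ must be used to break each $b' \in \Bsao$ down to unit beliefs before the weight-function identity can be applied. Once this is set up, applying \cref{lem:ineq} to $H_\EBIB \qfib \leq \qfib$ closes the proof; note that \cref{lem:ineq} is applicable because $H_\EBIB$ is monotone and can be evaluated using only the $Q$-values on the finite set $\Bsao$.
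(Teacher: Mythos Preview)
Your proposal is correct and follows essentially the same argument as the paper's proof: use convexity of $Q$ to bound each $Q(b',a')$ by $\sum_{s'} b'(s') Q(\bel_{s'},a')$, collapse the weight-function identity $\sum_{b'} \went_{b,a,o}(b') b'(s') = \bel_{b,a,o}(s')$ to obtain $H_\EBIB Q \leq H_\FIB Q$, instantiate with the convex $\qfib$, and conclude via \cref{lem:ineq}. The only cosmetic difference is that you first rewrite $H_\FIB$ in terms of $\bel_{b,a,o}$ before comparing, whereas the paper carries the factor $\Pr(s',o\mid b,a)/\Pr(o\mid b,a)$ through the computation; the content is identical.
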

\begin{proof}
Let $Q$ be any convex $Q$-function. We first show that $H_{\EBIB}Q \leq H_{\FIB}Q$.
 We can write any belief $b'$ as a convex combination of state beliefs:
$
b' = \sum_{s' \in \states} \left[b'(s')\bel_{s'}\right].$
Hence, the convexity of $Q$ implies
\[
Q(b', \act') \leq \sum_{s' \in \states} \left[ b'(s') Q(\bel_{s'}, \act')\right].
\]
Using this, we find
\begin{align*}
\Qinner{}{\went_{b,\act,\obs}}{t{-}1}{\act'} 
&= \sum_{b \in \Bsao} \left[ \went_{b,\act,\obs}(b')Q(b', a') \right] \\
&= \sum_{b \in \Bsao} \left[ \went_{b,\act,\obs}(b') \sum_{s' \in \states} \left[ b'(s') Q(\bel_{s'}, \act')\right]\right] \\
&= \sum_{b \in \Bsao} \sum_{s' \in \states} \left[ \went_{b,\act,\obs}(b') b'(s') Q(\bel_{s'}, \act')\right] \\
&=  \sum_{s' \in \states}  \left[  \sum_{b \in \Bsao} \left[ \went_{b,\act,\obs}(b') b'(s') \right] Q(\bel_{s'}, \act')\right] \\
&=  \sum_{s' \in \states}  \left[ \bel_{b,a,o}(s')  Q(\bel_{s'}, \act')\right] \\ 
&=  \sum_{s' \in \states}  \left[ \frac{\Pr(s', \obs \midd b, \act)}{\Pr(o \midd b,\act)} Q(\bel_{s'}, \act')\right]. 
\end{align*}

Hence, we get
\begin{align*}
H_\EBIB Q(b, \act) &= R(b,\act) + \gamma \sum_{\obs \in \observations}  \max\limits_{\act' \in \actions}  \left[ \Pr(o \midd b,a) \Qinner{}{\went_{b,\act,\obs}}{t{-}1}{\act'}  \right] \\
 & \leq R(b,\act) + \gamma \sum_{\obs \in \observations}  \max\limits_{\act' \in \actions} \sum_{s' \in \states}  \left[ \Pr(s', \obs \midd b, \act)  Q(\bel_{s'}, \act')\right] = H_\FIB Q(b, \act).
\end{align*}
Since $\qfib$ is convex (by \cref{lem:bibconvex}), applying the result $H_{\EBIB}Q \leq H_{\FIB}Q$ to $\qfib$ shows that $H_\EBIB \qfib \leq H_\FIB \qfib = \qfib$. By \cref{lem:ineq}, it now follows that $\qebib \leq \qfib$, i.e.\ that $\qebib(b, \act) \leq \qfib(b,\act)$ for all $b \in \distr{\states}$ and $\act \in \actions$.
\end{proof}

\subsection{Value Iteration}

\noindent We approximate the fixed points of our Bellman operators using value iteration. The following result shows a termination criterion for the value iteration, by showing how close the actual solution and the final iteration are based on the distance between the last two iterations.

\begin{lemma}
Let $H$ be any contraction operator with Lipschitz constant $\gamma$, and let $Q^*$ be its fixed point. Then $\|Q^* - HQ\|_{\infty}  \leq \frac{\gamma}{1-\gamma}\|HQ - Q\|_{\infty}$.
\end{lemma}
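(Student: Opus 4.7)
The plan is to bound $\|Q^* - HQ\|_\infty$ by passing to the full sequence of iterates and exploiting the geometric contraction. Define $Q_0 = Q$ and $Q_{n+1} = H Q_n$, so that $Q_1 = HQ$ and $Q_n = H^n Q$. By Banach's fixed point theorem (\cref{thm:banach}), which is already used elsewhere in the paper, we have $Q_n \to Q^*$ as $n \to \infty$ in the $\infty$-norm.

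First I would observe that the contraction property iterated $n{-}1$ times gives
\[
\|Q_{n+1} - Q_n\|_\infty = \|H Q_n - H Q_{n-1}\|_\infty \leq \gamma \|Q_n - Q_{n-1}\|_\infty \leq \cdots \leq \gamma^{n-1} \|Q_2 - Q_1\|_\infty \leq \gamma^n \|HQ - Q\|_\infty,
\]
where the last inequality uses once more that $\|Q_2 - Q_1\|_\infty = \|H(HQ) - HQ\|_\infty \leq \gamma \|HQ - Q\|_\infty$.

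Next, I would telescope and use the triangle inequality: for any $m \geq 2$,
\[
\|Q_m - Q_1\|_\infty = \Bigl\| \sum_{n=1}^{m-1} (Q_{n+1} - Q_n)\Bigr\|_\infty \leq \sum_{n=1}^{m-1} \gamma^n \|HQ - Q\|_\infty \leq \frac{\gamma}{1-\gamma} \|HQ - Q\|_\infty,
\]
using $\sum_{n=1}^{\infty} \gamma^n = \frac{\gamma}{1-\gamma}$ (valid since $\gamma < 1$).

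Finally, I would let $m \to \infty$. Since $Q_m \to Q^*$ in the $\infty$-norm and the norm is continuous, $\|Q_m - Q_1\|_\infty \to \|Q^* - HQ\|_\infty$, which yields the desired inequality $\|Q^* - HQ\|_\infty \leq \frac{\gamma}{1-\gamma}\|HQ - Q\|_\infty$. The only subtle point, which is not really an obstacle given the setup of the paper, is the appeal to Banach's fixed point theorem for convergence of the iterates; in our applications the contraction acts on a complete (in fact finite-dimensional) space, so this is immediate from \cref{lem:banach}.
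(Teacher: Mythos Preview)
Your argument is correct, but it takes a different route from the paper's proof. You build the sequence of iterates $Q_n = H^n Q$, bound successive differences by $\gamma^n\|HQ-Q\|_\infty$, telescope, sum the geometric series, and pass to the limit using the convergence part of Banach's theorem. The paper instead argues in two lines without ever introducing the iterate sequence: from the triangle inequality $\|Q^*-Q\|_\infty \le \|Q^*-HQ\|_\infty + \|HQ-Q\|_\infty$ and the observation $\|Q^*-HQ\|_\infty = \|HQ^*-HQ\|_\infty \le \gamma\|Q^*-Q\|_\infty$, one solves algebraically for $\|Q^*-Q\|_\infty \le \tfrac{1}{1-\gamma}\|HQ-Q\|_\infty$, and then applies the contraction once more to get $\|Q^*-HQ\|_\infty \le \gamma\|Q^*-Q\|_\infty \le \tfrac{\gamma}{1-\gamma}\|HQ-Q\|_\infty$. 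The paper's version is shorter and, notably, uses only the \emph{existence} of the fixed point rather than the convergence of iterates; your version is the classical a~posteriori error estimate derivation and works just as well, at the cost of invoking the limit statement in Banach's theorem (which, as you note, is unproblematic here).
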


\begin{proof}
Let $H$ be any contraction operator with Lipschitz constant $\gamma$, and let $Q^*$ be its fixed point. Then 
\[\|Q^* - Q\|_{\infty} \leq \|Q^* - HQ\|_{\infty} + \|HQ - Q\|_{\infty} = \|HQ^* - HQ\|_{\infty} + \|HQ - Q\|_{\infty} \leq \gamma\|Q^* - Q\|_{\infty} + \|HQ - Q\|_{\infty},\]
using the triangle inequality, the fact that $Q^*$ is a fixed point of $H$ and the fact that $H$ is a contraction operator with Lipschitz constant $\gamma$. This implies that $\|Q^* - Q\|_{\infty} \leq \frac1{1-\gamma}\|HQ - Q\|_{\infty}$, and hence that $\|Q^* - HQ\|_{\infty} \leq \gamma\|Q^* - Q\|_{\infty}  \leq \frac{\gamma}{1-\gamma}\|HQ - Q\|_{\infty}$.
\end{proof}

\section{Analyzing increased delays}
\label{ap:3stepdelay}

In this section, we give an intuitive analysis of the effect of considering longer observation delays.
We start with a concrete variant of \BIB where we assume an agent observes its state after 3 time steps instead of 2, which we denote as \BIBThree.
We first introduce some notation: similarly to $\bel_{s,a,o}$, we denote $\bel_{s,a,o,a',o'} = \bel_{\bel_{s,a,o},a',o'}$ as a \emph{two-step belief}, i.e. the belief after taking actions $a$ and $a'$ from state $s$ and observing $o$ and $o'$.
Furthermore, let $\Bbao$ denote the set of all such beliefs plus the initial belief, with $|\Bbao| \in \bigO(|\states||\actions|^2|\observations|^2)$.
Then, we can write the Bellman operator for \BIBThree as follows:
\begin{equation}
    H_\BIBThree Q(b,a) = R(b,a) + \gamma \sum_{\obs \in \observations} \max_{\act' \in \actions} \sum_{\obs' \in \observations} \max_{\act'' \in \actions} \Bigg[
        \sum_{s \in \states} b(s) \Pr(\obs \midd s,\act) \bigg[ 
            \rewards(\bel_{s,\act,\obs}, \act') + \gamma \Pr(\obs' \midd \bel_{s,\act,\obs}, \act') Q(\bel_{s,\act,\obs,\act',\obs'}, \act'')
        \bigg] 
    \Bigg]
\end{equation}
\noindent We omit a proof that this operator has a unique fixed point, but this can be proven analogously to the other bounds.
In the equation, the actions $a'$ and $a''$ are chosen independently of the state $s$, as opposed to only an action $a'$ for \BIB.
Thus, we increased the observation delay from 2 time steps to 3.

Assuming all probability calculations can be computed in $O(1)$ amortized time (e.g. they are cached), then the computational complexity of a single operation is $\bigO(|\Bbao|) \in \bigO(|\states||\actions|^2|\observations|^2)$, which is a factor $\bigO(|\actions||\observations|)$ larger than for \BIB.
Furthermore, let us assume we approximate this bound using value iterations, similarly to \cref{alg:BIB}.
Then, a single value iteration step requires iterating over all beliefs in $\Bbao$ instead of $\Bsao$.
Thus, the total complexity for precomputations is $\bigO(|\Bbao|^2|\actions|h) \in \bigO(|\states|^2|\actions|^5|\observations|^4h)$, a factor $\bigO(|\actions|^2|\observations|^2)$ larger than for \BIB.

It is easy to see that this pattern extends to larger delays, i.e. given a delay $n\geq2$, the complexity for precomputations is given as $\bigO(|\states|^2|\actions|^{1+2(n-1)}|\observations|^{2(n-1)}h)$.
Since beliefs in $\Bbao$ may be identical, computational costs may be lower in practice, but such analysis falls outside the scope of this work.
\end{document}